\newtheorem{theorem}{Theorem}
\newtheorem{lemma}[theorem]{Lemma}
\newtheorem{proposition}[theorem]{Proposition}
\theoremstyle{definition}
\newtheorem{definition}{Definition}
\newcommand{\nbb}{\mathbb{N}}
\newcommand{\fcal}{\mathcal{F}}
\newcommand{\xcal}{\mathcal{X}}
\newcommand{\wcal}{\mathcal{W}}
\newcommand{\ccal}{\mathcal{C}}
\newcommand{\zcal}{\mathcal{Z}}
\newcommand{\sgn}{\mbox{sgn}}
\newcommand{\ycal}{\mathcal{Y}}
\newcommand{\ebb}{\mathbb{E}}
\newcommand{\rbb}{\mathbb{R}}
\newcommand{\inn}[1]{\langle#1\rangle}
\numberwithin{equation}{section}
\title{Convergence of Online Mirror Descent\footnote{Published in Applied and Computational Harmonic Analysis, 2020} }
\author{Yunwen Lei\thanks{Y. Lei is with Department of Mathematics, City University of Hong Kong, Kowloon, Hong Kong, China (e-mail: yunwelei@cityu.edu.hk).},\;
        and
        Ding-Xuan Zhou\thanks{D.-X. Zhou is with Department of Mathematics, City University of Hong Kong, Kowloon, Hong Kong, China (e-mail: mazhou@cityu.edu.hk).}\;   
        }
\date{}
\begin{document}
\maketitle
\begin{abstract}

In this paper we consider online mirror descent (OMD) algorithms, a class of scalable online learning algorithms exploiting data geometric structures through mirror maps.
Necessary and sufficient conditions are presented in terms of the step size sequence $\{\eta_t\}_{t}$ for the convergence of an OMD algorithm with respect to the expected
Bregman distance induced by the mirror map. The condition is $\lim_{t\to\infty}\eta_t=0, \sum_{t=1}^{\infty}\eta_t=\infty$ in the case of positive variances.
It is reduced to $\sum_{t=1}^{\infty}\eta_t=\infty$ in the case of zero variances for which the linear convergence may be achieved by taking a constant step size sequence.
A sufficient condition on the almost sure convergence is also given. We establish tight error bounds under mild conditions on the mirror map, the loss function, and the regularizer.
Our results are achieved by some novel analysis on the one-step progress of the OMD algorithm
using smoothness and strong convexity of the mirror map and the loss function.

\medskip
\textbf{Keywords}: Mirror descent, Online learning, Bregman distance, Convergence analysis, Learning theory.
\end{abstract}

\section{Introduction}

Analyzing and processing big data in various applications has raised the need of scalable learning algorithms using geometric structures of data. One approach for scalability in learning theory is stochastic gradient descent and online learning. In this paper we are interested in online mirror descent algorithms, a class of scalable learning algorithms exploiting possible data geometric structures such as sparsity.

Mirror descent is a powerful extension of the classical gradient descent method~\citep{beck2003mirror}
by relaxing the Hilbert space structure and using a mirror map $\Psi:\wcal\to\rbb$ to capture geometric properties of data from a Banach space $\wcal$.
In this paper we consider $\wcal=\rbb^d$ endowed with a norm $\|\cdot\|$ which might be a non-Euclidean norm, allowing us to capture non-Euclidean geometric structures of data from $\rbb^d$.
To introduce the mirror descent and online mirror descent algorithms, we assume that the mirror map $\Psi$ is Fr\'{e}chet differentiable and strongly convex.
The Fr\'{e}chet differentiability means the existence of a bounded linear operator $\nabla\Psi(w): \wcal\to\rbb$ at every $w\in \wcal$ satisfying $\Psi(w+x)-\Psi(w)-\nabla\Psi(w)x =o(\|x\|)$. The strong convexity of
$\Psi$ means the existence of some $\sigma_\Psi>0$ such that
$$
   D_{\Psi}(\tilde{w}, w):=\Psi(\tilde{w})-\Psi(w)-\inn{\tilde{w}-w,\nabla\Psi(w)}\geq\frac{\sigma_\Psi}{2}\|\tilde{w}-w\|^2,\quad\forall \tilde{w}, w\in\wcal,
$$
where $\inn{\tilde{w}-w,\nabla\Psi(w)}$ is the linear operator $\nabla\Psi(w)$ acting on $\tilde{w}-w\in\wcal$. With this number $\sigma_\Psi$, we say $\Psi$ is $\sigma_\Psi$-strongly convex (with respect to the norm $\|\cdot\|$), which we assume throughout the paper.
The quantity $D_{\Psi}(\tilde{w}, w)$ is called the Bregman distance between $\tilde{w}$ and $w$.

Given a differentiable and convex objective function $F:\wcal\to\rbb$, a mirror descent algorithm approximates a minimizer of $F$ by a sequence $\{w_t\}_{t\in \nbb}\subset \wcal$ defined with an initial vector $w_1 \in \wcal$
and the gradient descent method in terms of the gradient $\nabla F$ of $F$ as
\begin{equation}\label{onlineF}
  \nabla\Psi(w_{t+1})=\nabla\Psi(w_t)-\eta_t\nabla F(w_t),\qquad t\in\nbb,
\end{equation}
where $\{\eta_t\}$ is a sequence of positive numbers called the step size sequence.
Here the gradient descent is performed in the dual $(\wcal^* =\rbb^d,\|\cdot\|_*)$ of the primal space $(\wcal, \|\cdot\|)$ since
the map $\nabla\Psi: \wcal\to\wcal^*$ is well-defined, and invertible due to the strong convexity of $\Psi$. Useful instantiations~\citep{duchi2010composite} of the mirror map $\Psi$
include the choice of {\bf $p$-norm divergence} $\Psi =\Psi_p$ with $1<p\leq 2$ defined by $\Psi_p(w)=\frac{1}{2}\|w\|_p^2$ where
$\|\cdot\|_p$ is the $p$-norm defined by $\|w\|_p=\left(\sum_{i=1}^{d}|w(i)|^p\right)^{1/p}$ for $w=(w(1), \ldots, w(d)) \in \rbb^d$.
The mirror descent algorithm with $\Psi=\Psi_2$ recovers the gradient descent algorithm.

In machine learning, the objective function $F$ is often the regularized risk
$F(w)=\ebb_Z [f(w,Z)]$ of the linear function $x\to\inn{w,x}$ induced by the action of $x\in \wcal^*$ on $w\in\wcal$, where $f(w,Z)=\phi(\inn{w,X}, Y)+r(w)$ is the regularized loss function induced
by a loss function $\phi:\rbb\times\rbb\to\rbb_+$ and a convex regularizer $r:\wcal\to\rbb_+$, and $\ebb_Z$ denotes the expectation with respect to the random sample $Z=(X,Y)$
drawn from a Borel probability measure $\rho$ on $\zcal:=\xcal\times\ycal$ with an input space $\xcal \subset \wcal^*$ and an output space $\ycal \subset \rbb$.

In many machine learning applications, training examples $\{z_t=(x_t,y_t)\in\zcal\}_t$ become available in a sequential manner. In such situations,
instead of computing $F(w)$, we use the sample $z_t$ at the $t$-th iteration of the mirror descent to compute the gradient $\nabla_w [f(w_t, z_t)]$ of $f(w, z_t)$ with respect to the variable $w$ at $w_t$. This leads to the
{\bf online mirror descent} (OMD) algorithm which extends the classical online gradient descent algorithm by replacing $\Psi_2$ with a mirror map $\Psi$ to capture data geometric structures beyond Hilbert spaces.
It generates a sequence $\{w_t\}_t \subset \wcal$ with an initial vector $w_1 \in \wcal$ by performing the stochastic mirror descent in the dual space as
\begin{equation}\label{online-mirror-descent}
  \nabla\Psi(w_{t+1})=\nabla\Psi(w_t)-\eta_t\nabla_w [f(w_t, z_t)], \qquad t\in\nbb.
\end{equation}
We always assume that the loss function $\phi$ is convex and differentiable with respect to the first variable (with the partial derivative $\phi'$).
When $\Psi=\Psi_2$ and $r(w)=\lambda\|w\|_2^2$ with $\lambda \geq 0$,
the OMD \eqref{online-mirror-descent} becomes the classical online learning algorithm with the iteration $w_{t+1}=w_t- \eta_t [\phi'(\inn{w_t, x_t}, y_t) x_t + 2\lambda w_t]$
generated by the stochastic gradient descent method in the Hilbert space $\wcal^* = \wcal$. The special choice $\phi(a, y) = \frac{1}{2} (a-y)^2$ of the unregularized least squares loss function with $r =0$
corresponds to the general randomized Kaczmarz algorithm \citep{chen2012almost} given by
\begin{equation}\label{kaczmarz}
  w_{t+1}=w_t - \eta_t[\inn{w_t,x_t} - y_t]x_t, \qquad t\in\nbb.
\end{equation}
It was shown in \citep{lin2015learning} that when $\inf_{w\in\wcal} \ebb_Z \left[\left(Y -\inn{w,X}\right)^2\right]>0$,
the randomized Kaczmarz algorithm (\ref{kaczmarz}) converges if and only if $\lim_{t\to\infty}\eta_t=0$ and $\sum_{t=1}^{\infty}\eta_t=\infty$.

This paper presents {\bf necessary and sufficient conditions} for the convergence of the OMD algorithm \eqref{online-mirror-descent} with respect to the {\bf Bregman distance} $D_\Psi$.
It extends the result in \citep{lin2015learning, SV09} from $\Psi_2$ to a general mirror map $\Psi$ beyond the Hilbert space framework.
Our conditions are stated in terms of the step size sequence $\{\eta_t\}_t$,
under some mild assumptions on the mirror map $\Psi$, the regularized loss function $f$, and the probability measure $\rho$.
Throughout the paper, we assume that the training examples $\{z_t\}_t$ are sampled independently from the probability measure $\rho$ on $\zcal$.

We illustrate our main results to be stated in the next section by presenting an example corresponding to the special choice
of the unregularized least squares loss and a strongly smooth mirror map
or the $p$-norm divergence $\Psi_p$ (which, as shown in Proposition \ref{prop:p-divergence-nonsmooth}, is not strongly smooth).
Here we say that $\Psi$ is $L_\Psi$-strongly smooth (with respect to the norm $\|\cdot\|$) with $L_\Psi >0$ if $D_{\Psi}(\tilde{w}, w) \leq \frac{L_\Psi}{2}\|\tilde{w} -w\|^2$ for any $w,\tilde{w}\in\wcal$.
Examples of strongly smooth mirror maps include $\Psi_2$ and a mirror map $\Psi^{(\epsilon, \lambda)}$ with parameters $\epsilon>0, \lambda>0$
defined in the literature of compressed sensing ~\citep{cai2009linearized} as
$\Psi^{(\epsilon, \lambda)}(w)=\lambda \sum_{i=1}^{d} g_\epsilon(w(i)) +\frac{1}{2}\|w\|_2^2$,
where $g_\epsilon(\xi)=\frac{\xi^2}{2\epsilon}$ for $|\xi|\leq\epsilon$ and $|\xi|-\frac{\epsilon}{2}$ for $|\xi|>\epsilon$.
The mirror map $\Psi_p$ plays an important role in the mirror descent method
and the specific choice with $p = 1 + \frac{1}{\log d}$ gives convergence bounds
with a logarithmic dependence on the dimension $d$, see \citep{duchi2010composite}.
It is strongly convex with $\sigma_{\Psi_p} =p-1$ when the norm of $\wcal$ takes the $p$-norm $\|\cdot\| =\|\cdot\|_p$ (see~\citep{ball1994norm}), and
by the norm equivalence, $\sigma_{\Psi_p}>0$ for other norms.

With the special choice of the unregularized least squares loss $f(w,z)=\frac{1}{2} (y -\inn{w,x})^2$, the OMD algorithm (\ref{online-mirror-descent})
takes a special form
\begin{equation}\label{lsOMD}
  \nabla\Psi(w_{t+1})=\nabla\Psi(w_t) - \eta_t[\inn{w_t,x_t} - y_t]x_t, \qquad t\in\nbb.
\end{equation}
The following result for this example will be proved in Section \ref{sec:proof-incremental-convex}. Denote by $X^\top$ the transpose of $X\in\wcal^*$.

\begin{theorem}\label{thm:least-square}
Assume $\sup_{x\in\xcal}\|x\|_* <\infty$, $\ebb_Z [Y^2] <\infty$, and that the covariance matrix $\ccal_X=\ebb_Z [XX^\top]$ is positive definite.
Consider the OMD algorithm (\ref{lsOMD}) and denote $w_\rho =\ccal_X^{-1}\ebb_Z [XY]$.
Let $\Psi$ be either some $p$-norm divergence $\Psi=\Psi_p$ with $1<p\leq2$ or a strongly smooth mirror map.
\begin{enumerate}[(a)]
    \item Assume $\inf_{w\in\wcal} \ebb_Z \left[\left|Y -\inn{w,X}\right| \|X\|_*\right]>0$. Then $\lim_{t\to\infty}\ebb_{z_1, \ldots, z_{t-1}} [\|w_\rho -  w_t\|^2]=0$ if and only if
    \begin{equation}\label{nece-suff}
      \lim_{t\to\infty}\eta_t=0\quad\text{and}\quad\sum_{t=1}^{\infty}\eta_t=\infty.
    \end{equation}
     Furthermore, if $\Psi$ is strongly smooth and $\lim_{t\to\infty}\eta_t=0$, then there exist some $\tilde{T_1}\in\nbb$ and $\tilde{C} >0$
     such that $\ebb_{z_1,\ldots,z_{T-1}}[\|w_\rho -  w_T\|^2]\geq \tilde{C}T^{-1}$ for $T\geq\tilde{T}_1$. If we take $\eta_t=\frac{4}{(t+1)\sigma}$ for some appropriate $\sigma>0$ (given in the proof), then $\ebb_{z_1,\ldots,z_{T-1}}[\|w_\rho -  w_T\|^2] =O\left(T^{-1}\right)$.
    \item Assume $w_\rho \neq w_1, \ebb_Z \left[\left|Y -\inn{w_\rho, X}\right| \|X\|_*\right]=0$ and for some $\kappa>0$, $\eta_t \leq \frac{\sigma_\Psi}{(2+\kappa)R^2}$.
    Then $\lim_{t\to\infty}\ebb_{z_1, \ldots, z_{t-1}} [\|w_\rho -  w_t\|^2]=0$ if and only if $\sum_{t=1}^{\infty}\eta_t=\infty$.
    Furthermore, if $\Psi$ is strongly smooth and $\eta_t\equiv \eta_1 <\frac{\sigma_\Psi}{2 R^2}$, then there exist $\tilde{c}_1,\tilde{c}_2\in(0,1)$ such that
        \begin{equation}\label{rate-lsquare-novariance}
          \tilde{c}_1^T \|w_\rho -  w_1\|^2 \leq \ebb_{z_1,\ldots,z_{T-1}}[\|w_\rho -  w_T\|^2]\leq \tilde{c}_2^T \|w_\rho -  w_1\|^2, \qquad \forall T\in\nbb.
        \end{equation}
    \item If the step size sequence satisfies
    \begin{equation}\label{ae-sufficient-step-condition}
      \sum_{t=1}^{\infty}\eta_t=\infty\quad\text{and}\quad\sum_{t=1}^{\infty}\eta_t^2<\infty,
    \end{equation}
    then $\{\|w_\rho -  w_t\|^2\}_{t\in\nbb}$ converges to $0$ almost surely.
  \end{enumerate}
\end{theorem}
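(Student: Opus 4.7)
The plan is to reduce every part of the theorem to a one-step Bregman recursion and then plug in classical deterministic lemmas on real sequences and martingale convergence. The starting point is the three-point identity
\[
 D_{\Psi}(w_\rho, w_{t+1}) - D_{\Psi}(w_\rho, w_t)
  = - D_{\Psi}(w_{t+1}, w_t) + \eta_t[\inn{w_t,x_t}-y_t]\inn{w_{t+1}-w_\rho,\, x_t},
\]
which follows directly from \eqref{lsOMD} and the definition of $D_\Psi$. Splitting $w_{t+1}-w_\rho$ into $(w_t-w_\rho)+(w_{t+1}-w_t)$, taking conditional expectation with respect to $z_1,\ldots,z_{t-1}$, and using $\ccal_X w_\rho = \ebb_Z[XY]$ yields, after Young's inequality on the cross term,
\[
 \ebb_{z_t}[D_{\Psi}(w_\rho, w_{t+1})] \le D_\Psi(w_\rho,w_t) - \eta_t\inn{w_t-w_\rho,\ccal_X(w_t-w_\rho)} + \eta_t^2\,\Theta(w_t),
\]
where $\Theta(w_t)$ collects the second moment of the per-sample gradient. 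Since $\ccal_X$ is positive definite, $\inn{w_t-w_\rho,\ccal_X(w_t-w_\rho)}\ge \sigma\|w_t-w_\rho\|^2$ for some $\sigma>0$, and strong convexity of $\Psi$ converts this into a contraction factor $(1-c\eta_t)$ on the Bregman distance. Separating $\Theta(w_t)$ into a part proportional to $\|w_t-w_\rho\|^2$ and a constant variance term $\sigma_\rho^2 := \ebb_Z[(Y-\inn{w_\rho,X})^2\|X\|_*^2]$ produces the master recursion
\[
 \ebb[D_{\Psi}(w_\rho, w_{t+1})]\le (1-c\eta_t)\ebb[D_{\Psi}(w_\rho, w_t)] + C\eta_t^2\,\sigma_\rho^2,
\]
valid once $\eta_t$ is small enough.

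\paragraph{Part (a).}
The sufficiency direction is obtained by iterating the master recursion: standard sequence lemmas (of the Chung/Robbins type) show that $\lim \eta_t=0$ and $\sum\eta_t=\infty$ force $\ebb[D_\Psi(w_\rho,w_t)]\to 0$, from which strong convexity of $\Psi$ gives $\ebb[\|w_\rho-w_t\|^2]\to 0$. For necessity I would argue by contradiction. If $\eta_t \not\to 0$ along a subsequence, the $\eta_t^2\sigma_\rho^2$ term is bounded away from $0$ on that subsequence, preventing convergence because $\sigma_\rho>0$ is implied by the hypothesis $\inf_w\ebb_Z[|Y-\inn{w,X}|\|X\|_*]>0$ (evaluated at $w_\rho$). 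If $\sum\eta_t<\infty$, then iterating the lower-bound version of the recursion (using $D_\Psi(w_{t+1},w_t)\le \eta_t^2\|x_t\|_*^2/(2\sigma_\Psi)$ in the strongly smooth case, or its $p$-norm analogue via the Fenchel dual) keeps $\ebb[\|w_\rho-w_t\|^2]$ bounded away from $0$. The lower bound $\tilde C T^{-1}$ in the strongly smooth case follows by unrolling the recursion
\[
 \ebb[\|w_\rho-w_{T}\|^2] \ge \sigma_\rho^2\,c'\sum_{t=t_0}^{T-1}\eta_t^2 \prod_{s=t+1}^{T-1}(1-c\eta_s)^2,
\]
and choosing $t_0$ large so that all $\eta_s$ are small; a Cauchy--Schwarz type manipulation against $\sum\eta_t=\infty$ then yields $T^{-1}$. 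The matching upper bound $O(T^{-1})$ with $\eta_t=4/((t+1)\sigma)$ is the classical $1/t$ stochastic-approximation rate and follows by induction on the master recursion.

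\paragraph{Parts (b) and (c).}
Under the zero-variance hypothesis $\ebb_Z[|Y-\inn{w_\rho,X}|\|X\|_*]=0$, the noise term $\sigma_\rho^2$ vanishes and the master recursion becomes a pure contraction $\ebb[D_\Psi(w_\rho,w_{t+1})]\le (1-c\eta_t)\ebb[D_\Psi(w_\rho,w_t)]$, so $\sum\eta_t=\infty$ alone suffices; with $\eta_t\equiv\eta_1$ below the stated threshold, the geometric factor gives the two-sided bound \eqref{rate-lsquare-novariance}, where the lower bound comes from the strong smoothness of $\Psi$ (used here to relate $\|w_\rho-w_t\|^2$ to $D_\Psi(w_\rho,w_t)$ from below and to bound the one-step decrease from above). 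For part (c), the master recursion, without any contraction, gives a non-negative process $\{D_\Psi(w_\rho,w_t)\}$ whose conditional increments are dominated by $C\eta_t^2$; by the Robbins--Siegmund almost-sure supermartingale convergence theorem, $\sum\eta_t^2<\infty$ implies that $D_\Psi(w_\rho,w_t)$ converges a.s., and the contraction part together with $\sum\eta_t=\infty$ forces the limit to be $0$.

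\paragraph{Main obstacle.}
The hardest step is handling the $p$-norm divergence $\Psi_p$, because it is not strongly smooth, so one cannot control $D_\Psi(w_{t+1},w_t)$ by $\frac{L_\Psi}{2}\|w_{t+1}-w_t\|^2$. I would instead pass to the Fenchel dual $\Psi^*$, which is strongly smooth with constant $1/\sigma_\Psi$ with respect to $\|\cdot\|_*$, and use the identity $D_\Psi(w_{t+1},w_t)=D_{\Psi^*}(\nabla\Psi(w_t),\nabla\Psi(w_{t+1}))$ to replace the missing smoothness. Getting the ``noise'' term $\eta_t^2\Theta(w_t)$ into the right form for the master recursion, and for the matching lower bound in part (a), through this dual detour---while keeping all constants uniform in $t$---is where the bulk of the technical work will sit.
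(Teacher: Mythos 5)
Your plan — derive a one-step ``master recursion'' on $D_\Psi(w_\rho,w_t)$ from the three-point identity and then invoke classical sequence lemmas — is in the right spirit, and your treatment of the term $D_\Psi(w_{t+1},w_t)$ via the dual (equivalently, strong convexity of $\Psi$ plus Young's inequality on the cross term) matches the paper's use of $D_\Psi(w_t,w_{t+1}) = D_{\Psi^*}(\nabla\Psi(w_{t+1}),\nabla\Psi(w_t))$ together with $\sigma_\Psi^{-1}$-strong smoothness of $\Psi^*$. The paper, by contrast, proves abstract theorems (Theorems~\ref{thm:nece-suff}--\ref{thm:ae-sufficient}) under the structural hypothesis \eqref{convexcontrol} and then specializes; your attempt tries to go direct. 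The direct route hits two substantive gaps.

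The first gap is the contraction step. You write that positive definiteness of $\ccal_X$ gives $-\eta_t\inn{w_t-w_\rho,\ccal_X(w_t-w_\rho)} \le -\sigma\eta_t\|w_t-w_\rho\|^2$ and that ``strong convexity of $\Psi$ converts this into a contraction factor $(1-c\eta_t)$'' on $D_\Psi(w_\rho,w_t)$. That is backwards: strong convexity gives $D_\Psi(w_\rho,w_t)\ge\frac{\sigma_\Psi}{2}\|w_\rho-w_t\|^2$, i.e.\ an upper bound on $\|w_\rho-w_t\|^2$ by $D_\Psi$, whereas to absorb $-\eta_t\|w_t-w_\rho\|^2$ into $-c'\eta_t D_\Psi(w_\rho,w_t)$ you need a lower bound $\|w_t-w_\rho\|^2\ge c''D_\Psi(w_\rho,w_t)$, which is \emph{strong smoothness} of $\Psi$. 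The paper proves (Proposition~\ref{prop:p-divergence-nonsmooth}) that $\Psi_p$ is not strongly smooth for $1<p<2$ and $d>1$, so your linear master recursion simply does not hold in that case. The paper's fix is the nonlinear convex controlling function $\Omega_p$ of Proposition~\ref{prop:p-divergence} (with $\tau_p>1$), which replaces the would-be linear contraction; the sufficiency proof (Proposition~\ref{lem:sufficient}) is then a genuinely different, more delicate argument (a case analysis around a threshold $\gamma$, using Jensen's inequality and convexity of $\Omega$ to linearize $\Omega$ from below on $[\gamma,\infty)$), not a Chung/Robbins contraction lemma. Your ``Main obstacle'' section instead points the Fenchel-dual trick at $D_\Psi(w_{t+1},w_t)$, which is the \emph{easy} term; it is the contraction term that actually breaks for $\Psi_p$.

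The second gap is the necessity of $\eta_t\to 0$ in part (a). You propose to argue that if $\eta_t\not\to 0$ along a subsequence, the $\eta_t^2\sigma_\rho^2$ noise term is bounded away from $0$ and ``prevents convergence''; but your master recursion is a one-sided upper bound, from which non-convergence cannot be deduced, and if you try to replace it by a lower-bound recursion the factor $(1-a\eta_t)$ may be negative when $\eta_t$ is large (the theorem places no uniform bound on $\eta_t$; indeed the paper emphasises removing the restriction $0<\eta_t<2$ present in earlier work). The paper's argument is different: it reads off $\eta_t\|\nabla_w f(w_t,z_t)\|_* = \|\nabla\Psi(w_t)-\nabla\Psi(w_{t+1})\|_*$ directly from the update, bounds $\eta_t\,\ebb_Z\|\nabla_w f(w,Z)\|_*$ below by $\eta_t\sigma$, and shows the right-hand side tends to $0$ in expectation using continuity of $\nabla\Psi$ at $w^*$ together with the incremental condition \eqref{IncrePsi}, which for $\Psi_p$ is verified via the identity $\|\nabla\Psi_p(w)\|_*=\|w\|_p$. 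Neither the continuity/incremental ingredients nor the update-rule identity appear in your proposal, and the theorem statement cannot be reached without them (or some equally rigorous replacement). The rest of your outline (the lower bound $\tilde C/T$ via Cauchy--Schwarz against $\prod(1-a\eta_k)$, the zero-variance geometric two-sided bound, the supermartingale argument for almost sure convergence) is essentially what the paper does and is fine once the two gaps above are repaired.
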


Part (b) of Theorem \ref{thm:least-square} is for the case of zero variances with $y=\inn{w_\rho, x}$ almost surely,
meaning that the sampling process has no noise and the target function (conditional mean) is linear.
It asserts that the OMD algorithm with a strongly smooth mirror map and a constant step size sequence
may converge linearly in this case. Part (a) asserts that for the case of positive variances
(either the sampling process has noise or the target function is nonlinear)
the OMD algorithm with a strongly smooth mirror map can converge of at most order $O(\frac{1}{T})$ which is achievable.
This solves a conjecture raised in~\citep[page 3346]{lin2015learning} that a convergence rate of order $O(T^{-\theta})$ with $1<\theta\leq 2$
is impossible for the randomized Kaczmarz algorithm (with $\Psi=\Psi_2$) in the noisy case.
Theorem \ref{thm:least-square} also characterizes the convergence in expectation
by means of the step size condition $\sum_{t=1}^{\infty}\eta_t=\infty$ for the case of zero variances and the condition
$\lim_{t\to\infty}\eta_t=0$ and $\sum_{t=1}^{\infty}\eta_t=\infty$ for the case of positive variances.

Our analysis is based on a key identity on measuring the one-step progress of the OMD algorithm by excess Bregman distances,
from which lower and upper bounds on the one-step progress are established by using strong smoothness and convexity
of the associated regularized loss functions as well as properties of the mirror map. These lower and upper bounds are then used to build necessary and sufficient conditions, as well as tight convergence rates.

\section{Main Results\label{sec:main-result}}

In this section we state our main results on necessary and sufficient conditions for the convergence of the OMD algorithm \eqref{online-mirror-descent}
to a minimizer $w^* =\arg \min_{w\in\wcal}F(w)$ of the regularized risk $F$ which is assumed to exist throughout the paper.

Our discussion requires some mild assumptions on the mirror map $\Psi$ and the regularized risk $F$.
On the mirror map, for necessary conditions, we shall assume that $\nabla \Psi$ is continuous at $w^*$ and satisfies the following incremental condition at infinity.

\begin{definition}
We say that $\nabla \Psi$ satisfies an incremental condition (of order $1$) at infinity if there exists a constant $C_\Psi>0$ such that
\begin{equation}\label{IncrePsi}
\|\nabla \Psi (w)\|_* \leq C_\Psi (1 +\|w\|), \qquad \forall w \in \wcal.
\end{equation}
\end{definition}

We shall show later that the $p$-norm divergence $\Psi_p$ with $1<p\leq2$ and strongly smooth mirror maps satisfy this mild condition.

For the pair $(\Psi, F)$, we shall also assume the following condition measuring how the convexity of $\Psi$ is controlled by that of $F$ around $w^*$ with a convex function $\Omega$.
Recall that $w^*$ is a minimizer of $F$ on $\wcal$.

\begin{definition}
We say that the convexity of $\Psi$ is controlled by that of $F$ around $w^*$ with a convex function $\Omega: [0, \infty) \to \rbb_+$ satisfying $\Omega (0)=0$ and $\Omega (u)>0$ for $u>0$ if the pair $(\Psi, F)$ satisfies
\begin{equation}\label{convexcontrol}
\inn{w^*-w, \nabla F(w^*)-\nabla F(w)} \geq \Omega \left(D_\Psi(w^*,w)\right), \qquad \forall w\in \wcal.
\end{equation}
\end{definition}

Typical choices of the convex function $\Omega$ include $\Omega (u) = C u^\alpha$ with $\alpha \geq 1$ and $C>0$. In particular, when $F$ is strongly convex and $\Psi$ is strongly smooth,
condition (\ref{convexcontrol}) is satisfied with a linear (convex) function $\Omega (u) = C u $ for some $C>0$. To see this,
we notice from the definition of the Bregman distance that for a Fr\'{e}chet differentiable and convex function $g:\rbb^d\to\rbb$, there holds
\begin{equation}\label{Bregmansum}
D_g (w,\tilde{w}) +D_g (\tilde{w}, w) = \inn{w -\tilde{w}, \nabla g (w) -\nabla g (\tilde{w})}, \qquad \forall w,\tilde{w}\in\wcal.
\end{equation}
So when $F$ is $\sigma_F$-strongly convex with $\sigma_F >0$, we have $\inn{w^*-w, \nabla F(w^*)-\nabla F(w)} \geq \sigma_F \|w^*-w\|^2$.
It follows that (\ref{convexcontrol}) with $\Omega (u) = \frac{2\sigma_F}{L_\Psi} u$ is satisfied when $\Psi$ is $L_\Psi$-strongly smooth.

\subsection{Statements of general results}

Our first main result, Theorem \ref{thm:nece-suff}, states a necessary and sufficient condition for the convergence of the OMD algorithm for the case of positive variances
meaning that $\inf_{w\in\wcal} \ebb_Z \left[\|\nabla_w [f(w, Z)]\|_*\right]>0$.
It also states that in this case, the OMD algorithm cannot achieve convergence rates faster than $O(T^{-1})$ after $T$ iterates, while the rate $O(T^{-1})$ can be achieved
when $\Omega (u) = C u$ in (\ref{convexcontrol}). This theorem is a consequence of Propositions \ref{lem:necessary} and \ref{lem:sufficient} to be presented in Section \ref{sec:generalconds}.

\begin{theorem}\label{thm:nece-suff}
Assume $\inf_{w\in\wcal} \ebb_Z \left[\|\nabla_w [f(w, Z)]\|_*\right]>0$ and that for some constant $L>0$,
$f(\cdot, z)$ is $L$-strongly smooth for almost every $z\in Z$.
Suppose that $\nabla \Psi$ is continuous at $w^*$ and satisfies the incremental condition (\ref{IncrePsi}) at infinity, and
that the pair $(\Psi, F)$ satisfies (\ref{convexcontrol}) around $w^*$ with a convex function $\Omega: [0, \infty) \to \rbb_+$ satisfying $\Omega (0)=0$ and $\Omega (u)>0$ for $u>0$. Then for the OMD algorithm \eqref{online-mirror-descent}, $\lim_{t\to\infty} \ebb_{z_1, \ldots, z_{t-1}} [D_\Psi(w^*, w_t)]=0$ if and only if the step size sequence satisfies \eqref{nece-suff}.
  \begin{enumerate}[(a)]
    \item If $\Psi$ is strongly smooth and $\lim_{t\to\infty}\eta_t=0$, then there exist some constants $t_0\in\nbb$ and $\tilde{C}>0$ such that
  \begin{equation}\label{lower-rate}
  \ebb_{z_1, \ldots, z_{T-1}} [D_\Psi(w^*, w_T)] \geq \frac{\tilde{C}}{T-t_0+1},\qquad\forall T\geq t_0.
  \end{equation}
    \item If there exists an $\sigma_F>0$ such that
    \begin{equation}\label{strong-convexity-assumption}
     \inn{w^*-w, \nabla F(w^*)-\nabla F(w)} \geq \sigma_F D_\Psi(w^*, w), \qquad \forall w\in \wcal.
    \end{equation}
    and the step size sequence takes the form $\eta_t=\frac{4}{(t+1)\sigma_F}$,
    then
    \begin{equation}\label{one-over-T}
    \ebb_{z_1, \ldots, z_{T-1}} [D_\Psi(w^*,w_T)] =O\left(\frac{1}{T}\right).
    \end{equation}
  \end{enumerate}
\end{theorem}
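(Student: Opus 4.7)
The engine of the proof is a single one-step progress identity obtained by applying the three-point Bregman identity to the OMD update. Substituting $\nabla\Psi(w_t) - \nabla\Psi(w_{t+1}) = \eta_t \nabla_w f(w_t, z_t)$ into $D_\Psi(w^*, w_{t+1}) - D_\Psi(w^*, w_t) - D_\Psi(w_t, w_{t+1}) = \inn{w^* - w_t, \nabla\Psi(w_t) - \nabla\Psi(w_{t+1})}$ yields
\begin{equation*}
D_\Psi(w^*, w_{t+1}) - D_\Psi(w^*, w_t) = D_\Psi(w_t, w_{t+1}) + \eta_t \inn{w^* - w_t, \nabla_w f(w_t, z_t)}.
\end{equation*}
Everything in the theorem will be extracted from this identity by sandwiching $D_\Psi(w_t, w_{t+1})$ and the cross term. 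The natural bounds on $D_\Psi(w_t, w_{t+1})$ come from Fenchel duality: writing $D_\Psi(w_t, w_{t+1}) = D_{\Psi^*}(\nabla\Psi(w_{t+1}), \nabla\Psi(w_t))$ and using that $\sigma_\Psi$-strong convexity of $\Psi$ is equivalent to $(1/\sigma_\Psi)$-strong smoothness of $\Psi^*$ yields the upper bound $D_\Psi(w_t, w_{t+1}) \leq \frac{\eta_t^2}{2\sigma_\Psi}\|\nabla_w f(w_t, z_t)\|_*^2$, while $L_\Psi$-strong smoothness of $\Psi$ (part~(a)) gives the matching lower bound $D_\Psi(w_t, w_{t+1}) \geq \frac{\eta_t^2}{2L_\Psi}\|\nabla_w f(w_t, z_t)\|_*^2$.

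For sufficiency of \eqref{nece-suff}, I would take conditional expectations, replace $\nabla_w f(w_t, z_t)$ by $\nabla F(w_t)$ in the cross term by independence, use first-order optimality $\inn{w_t - w^*, \nabla F(w^*)} \geq 0$ with the convexity control \eqref{convexcontrol} to bound the cross term by $-\eta_t \Omega(D_\Psi(w^*, w_t))$, and estimate $\ebb[\|\nabla_w f(w_t, z_t)\|_*^2] \leq c_1 + c_2 D_\Psi(w^*, w_t)$ using $L$-strong smoothness of $f$ together with $\sigma_\Psi$-strong convexity of $\Psi$ (which gives $\|w_t - w^*\|^2 \leq (2/\sigma_\Psi) D_\Psi(w^*, w_t)$). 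After applying Jensen to the convex $\Omega$, this reduces to a deterministic scalar recursion
\begin{equation*}
a_{t+1} \leq (1 + \tilde c_1 \eta_t^2) a_t - \eta_t \Omega(a_t) + \tilde c_2 \eta_t^2,
\end{equation*}
from which a Robbins--Siegmund-style argument forces $a_t \to 0$ under $\eta_t \to 0$ and $\sum_t \eta_t = \infty$. Part~(b) is the special case $\Omega(u) = \sigma_F u$, where the choice $\eta_t = 4/((t+1)\sigma_F)$ makes the recursion telescope in a standard way to yield the $O(1/T)$ rate.

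For necessity I would argue contrapositively: if $\ebb[D_\Psi(w^*, w_t)] \to 0$, then $w_t \to w^*$ in $L^2$ by $\sigma_\Psi$-strong convexity, continuity of $\nabla\Psi$ at $w^*$ yields $\nabla\Psi(w_t) \to \nabla\Psi(w^*)$, and passing to the limit in the telescoped dual update $\nabla\Psi(w_t) = \nabla\Psi(w_1) - \sum_{s < t} \eta_s \nabla_w f(w_s, z_s)$—with the growth \eqref{IncrePsi} keeping the tail under control—contradicts the positive-variance hypothesis unless both $\eta_t \to 0$ and $\sum_t \eta_t = \infty$. For the $C/(T - t_0 + 1)$ lower bound in part~(a), the matching lower bound on $D_\Psi(w_t, w_{t+1})$ combined with $\ebb[\|\nabla_w f(w_t, z_t)\|_*^2] \geq v^2 > 0$ (where $v = \inf_w \ebb_Z[\|\nabla_w f(w, Z)\|_*]$, using Jensen) and the upper bound $|\inn{w^* - w_t, \nabla F(w_t)}| \leq (2L/\sigma_\Psi) D_\Psi(w^*, w_t)$ (from $L$-smoothness of $F$ plus strong convexity of $\Psi$ and $\nabla F(w^*)=0$) produces the reverse recursion
\begin{equation*}
a_{t+1} \geq (1 - \alpha \eta_t) a_t + \beta \eta_t^2,
\end{equation*}
valid for $t \geq t_0$ once $\alpha \eta_t \leq 1/2$; unrolling with $\prod_{s=t+1}^{T-1}(1 - \alpha \eta_s) \geq \exp(-2\alpha \sum_s \eta_s)$ and estimating the resulting convolution against $\eta_t^2$ (splitting naturally on whether $\sum_s \eta_s$ converges, in which case the homogeneous part already delivers a constant lower bound) yields the $1/T$ rate.

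\textbf{Main obstacle.} The delicate step is the necessity direction: turning the limit-passage in the dual-space telescoped update into a clean contradiction with positive variance, and extracting the sharp $\tilde C/(T - t_0 + 1)$ rate from the reverse recursion for an \emph{arbitrary} null step-size sequence, since a bare induction fails when $\eta_t$ is allowed to decay erratically and one must carefully integrate the noise term $\eta_t^2$ against the product $\prod_s(1-\alpha\eta_s)$ rather than argue pointwise.
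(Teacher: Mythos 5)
Your proposal follows the same overall route as the paper: the three-point Bregman identity for the one-step progress, Fenchel duality to sandwich $D_\Psi(w_t,w_{t+1})$ by $\|\nabla_w f(w_t,z_t)\|_*^2$ times $\eta_t^2/(2\sigma_\Psi)$ above and (under strong smoothness of $\Psi$) $\eta_t^2/(2L_\Psi)$ below, a scalar recursion together with Jensen on $\Omega$ for sufficiency, and a reverse recursion unrolled against $\prod_k (1-\alpha\eta_k)$ for the $\tilde C/(T-t_0+1)$ lower rate. Two points, however, deserve a closer look.

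\textbf{The necessity of $\sum_t\eta_t=\infty$.} Your plan to "pass to the limit in the telescoped dual update $\nabla\Psi(w_t)=\nabla\Psi(w_1)-\sum_{s<t}\eta_s\nabla_w f(w_s,z_s)$" does deliver $\eta_t\to 0$: the one-step difference $\nabla\Psi(w_t)-\nabla\Psi(w_{t+1})=\eta_t\nabla_w f(w_t,z_t)$ tends to $0$ in $L^1$ (by continuity of $\nabla\Psi$ at $w^*$ and the incremental condition \eqref{IncrePsi}), while $\ebb_Z[\|\nabla_w f(w_t,Z)\|_*]\geq\inf_w\ebb_Z[\|\nabla_w f(w,Z)\|_*]>0$. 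But the telescope does \emph{not} produce a contradiction out of $\sum_t\eta_t<\infty$: a convergent telescoped sum is perfectly consistent with a summable step-size sequence, so no contradiction with positive variance arises there. What is actually needed — and what the paper does — is the reverse one-step inequality
\begin{equation*}
\ebb_{z_1,\ldots,z_{t}}[D_\Psi(w^*,w_{t+1})]\;\geq\;(1-a\eta_t)\,\ebb_{z_1,\ldots,z_{t-1}}[D_\Psi(w^*,w_t)],\qquad a=\tfrac{2L_F}{\sigma_\Psi},
\end{equation*}
obtained from the key identity, the $L_F$-strong smoothness of $F$, $\nabla F(w^*)=0$, and $\sigma_\Psi$-strong convexity of $\Psi$. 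Combined with $1-\eta\geq e^{-2\eta}$ for small $\eta$ and the (easily checked but not automatic) fact that $\ebb_{z_1,\ldots,z_{t_0}}[D_\Psi(w^*,w_{t_0+1})]>0$, this gives $\ebb[D_\Psi(w^*,w_{T+1})]\geq\exp\big(-2a\sum_{t\leq T}\eta_t\big)\cdot c$ with $c>0$, so convergence forces $\sum_t\eta_t=\infty$. You already introduce precisely this reverse recursion for part~(a); the observation you are missing is that it (in its noiseless form) is also the engine for the $\sum\eta_t=\infty$ half of the necessity, where the vague "telescope" argument does not close.

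\textbf{The sufficiency recursion.} Your proposed bound $\ebb[\|\nabla_w f(w_t,z_t)\|_*^2]\leq c_1+c_2 D_\Psi(w^*,w_t)$ (via Lipschitz gradients and strong convexity of $\Psi$) yields the recursion $a_{t+1}\leq(1+\tilde c_1\eta_t^2)a_t-\eta_t\Omega(a_t)+\tilde c_2\eta_t^2$, whereas the paper instead uses the co-coercivity inequality $\|\nabla_w f(w_t,z)-\nabla_w f(w^*,z)\|_*^2\leq L\inn{w_t-w^*,\nabla_w f(w_t,z)-\nabla_w f(w^*,z)}$ so that this term is absorbed into the cross term $\inn{w^*-w_t,\nabla F(w^*)-\nabla F(w_t)}$, producing the cleaner $a_{t+1}\leq a_t-\tfrac{\eta_t}{2}\Omega(a_t)+b\eta_t^2$ with no multiplicative $(1+O(\eta_t^2))$ factor. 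Your version is salvageable because for $a_t\geq 1$ the convexity of $\Omega$ gives $\Omega(a_t)\geq\Omega(1)a_t$, so the multiplicative blowup is dominated once $\eta_t$ is small, but the analysis becomes noticeably messier (and a generic Robbins--Siegmund argument wants $\sum\eta_t^2<\infty$, which is not assumed here); the paper then still needs a nontrivial two-stage argument (first $\limsup A_t\leq\gamma$ via a recurrence-below-$\gamma$ claim, then an inductive confinement), which your sketch does not yet supply. The co-coercivity step is therefore worth adopting. Part~(b) and the lower-rate argument in part~(a) otherwise match the paper.
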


We shall see from the proof of Proposition \ref{lem:necessary} given in Section \ref{sec:generalconds}
that the continuity of $\nabla \Psi$ at $w^*$ and the incremental condition (\ref{IncrePsi}) are only required
for proving $\lim_{t\to\infty} \eta_t =0$ of the necessity, they are not required for the sufficiency or for proving $\sum_{t\to\infty} \eta_t =\infty$ of the necessity.
These conditions are satisfied when $\Psi$ is strongly smooth, as shown in Proposition \ref{prop:least-squares} below.

Our second main result, Theorem \ref{thm:nece-suff-without-variance} to be proved in Section \ref{sec:zerovar}, states a necessary and sufficient condition
for the convergence of the OMD algorithm for the case of zero variances in the sense that $\ebb_Z \left[\|\nabla_w [f(w^*, Z)]\|_*\right] =0$.

\begin{theorem}\label{thm:nece-suff-without-variance}
Assume $\ebb_Z \left[\|\nabla_w [f(w^*, Z)]\|_*\right] =0$ and that for some constant $L>0$,
$f(\cdot, z)$ is $L$-strongly smooth for almost every $z\in Z$. Suppose that the pair $(\Psi, F)$ satisfies (\ref{convexcontrol}) around $w^*$ with a convex function $\Omega: [0, \infty) \to \rbb_+$
satisfying $\Omega (0)=0$ and $\Omega (u)>0$ for $u>0$. Assume also $w_1\neq w^*$ and that for some $\kappa>0$, $\eta_t \leq \frac{\sigma_\Psi}{(2+\kappa)L}$
for every $t\in\nbb$.

Then $\lim_{t\to\infty}\ebb_{z_1, \ldots, z_{t-1}} [D_\Psi(w^*, w_t)]=0$ if and only if $\sum_{t=1}^{\infty}\eta_t=\infty$.
Furthermore, if \eqref{strong-convexity-assumption} holds and $\eta_t \equiv \eta_1 <\frac{\sigma_\Psi}{2 L}$, then
  \begin{equation}\label{linear-rate}
    \left(1-2\sigma_\Psi^{-1}L\eta_1\right)^T D_\Psi(w^*,w_1)\leq \ebb_{z_1,\ldots,z_{T-1}}[D_\Psi(w^*,w_T)]\leq \left(1-2^{-1}\sigma_F\eta_1\right)^T D_\Psi(w^*,w_1).
  \end{equation}
\end{theorem}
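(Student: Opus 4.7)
My plan hinges on a symmetric pair of one-step identities for the Bregman distance, exploiting the fact that the zero-variance hypothesis $\ebb_Z [\|\nabla_w [f(w^*, Z)]\|_*]=0$ is equivalent to $\nabla_w f(w^*, z)=0$ for almost every $z$. This immediately gives $\nabla F(w^*)=0$ and promotes the $L$-strong smoothness of $f(\cdot, z)$ to the pointwise gradient bound $\|\nabla_w f(w_t, z_t)\|_* \leq L\|w_t - w^*\|$ and to the expectation-level co-coercivity $\ebb_{z_t}[\|\nabla_w f(w_t, z_t)\|_*^2] \leq 2L\inn{w_t - w^*, \nabla F(w_t)}$. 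Applying the three-point identity for $D_\Psi$ to the update $\nabla\Psi(w_{t+1}) - \nabla\Psi(w_t) = -\eta_t \nabla_w f(w_t, z_t)$ in two different ways yields
\begin{align*}
D_\Psi(w^*, w_{t+1}) &= D_\Psi(w^*, w_t) - D_\Psi(w_{t+1}, w_t) + \eta_t \inn{w^* - w_{t+1}, \nabla_w f(w_t, z_t)}, \\
D_\Psi(w^*, w_{t+1}) &= D_\Psi(w^*, w_t) + D_\Psi(w_t, w_{t+1}) + \eta_t \inn{w^* - w_t, \nabla_w f(w_t, z_t)}.
\end{align*}
The first drives an expectation-level upper bound, hence the sufficiency of $\sum_t \eta_t = \infty$; the second drives a pointwise lower bound, hence the necessity and the linear lower inequality in \eqref{linear-rate}.

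For the upper bound, I split $\inn{w^*-w_{t+1},\cdot}$ as $\inn{w^*-w_t,\cdot} + \inn{w_t - w_{t+1},\cdot}$. Strong convexity of $\Psi$ gives $D_\Psi(w_{t+1}, w_t) \geq \tfrac{\sigma_\Psi}{2}\|w_{t+1}-w_t\|^2$, which Young's inequality uses to absorb the cross term at the cost of $\tfrac{\eta_t^2}{2\sigma_\Psi}\|\nabla_w f(w_t, z_t)\|_*^2$. Taking conditional expectation in $z_t$ and invoking co-coercivity collapses the one-step inequality to
\[
\ebb_{z_t}[D_\Psi(w^*, w_{t+1})] \leq D_\Psi(w^*, w_t) - \eta_t\left(1 - \tfrac{L\eta_t}{\sigma_\Psi}\right)\inn{w_t - w^*, \nabla F(w_t)}.
\]
The step-size bound $\eta_t \leq \sigma_\Psi/((2+\kappa)L)$ forces $1 - L\eta_t/\sigma_\Psi \geq (1+\kappa)/(2+\kappa) > 1/2$, while \eqref{convexcontrol} together with $\nabla F(w^*)=0$ replaces $\inn{w_t - w^*, \nabla F(w_t)}$ by $\Omega(D_\Psi(w^*, w_t))$. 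Writing $a_t = \ebb_{z_1,\ldots,z_{t-1}}[D_\Psi(w^*, w_t)]$ and applying Jensen's inequality (convexity of $\Omega$) yields the monotone recursion $a_{t+1} \leq a_t - c\eta_t \Omega(a_t)$ with $c=(1+\kappa)/(2+\kappa)$. Sufficiency follows in standard fashion: if $a_t\downarrow a_\infty>0$ then $\Omega(a_t)\geq \Omega(a_\infty)>0$ eventually (convex $\Omega$ with $\Omega(0)=0$ is strictly increasing on $(0,\infty)$), and telescoping contradicts $\sum_t \eta_t=\infty$.

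For necessity and the linear lower bound, I drop $D_\Psi(w_t, w_{t+1}) \geq 0$ in the second identity to obtain $D_\Psi(w^*, w_{t+1}) \geq D_\Psi(w^*, w_t) + \eta_t\inn{w^* - w_t, \nabla_w f(w_t, z_t)}$ pointwise. Cauchy--Schwarz with $\|\nabla_w f(w_t, z_t)\|_* \leq L\|w_t - w^*\|$ and $\|w_t - w^*\|^2 \leq 2D_\Psi(w^*, w_t)/\sigma_\Psi$ then yield
\[
D_\Psi(w^*, w_{t+1}) \geq \left(1 - \tfrac{2L\eta_t}{\sigma_\Psi}\right) D_\Psi(w^*, w_t),
\]
with strictly positive factor by the step-size hypothesis. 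Iterating and taking expectation, $\sum_t \eta_t<\infty$ implies $\prod_t(1 - 2L\eta_t/\sigma_\Psi)>0$, and since $w_1\neq w^*$ forces $D_\Psi(w^*, w_1)>0$, $\ebb[D_\Psi(w^*, w_T)]$ stays bounded away from zero, precluding convergence. Under \eqref{strong-convexity-assumption} with constant step $\eta_1<\sigma_\Psi/(2L)$, plugging $\Omega(u)=\sigma_F u$ into the upper recursion gives contraction factor $1 - \sigma_F\eta_1/2$ (using $1-L\eta_1/\sigma_\Psi>1/2$) and the lower recursion gives $1 - 2L\eta_1/\sigma_\Psi$, producing \eqref{linear-rate}. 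The main technical subtlety is that the upper and lower recursions exploit the two forms of the three-point identity in fundamentally asymmetric ways — Young's inequality plus co-coercivity on one side, a direct sign-flipped Cauchy--Schwarz on the other — and the step-size condition must be calibrated so that both contraction factors simultaneously land in $(0,1)$.
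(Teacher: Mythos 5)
Your proof is correct and takes essentially the same approach as the paper: the same Bregman three-point identity, the same monotone one-step recursion $a_{t+1} \le a_t - c\,\eta_t\,\Omega(a_t)$ for sufficiency, and the same multiplicative contraction factor $1 - 2L\sigma_\Psi^{-1}\eta_t$ for necessity and the linear rate. The only variations are cosmetic: you obtain the quadratic term $\tfrac{\eta_t^2}{2\sigma_\Psi}\|\nabla_w f(w_t,z_t)\|_*^2$ by splitting $w^*-w_{t+1}$ and applying Young's inequality with the $\sigma_\Psi$-strong convexity of $\Psi$ (rather than via Lemma~\ref{lem:convexity-smoothness-duality}, Fenchel duality, and the $\sigma_\Psi^{-1}$-strong smoothness of $\Psi^*$), and you run the necessity step pointwise using $\nabla_w f(w^*,z)=0$ a.s.\ together with Cauchy--Schwarz on the stochastic gradient, whereas the paper first passes to $\nabla F$ and uses strong smoothness of $F$; both routes produce the identical expectation-level recursions, so the substance is the same.
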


Our last main result, Theorem \ref{thm:ae-sufficient} to be proved in Section \ref{sec:zerovar},
provides a sufficient condition for the almost sure convergence of the OMD algorithm by imposing a stronger condition with $\sum_{t=1}^\infty \eta_t^2 <\infty$.

\begin{theorem}\label{thm:ae-sufficient}
Assume that for some constant $L>0$, $f(\cdot, z)$ is $L$-strongly smooth for almost every $z\in Z$.
Suppose that the pair $(\Psi, F)$ satisfies (\ref{convexcontrol}) around $w^*$ with a convex function $\Omega: [0, \infty) \to \rbb_+$
satisfying $\Omega (0)=0$ and $\Omega (u)>0$ for $u>0$. If the step size sequence satisfies the condition \eqref{ae-sufficient-step-condition},
then we have $\lim_{t\to\infty}D_\Psi(w^*,w_t)=0$ almost surely.
\end{theorem}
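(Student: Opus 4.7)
The plan is to derive a Robbins--Siegmund-type almost-sure recursion for $D_\Psi(w^*, w_t)$ and then invoke the Robbins--Siegmund supermartingale convergence theorem. Let $\mathcal{F}_t = \sigma(z_1, \ldots, z_{t-1})$, so that $w_t$ is $\mathcal{F}_t$-measurable, $z_t$ is independent of $\mathcal{F}_t$, and $\nabla F(w^*) = 0$ since $w^*$ minimizes $F$ on $\rbb^d$. The three-point identity for Bregman distances applied at $w^*, w_t, w_{t+1}$, combined with the update $\nabla\Psi(w_{t+1}) - \nabla\Psi(w_t) = -\eta_t \nabla_w f(w_t, z_t)$, gives
\begin{equation*}
D_\Psi(w^*, w_{t+1}) = D_\Psi(w^*, w_t) + D_\Psi(w_t, w_{t+1}) - \eta_t \inn{\nabla_w f(w_t, z_t), w_t - w^*}.
\end{equation*}
Strong convexity of $\Psi$ renders its Fenchel conjugate $\sigma_\Psi^{-1}$-smooth with respect to $\|\cdot\|_*$, whence the dual bound $D_\Psi(w_t, w_{t+1}) \leq \frac{\eta_t^2}{2\sigma_\Psi}\|\nabla_w f(w_t, z_t)\|_*^2$. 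Taking $\ebb[\cdot \mid \mathcal{F}_t]$ and using (\ref{convexcontrol}) at $w^*$ with $\nabla F(w^*)=0$, which yields $\inn{\nabla F(w_t), w_t - w^*} \geq \Omega(D_\Psi(w^*, w_t))$, produces
\begin{equation*}
\ebb[D_\Psi(w^*, w_{t+1}) \mid \mathcal{F}_t] \leq D_\Psi(w^*, w_t) - \eta_t \Omega(D_\Psi(w^*, w_t)) + \tfrac{\eta_t^2}{2\sigma_\Psi}\, \ebb_{z_t}[\|\nabla_w f(w_t, z_t)\|_*^2].
\end{equation*}

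To close the recursion I bound the noise term affinely in $D_\Psi(w^*, w_t)$. The standard self-bounding inequality $\|\nabla g(w) - \nabla g(w^*)\|_*^2 \leq 2L[g(w) - g(w^*) - \inn{\nabla g(w^*), w - w^*}]$ for $L$-smooth convex $g$, applied to $g = f(\cdot, z)$ and integrated in $z$ (using the inherited $L$-smoothness of $F$ and $\nabla F(w^*) = 0$), combined with the bound $\|w_t - w^*\|^2 \leq \frac{2}{\sigma_\Psi} D_\Psi(w^*, w_t)$ coming from the strong convexity of $\Psi$, gives $\ebb_{z_t}[\|\nabla_w f(w_t, z_t)\|_*^2] \leq \frac{4L^2}{\sigma_\Psi} D_\Psi(w^*, w_t) + 2\sigma_*^2$ with $\sigma_*^2 := \ebb[\|\nabla_w f(w^*, Z)\|_*^2]$ (finite under the standing integrability assumptions). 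Substituting back yields a recursion of the form
\begin{equation*}
\ebb[D_\Psi(w^*, w_{t+1}) \mid \mathcal{F}_t] \leq (1 + c_1 \eta_t^2)\, D_\Psi(w^*, w_t) - \eta_t\, \Omega(D_\Psi(w^*, w_t)) + c_2 \eta_t^2
\end{equation*}
with deterministic constants $c_1, c_2 > 0$.

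Because $\sum_t \eta_t^2 < \infty$, both $\sum_t c_1 \eta_t^2$ and $\sum_t c_2 \eta_t^2$ are finite, so the Robbins--Siegmund theorem gives that $D_\Psi(w^*, w_t)$ converges almost surely to some $X_\infty \geq 0$ and that $\sum_t \eta_t\, \Omega(D_\Psi(w^*, w_t)) < \infty$ almost surely. A convex function $\Omega$ on $[0, \infty)$ with $\Omega(0) = 0$ and $\Omega(u) > 0$ for $u > 0$ is nondecreasing and continuous on $(0, \infty)$, so on the event $\{X_\infty > 0\}$ one would have $\liminf_t \Omega(D_\Psi(w^*, w_t)) \geq \Omega(X_\infty/2) > 0$, which together with $\sum_t \eta_t = \infty$ contradicts the preceding summability. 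Hence $X_\infty = 0$ almost surely. The main technical obstacle is the noise bound: matching the primal-side $L$-strong smoothness of $f$ against the dual-type quantity $D_\Psi(w^*, w_t)$ that drives the recursion, which is accomplished by using the strong convexity of $\Psi$ as a bridge between $\|\cdot\|^2$ and $D_\Psi$.
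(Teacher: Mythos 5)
Your proof is correct, and it takes a genuinely different route from the paper's. The paper's argument is: starting from the one-step estimate \eqref{localestimate}, simply drop the nonnegative term $\eta_t\inn{w^*-w_t, \nabla F(w^*)-\nabla F(w_t)}$ to obtain the clean inequality $\ebb_{z_t}[D_\Psi(w^*,w_{t+1})] \leq D_\Psi(w^*,w_t) + b\,\eta_t^2$ for $t$ large; then construct the explicit supermartingale $\tilde{X}_t = D_\Psi(w^*,w_{t+1}) + b\sum_{\ell\geq t+1}\eta_\ell^2$ and apply Doob's forward convergence theorem (Lemma \ref{lem:super-martingale}) to get a.s.\ convergence of $D_\Psi(w^*,w_t)$ to some $\tilde{X}\geq 0$; finally identify $\tilde{X}=0$ via Fatou's lemma combined with the already-established convergence in expectation of Proposition \ref{lem:sufficient}. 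Your proof instead keeps the $-\eta_t\Omega(D_\Psi(w^*,w_t))$ drift and closes the noise term affinely in $D_\Psi$ (via the self-bounding inequality, $L$-smoothness of $F$, and the $\sigma_\Psi$-strong convexity of $\Psi$), landing in the Robbins--Siegmund form $\ebb[D_\Psi(w^*,w_{t+1})\mid\fcal_t] \leq (1+c_1\eta_t^2)D_\Psi(w^*,w_t) - \eta_t\Omega(D_\Psi(w^*,w_t)) + c_2\eta_t^2$, and then identifies the a.s.\ limit as $0$ directly from the Robbins--Siegmund conclusion $\sum_t\eta_t\Omega(D_\Psi(w^*,w_t))<\infty$ a.s.\ together with $\sum_t\eta_t=\infty$ and the monotonicity of $\Omega$. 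What your route buys is self-containment: it never uses Proposition \ref{lem:sufficient}, so almost sure convergence is obtained independently of convergence in expectation. What the paper's route buys is that it only needs the more elementary Doob convergence theorem rather than the full Robbins--Siegmund lemma, and its bound via co-coercivity (Lemma \ref{lem:co-coercivity}) involves $L$ rather than your $L^2$; the price is recycling the in-expectation result. Both require the implicit finiteness of $\ebb_Z[\|\nabla_w f(w^*,Z)\|_*^2]$, which the paper also assumes tacitly.
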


\subsection{Results with strongly smooth mirror maps and $p$-norm divergence\label{sec:incremental-convex}}
In this subsection, for two classes of mirror maps $\Psi$ and strongly convex objective functions $F$, we state some results to be proved in Section \ref{sec:proof-incremental-convex} on the continuity of $\nabla \Psi$ at $w^*$ and the incremental condition (\ref{IncrePsi}) at infinity for $\nabla\Psi$, and the convexity condition \eqref{convexcontrol} of $(\Psi,F)$.

The first class of mirror maps are strongly smooth ones.

\begin{proposition}\label{prop:least-squares}
If $\Psi$ is strongly smooth, then $\nabla\Psi$ is continuous everywhere and satisfies the incremental condition \eqref{IncrePsi} at infinity. Furthermore, if $F$ is strongly convex, \eqref{convexcontrol} is satisfied for a linear convex function $\Omega (u) = C_{\Psi, L} u$ with some $C_{\Psi, L}>0$.
\end{proposition}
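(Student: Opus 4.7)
The strategy is to first upgrade the Bregman-distance form of strong smoothness to the more familiar Lipschitz estimate $\|\nabla\Psi(\tilde w)-\nabla\Psi(w)\|_*\le L_\Psi\|\tilde w-w\|$; once this is in hand, both the continuity of $\nabla\Psi$ and the incremental growth \eqref{IncrePsi} are immediate. The convexity control \eqref{convexcontrol} is then obtained by chaining strong convexity of $F$ with strong smoothness of $\Psi$ via the symmetrization identity \eqref{Bregmansum}, essentially along the lines already sketched in the discussion preceding this proposition.

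For the Lipschitz upgrade I would fix arbitrary $w,\tilde w\in\wcal$, set $v:=\nabla\Psi(\tilde w)-\nabla\Psi(w)$, and consider the convex function
$$
h(u):=D_\Psi(u,w)-\inn{u-w,v}=\Psi(u)-\Psi(w)-\inn{u-w,\nabla\Psi(\tilde w)}.
$$
Since $\nabla h(\tilde w)=0$, the point $\tilde w$ is a global minimizer of $h$, and the identity \eqref{Bregmansum} applied to $\Psi$ simplifies its optimal value to $h(\tilde w)=-D_\Psi(w,\tilde w)$. On the other hand, strong smoothness yields the pointwise estimate $h(u)\le\frac{L_\Psi}{2}\|u-w\|^2-\inn{u-w,v}$, and because the dual norm on the finite-dimensional space $\wcal=\rbb^d$ is attained on the unit sphere, the right-hand side has minimum $-\|v\|_*^2/(2L_\Psi)$. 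Comparing the two expressions for $\min_u h(u)$ gives $D_\Psi(w,\tilde w)\ge\|v\|_*^2/(2L_\Psi)$, and a second application of strong smoothness to bound $D_\Psi(w,\tilde w)\le\frac{L_\Psi}{2}\|\tilde w-w\|^2$ delivers the Lipschitz inequality. Continuity of $\nabla\Psi$ is then immediate, and taking $\tilde w=w$, $w=0$ gives $\|\nabla\Psi(w)\|_*\le\|\nabla\Psi(0)\|_*+L_\Psi\|w\|\le C_\Psi(1+\|w\|)$ with $C_\Psi:=\max\{\|\nabla\Psi(0)\|_*,L_\Psi\}$, verifying \eqref{IncrePsi}.

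For the convexity control, I apply \eqref{Bregmansum} to $g=F$ at the points $w^*$ and $w$ and use $\sigma_F$-strong convexity to obtain
$$
\inn{w^*-w,\nabla F(w^*)-\nabla F(w)}=D_F(w^*,w)+D_F(w,w^*)\ge\sigma_F\|w^*-w\|^2.
$$
Combining this with the strong-smoothness bound $\|w^*-w\|^2\ge(2/L_\Psi)D_\Psi(w^*,w)$ yields \eqref{convexcontrol} with the linear function $\Omega(u)=(2\sigma_F/L_\Psi)u$, i.e.\ $C_{\Psi,L}=2\sigma_F/L_\Psi$.

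The main obstacle in this plan is the first step: extracting a Lipschitz-gradient bound from the Bregman-distance definition of strong smoothness, which is nontrivial outside Hilbert space and relies on the finite dimensionality of $\wcal$ to ensure that the dual pairing $\sup_{\|y\|\le 1}\inn{y,v}=\|v\|_*$ is actually attained. Once that is in place, the remaining deductions are brief algebraic manipulations already previewed in the text immediately preceding the proposition.
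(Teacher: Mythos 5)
Your proof is correct and follows the same overall strategy as the paper: first upgrade strong smoothness to the Lipschitz bound $\|\nabla\Psi(\tilde w)-\nabla\Psi(w)\|_*\le L_\Psi\|\tilde w-w\|$, from which continuity and \eqref{IncrePsi} follow immediately, and then chain the $\sigma_F$-strong convexity of $F$ with the $L_\Psi$-strong smoothness of $\Psi$ via \eqref{Bregmansum} to obtain the linear $\Omega$. The only difference is that the paper obtains the Lipschitz bound by invoking Lemma \ref{lem:co-coercivity} (with $\alpha=1$) followed by the Cauchy--Schwarz inequality, whereas you rederive the needed co-coercivity estimate inline via the shifted function $h$ and its Fenchel-dual minimum --- essentially reproducing the proof of that lemma in the special case you require (and, incidentally, the finite-dimensionality you mention is not needed there, since only the infimum $-\|v\|_*^2/(2L_\Psi)$ is used, not its attainment).
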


The second class of mirror maps are the $p$-norm divergence $\Psi=\Psi_p$ with $1<p\leq2$.
For the case $p=2$, we have $\nabla\Psi_2(w)=w$, $D_{\Psi_2}(\tilde{w}, w)=\frac{1}{2}\|w-\tilde{w}\|_2^2$ for $w,\tilde{w}\in\wcal$ and $\Psi_2$ is strongly smooth. So Proposition \ref{prop:least-squares} applies.

\begin{proposition}\label{prop:p-divergence}
Consider the $p$-norm divergence $\Psi=\Psi_p$ with $1<p <2$. Then $\nabla\Psi_p$ is continuous everywhere
and satisfies the incremental condition \eqref{IncrePsi} with $C_{\Psi_p}=1$. Moreover, we have
\begin{equation}\label{nablaPsinorms}
    \|\nabla\Psi_p(w)\|_* =\|w\|_p, \quad \forall w \in \wcal
\end{equation}
and
\begin{equation}\label{PsipBreg}
D_{\Psi_p}(\tilde{w}, w) \leq \left(\left(2 \|\tilde{w}\|_p\right)^{2-p} + \left\|\tilde{w}\right\|_p^{p-1}  + 1\right)
\left(\|\tilde{w}-w\|_p^2+ \|\tilde{w}-w\|_p^{\min\{p, 3-p\}}\right), \quad \forall \tilde{w}, w \in \wcal.
\end{equation}
Denote $\tau_p = \frac{2}{\min\{p, 3-p\}} \in (1, 2]$. For $\tilde{w} \in \wcal$, we have
\begin{equation}\label{Psipcondition}
 \|\tilde{w}-w\|_p^2 \geq B_{p} \Omega_p \left(D_{\Psi_p}(\tilde{w}, w)\right), \qquad \forall w \in \wcal,
\end{equation}
where $\Omega_p: [0, \infty) \to [0, \infty)$ is the convex function depending on $p$ defined by
\begin{equation}\label{Omegap}
\Omega_p \left(u\right) =\left\{\begin{array}{ll} u + \frac{1}{\tau_p} -1, &\hbox{if} \ u\geq 1, \\
\frac{1}{\tau_p} u^{\tau_p}, &\hbox{if} \ 0\leq u< 1, \end{array}\right.
\end{equation}
and $B_{p}$ is the constant depending on $\|\tilde{w}\|_p$ and $p$ given by
$$ B_{p} = \min\left\{\left(2\left(2 \|\tilde{w}\|_p\right)^{2-p} + 2\left\|\tilde{w}\right\|_p^{p-1}  + 2\right)^{-1}, \left(2\left(2 \|\tilde{w}\|_p\right)^{2-p} + 2\left\|\tilde{w}\right\|_p^{p-1}  + 2\right)^{-\tau_p}\right\}. $$
If $F$ is $\sigma_F$-strongly convex with respect to the norm $\|\cdot\|_p$, then the pair $(\Psi_p, F)$
satisfies \eqref{convexcontrol} around $w^*$ with the convex function $\Omega:\rbb_+\to\rbb_+$ given by
$$
\Omega(u)= \sigma_F B_{p} \Omega_p (u), \qquad u\in [0, \infty).
$$
\end{proposition}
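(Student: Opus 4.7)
The plan is to treat the four assertions of the proposition in sequence, handling the most delicate estimate \eqref{PsipBreg} via the symmetric Bregman identity \eqref{Bregmansum}.

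First, I would compute $\nabla\Psi_p$ explicitly via the chain rule on $\Psi_p(w)=\frac{1}{2}\|w\|_p^2$ to obtain
$(\nabla\Psi_p(w))_i = \|w\|_p^{2-p}\,\mathrm{sgn}(w(i))|w(i)|^{p-1}$.
Since $p-1>0$, each component is continuous on $\rbb^d$ (the scalar factor is continuous away from $w=0$ and controlled at $w=0$ because $|w(i)|^{p-1}\to 0$). With $q=p/(p-1)$, a direct evaluation
$\|\nabla\Psi_p(w)\|_*^q = \|w\|_p^{(2-p)q}\sum_i|w(i)|^{(p-1)q} = \|w\|_p^{(2-p)q+p} = \|w\|_p^q$
uses $(p-1)q=p$ and $(2-p)q+p=q$, proving \eqref{nablaPsinorms}; the incremental condition \eqref{IncrePsi} with $C_{\Psi_p}=1$ is then immediate.

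Second, for \eqref{PsipBreg} I would combine the identity \eqref{Bregmansum} applied to $g=\Psi_p$ with Holder to get
$D_{\Psi_p}(\tilde w,w) \leq \|\tilde w - w\|_p\,\|\nabla\Psi_p(\tilde w)-\nabla\Psi_p(w)\|_*$.
Writing $\nabla\Psi_p(w)=\|w\|_p^{2-p}\phi_p(w)$ with $\phi_p(w)_i=\mathrm{sgn}(w(i))|w(i)|^{p-1}$, I would split
$\nabla\Psi_p(\tilde w)-\nabla\Psi_p(w) = (\|\tilde w\|_p^{2-p}-\|w\|_p^{2-p})\phi_p(\tilde w) + \|w\|_p^{2-p}(\phi_p(\tilde w)-\phi_p(w))$
and bound each summand. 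For the radial factor, the concave inequality $|a^{2-p}-b^{2-p}|\leq|a-b|^{2-p}$ (valid for $0<2-p\leq 1$) and the reverse triangle inequality give $|\|\tilde w\|_p^{2-p}-\|w\|_p^{2-p}|\leq\|\tilde w-w\|_p^{2-p}$, together with $\|\phi_p(\tilde w)\|_*=\|\tilde w\|_p^{p-1}$. For the component-wise map, the scalar Holder estimate $|\mathrm{sgn}(s)|s|^{p-1}-\mathrm{sgn}(t)|t|^{p-1}|\leq|s-t|^{p-1}$ yields, after summation over coordinates and using $(p-1)q=p$, that $\|\phi_p(\tilde w)-\phi_p(w)\|_*\leq\|\tilde w-w\|_p^{p-1}$. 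Finally I would use $\|w\|_p\leq\|\tilde w\|_p+\|\tilde w-w\|_p$ together with the subadditivity $(a+b)^{2-p}\leq a^{2-p}+b^{2-p}$ to replace $\|w\|_p^{2-p}$, and dispatch the two resulting cross terms $\|\tilde w-w\|_p^{3-p}\|\tilde w\|_p^{p-1}$ and $\|\tilde w-w\|_p^p\|\tilde w\|_p^{2-p}$ (plus the diagonal $\|\tilde w-w\|_p^2$) by a short case split on whether $\|\tilde w-w\|_p\leq 1$, absorbing the smaller exponent into $\|\tilde w-w\|_p^{\min\{p,3-p\}}$ and the larger into $\|\tilde w-w\|_p^2$.

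Third, I would invert \eqref{PsipBreg} to obtain \eqref{Psipcondition}. Setting $r=\|\tilde w-w\|_p$, $u=D_{\Psi_p}(\tilde w,w)$ and $K=(2\|\tilde w\|_p)^{2-p}+\|\tilde w\|_p^{p-1}+1$, the bound reads $u\leq K(r^2+r^{\min\{p,3-p\}})$. The function $s\mapsto s+s^{1/\tau_p}$ (where $s=r^2$) is dominated by $2s$ for $s\geq 1$ and by $2s^{1/\tau_p}$ for $s\leq 1$, so $s\geq u/(2K)$ in the first regime and $s\geq(u/(2K))^{\tau_p}$ in the second. These two branches match precisely the linear large-$u$ and power-$\tau_p$ small-$u$ pieces of $\Omega_p$ defined in \eqref{Omegap}, and choosing $B_p$ as the minimum of $1/(2K)$ and $(2K)^{-\tau_p}$ (and absorbing the $1/\tau_p$ factor) establishes \eqref{Psipcondition} in both regimes. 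Fourth, \eqref{convexcontrol} then follows by inserting $\sigma_F$-strong convexity $\inn{w^*-w,\nabla F(w^*)-\nabla F(w)}\geq\sigma_F\|w^*-w\|_p^2$ into \eqref{Psipcondition} with $\tilde w=w^*$.

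The main obstacle will be Step 2: producing the bound $\|\nabla\Psi_p(\tilde w)-\nabla\Psi_p(w)\|_*\leq\|\tilde w\|_p^{p-1}\|\tilde w-w\|_p^{2-p}+\|w\|_p^{2-p}\|\tilde w-w\|_p^{p-1}$ with care, since $\nabla^2\Psi_p$ is singular at coordinates that vanish (as $|w(i)|^{p-2}$ blows up for $p<2$), so Taylor-based arguments fail and one must use the two Holder estimates with exponents $2-p$ and $p-1$. The appearance of $\min\{p,3-p\}$ is exactly the minimum of the exponents $1+(2-p)=3-p$ and $1+(p-1)=p$ obtained from multiplying each Holder bound by the extra factor of $\|\tilde w-w\|_p$ from the outer Holder step.
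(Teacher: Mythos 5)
Your overall strategy is the same as the paper's: compute $\nabla\Psi_p$ explicitly, use the symmetric Bregman identity \eqref{Bregmansum} plus H\"{o}lder to reduce \eqref{PsipBreg} to bounding $\|\nabla\Psi_p(\tilde w)-\nabla\Psi_p(w)\|_*$, split that difference into a radial part and a sign-power part, and then invert the resulting two-term bound in $r=\|\tilde w - w\|_p$ by a case analysis to produce $\Omega_p$. Two things to flag.

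First, a genuine (though small and correctable) error: the scalar estimate you invoke, $|\mathrm{sgn}(s)|s|^{p-1}-\mathrm{sgn}(t)|t|^{p-1}|\leq|s-t|^{p-1}$, is false. Take $s=1$, $t=-1$, $p=3/2$: the left-hand side is $2$ while the right-hand side is $\sqrt{2}$. The correct inequality, recorded in Lemma~\ref{lem:elementarybeta} of the paper, carries the extra factor $2^{1-\beta}=2^{2-p}$; without it the counterexample above (or any opposite-sign pair) breaks the bound. Once you insert that factor, your argument goes through.

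Second, a purely cosmetic divergence: you split $\nabla\Psi_p(\tilde w)-\nabla\Psi_p(w)$ as $(\|\tilde w\|_p^{2-p}-\|w\|_p^{2-p})\phi_p(\tilde w) + \|w\|_p^{2-p}(\phi_p(\tilde w)-\phi_p(w))$, whereas the paper uses the mirror-image splitting $\|\tilde w\|_p^{2-p}(\hat{\tilde w}-\hat w) + (\|\tilde w\|_p^{2-p}-\|w\|_p^{2-p})\hat w$. Both are legitimate, but after fixing the $2^{2-p}$ factor, your choice attaches that factor to $\|w\|_p^{2-p}$, so when you replace $\|w\|_p^{2-p}$ by $\|\tilde w\|_p^{2-p}+\|\tilde w - w\|_p^{2-p}$ the final coefficient on $\|\tilde w-w\|_p^2$ becomes $2^{2-p}$ rather than $1$; your constant is $\left(2\|\tilde w\|_p\right)^{2-p}+\|\tilde w\|_p^{p-1}+2^{2-p}$, slightly larger than the one stated in \eqref{PsipBreg}. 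That does not affect the downstream use (the constant $B_p$ just inherits the change), but to recover the paper's exact constant you should use the paper's splitting. Finally, the inversion step to get \eqref{Psipcondition} keys the case split on $s=r^2$ versus $1$, but $\Omega_p$ is defined piecewise in $u=D_{\Psi_p}$ versus $1$; the two do not automatically coincide, so you need to check the two cross sub-cases ($s\geq 1,\,u<1$ and $s<1,\,u\geq 1$). They do work out, using $\Omega_p(u)\leq u$ for $u<1$ and $\Omega_p(u)\leq u$ for $u\geq 1$, but the sketch as written glosses over this.
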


\begin{figure}[htbp]
\centering
\includegraphics[width=0.66\textwidth]{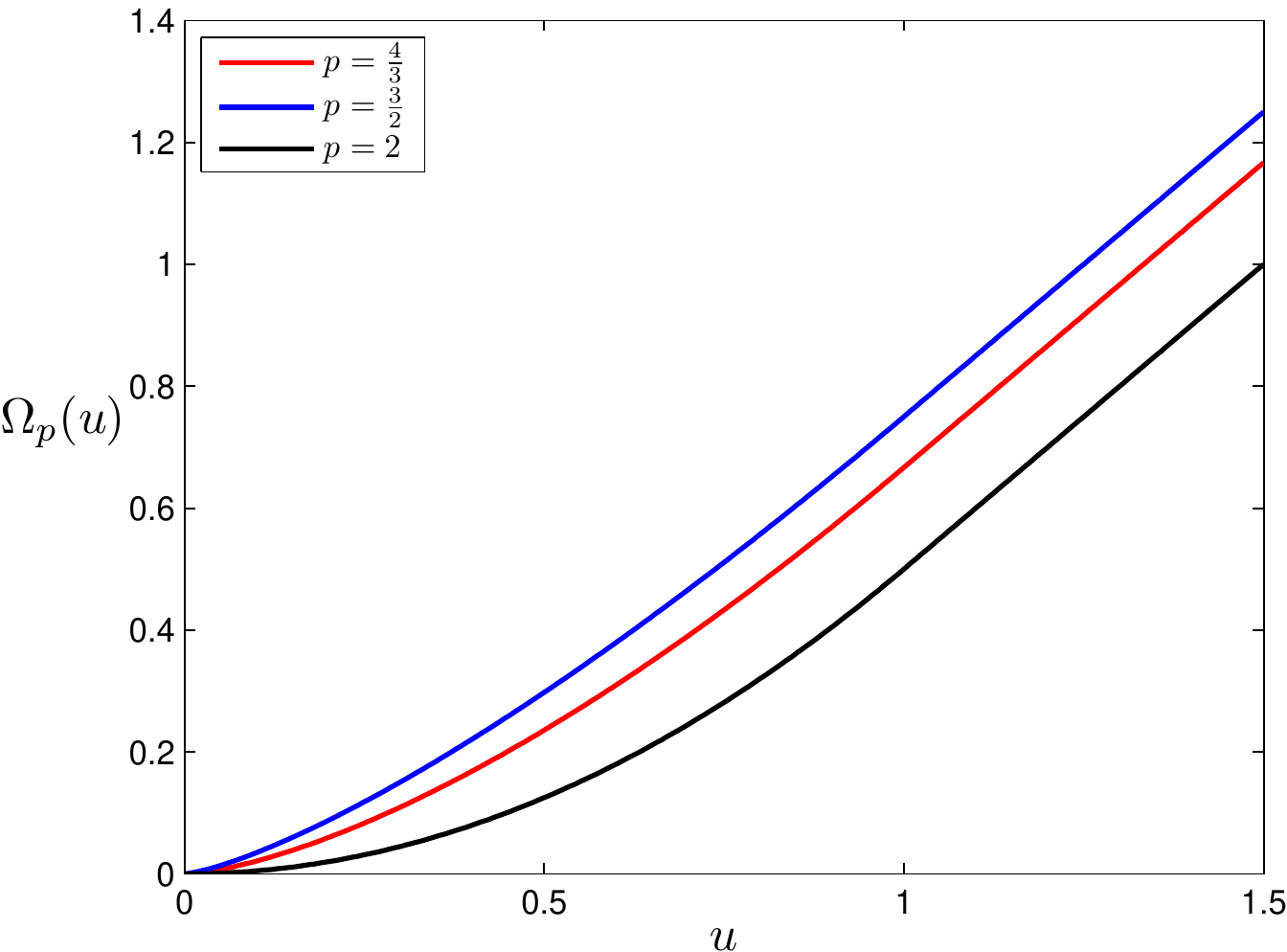}
\caption{Plots of the convex function $\Omega_p$ with $p=\frac{4}{3}$ (red line), $p=\frac{3}{2}$ (blue line) and $p=2$ (black line).\label{fig:omega}}
\end{figure}

We remark that the convex function $\Omega_2$ defined by (\ref{Omegap}) with $p=2$ is a Huber loss~\citep{huber1964}. Figure \ref{fig:omega} gives the plots of the function $\Omega_p$ with $p=\frac{4}{3},p=\frac{3}{2}$ and $p=2$.

Following Proposition \ref{prop:p-divergence}, a natural question to ask is whether the $p$-norm divergence is strongly smooth (that is, whether (\ref{Psipcondition})
holds with $\Omega_p \left(u\right) = C u$ for some $C>0$). When $d=1$, $\Psi_p (w) = \frac{1}{2} w^2 =\Psi_2 (w)$ is strongly smooth. When $d>1$, the answer is negative, as shown in the
following proposition to be proved in the appendix.

\begin{proposition}\label{prop:p-divergence-nonsmooth}
For $d>1$, the $p$-norm divergence $\Psi=\Psi_p$ with $1<p<2$ is not strongly smooth.
\end{proposition}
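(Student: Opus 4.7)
The plan is to disprove strong smoothness by exhibiting a pair of points at which the Bregman distance is much larger than $\|\tilde w - w\|_p^2$. Recall that $\Psi_p$ would be $L$-strongly smooth with respect to $\|\cdot\|_p$ iff there exists $L > 0$ with
\[
D_{\Psi_p}(\tilde w, w) \leq \tfrac{L}{2}\|\tilde w - w\|_p^2 \qquad \forall w,\tilde w \in \wcal.
\]
So I only need to produce one reference point $w$ and a sequence $\tilde w_\epsilon \to w$ along which $D_{\Psi_p}(\tilde w_\epsilon, w)/\|\tilde w_\epsilon - w\|_p^2 \to \infty$.

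The first step is to exploit the fact that for $1 < p < 2$, the radial factor $|\xi|^{p-1}\sgn(\xi)$ that enters $\nabla \Psi_p$ vanishes at $\xi = 0$. A short calculation gives the coordinatewise formula
\[
[\nabla \Psi_p (w)]_i \;=\; \|w\|_p^{\,2-p}\,|w(i)|^{p-1}\sgn(w(i)),
\]
so if $w$ has a zero coordinate then the directional derivative of $\Psi_p$ at $w$ in that coordinate direction is $0$. Since $d > 1$, pick two distinct standard basis vectors $e_1, e_2 \in \rbb^d$ and set
\[
w := e_1, \qquad \tilde w_\epsilon := e_1 + \epsilon e_2 \quad (\epsilon > 0 \text{ small}).
\]

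The next step is to compute the three ingredients of the ratio. Clearly $\|\tilde w_\epsilon - w\|_p = \epsilon$. By the formula above, $[\nabla \Psi_p(w)]_2 = 0$, hence the first-order term $\langle \tilde w_\epsilon - w, \nabla \Psi_p(w)\rangle = 0$. For the function-value difference,
\[
\Psi_p(\tilde w_\epsilon) - \Psi_p(w) \;=\; \tfrac{1}{2}\bigl((1+\epsilon^p)^{2/p} - 1\bigr) \;=\; \tfrac{\epsilon^p}{p} + O(\epsilon^{2p})
\]
by a one-variable Taylor expansion of $s \mapsto (1+s)^{2/p}$ at $s=0$. Combining, $D_{\Psi_p}(\tilde w_\epsilon, w) = \tfrac{\epsilon^p}{p} + O(\epsilon^{2p})$.

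The final step is to take the quotient and let $\epsilon \downarrow 0$:
\[
\frac{D_{\Psi_p}(\tilde w_\epsilon, w)}{\tfrac{1}{2}\|\tilde w_\epsilon - w\|_p^2} \;=\; \frac{\tfrac{\epsilon^p}{p} + O(\epsilon^{2p})}{\tfrac{1}{2}\epsilon^2} \;=\; \tfrac{2}{p}\,\epsilon^{p-2} + O(\epsilon^{2p-2}),
\]
which diverges as $\epsilon \downarrow 0$ since $p < 2$. Hence no finite $L$ can work and $\Psi_p$ is not strongly smooth. There is no real obstacle here; the only thing to be careful about is that we really use $d \geq 2$ (so that a coordinate direction in which $w$ is zero exists) and $p > 1$ (so that $|w(2)|^{p-1}\sgn(w(2))$ is well-defined and equal to zero at $w(2)=0$).
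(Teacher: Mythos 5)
Your proof is correct and takes a genuinely different route from the paper's. Both aim to exhibit pairs of points violating any candidate smoothness bound, but the direction of the limit is opposite. You take $\tilde w_\epsilon = e_1 + \epsilon e_2 \to e_1$ as $\epsilon \downarrow 0$, exploit the fact that the second coordinate of $\nabla\Psi_p(e_1)$ vanishes, compute $D_{\Psi_p}(\tilde w_\epsilon, e_1) = \tfrac{\epsilon^p}{p} + O(\epsilon^{2p})$ against $\|\tilde w_\epsilon - e_1\|_p^2 = \epsilon^2$, and conclude the ratio scales like $\epsilon^{p-2} \to \infty$. This is self-contained, does not need Lemma~\ref{lem:co-coercivity}, and pinpoints the source of failure: the degeneracy of $|\xi|^{p-1}$ at $\xi = 0$ when $p<2$.

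The paper, in contrast, argues by contradiction through the Lipschitz-gradient bound $\|\nabla\Psi_p(w)-\nabla\Psi_p(\tilde w)\|_* \le L\|w-\tilde w\|_p$ implied by strong smoothness, and considers vectors with coordinates $a\pm 1$ as $a\to\infty$. Note, however, that
\[
a^{2-p}\bigl[(a+1)^{p-1}-(a-1)^{p-1}\bigr]
= a\Bigl[\bigl(1+\tfrac1a\bigr)^{p-1}-\bigl(1-\tfrac1a\bigr)^{p-1}\Bigr]
= 2(p-1)+O(a^{-2}),
\]
which converges to $2(p-1)$ rather than diverging; so the paper's construction as written only produces the lower bound $L\ge (p-1)/2$ and does not actually yield a contradiction. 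Sending a coordinate toward $0$ rather than toward $\infty$, as you do, is what makes the exponent $p-2<0$ produce the required blowup, so your argument is the one that closes the proof.
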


\subsection{Explicit results with special loss functions for learning\label{sec:loss function}}
In this subsection we state explicit results on the convergence of the OMD algorithm
associated with the regularized loss function $f(w,z)=\phi(\inn{w,x}, y)+\lambda\|w\|_2^2$ with $\lambda >0$ and the norm $\|\cdot\| = \|\cdot\|_2$ when the loss function $\phi$
has a Lipschitz continuous derivative.
Common examples of such loss functions~\citep{huber1964, chen2004support, wuyingzhou2007} include the least squares loss $\phi(a, y)=\frac{1}{2}(a-y)^2$,
the logistic loss $\phi(a, y)=\log(1+\exp(-ay))$ or $\phi(a, y)=1/(1+e^{ay})$, the $2$-norm hinge loss $\phi(a, y)=\left(\max\{0,1-ay\}\right)^2$,
and the Huber loss $\Omega_2$ defined by (\ref{Omegap}) with $p=2$.

The following explicit result will be proved in Section \ref{sec:proof-incremental-convex}.

\begin{theorem}\label{thm:p-norm-convergence}
Assume $\sup_{x\in\xcal}\|x\|_* <\infty$, $\|\cdot\|=\|\cdot\|_2$,
and the derivative $\phi'$ of the convex loss function $\phi:\rbb\times\rbb\to\rbb_+$ satisfies the Lipschitz condition
\begin{equation}\label{Lipphi}
\ell_\phi := \sup_{u \not= v\in \rbb, y\in \ycal} \frac{|\phi'(u, y) - \phi'(v, y)|}{|u-v|} <\infty.
\end{equation}
Then the regularized loss function $f(w,z)=\phi(\inn{w,x}, y)+\lambda\|w\|_2^2$ with some $\lambda>0$ is $2(\ell_\phi R^2 + \lambda)$-strongly smooth for every $z\in \zcal$. The objective function $F$ is also $2(\ell_\phi R^2 + \lambda)$-strongly smooth, and is $2\lambda$-strongly convex.
The conclusion of Theorem \ref{thm:least-square} with $w_\rho$ replaced by $w^*$ holds for the OMD algorithm \eqref{online-mirror-descent} with $\Psi$ being either some $p$-norm divergence $\Psi=\Psi_p$ with $1<p\leq2$ or a strongly smooth mirror map.
\end{theorem}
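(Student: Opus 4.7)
The plan is to verify the hypotheses of the general Theorems \ref{thm:nece-suff}, \ref{thm:nece-suff-without-variance}, and \ref{thm:ae-sufficient} for this regularized loss and then invoke them, with Propositions \ref{prop:least-squares} and \ref{prop:p-divergence} handling the mirror-map side.

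First I would establish the smoothness and convexity claims. With $\nabla_w f(w,z) = \phi'(\inn{w,x}, y)\, x + 2\lambda w$, the Lipschitz hypothesis on $\phi'$ combined with Cauchy--Schwarz and $\|x\|_2\le R:=\sup_{x\in\xcal}\|x\|_*$ gives
$$
\|\nabla_w f(w,z) -\nabla_w f(\tilde w,z)\|_2 \le \ell_\phi\|x\|_2^2\|w-\tilde w\|_2 + 2\lambda\|w-\tilde w\|_2 \le 2(\ell_\phi R^2+\lambda)\|w-\tilde w\|_2,
$$
which upgrades to $2(\ell_\phi R^2+\lambda)$-strong smoothness of $f(\cdot,z)$ by the standard fundamental-theorem-of-calculus argument; taking expectations over $z$ transfers the same strong-smoothness bound to $F$. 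Convexity of $\phi$ in its first argument together with the $2\lambda$-strong convexity of $\lambda\|\cdot\|_2^2$ implies that $F$ is $2\lambda$-strongly convex.

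Next, I would verify the mirror-map side. When $\Psi$ is strongly smooth, Proposition \ref{prop:least-squares} delivers continuity of $\nabla\Psi$ at $w^*$, the incremental condition \eqref{IncrePsi}, and the linear version \eqref{strong-convexity-assumption} of \eqref{convexcontrol}. When $\Psi=\Psi_p$ with $1<p\leq 2$, the case $p=2$ reduces to the preceding (since $\Psi_2$ is strongly smooth), while for $1<p<2$ Proposition \ref{prop:p-divergence} supplies continuity, the incremental condition with $C_{\Psi_p}=1$, and \eqref{convexcontrol} with $\Omega = \sigma_F B_{p}\Omega_p$; the strong convexity of $F$ with respect to $\|\cdot\|_p$ required by that proposition follows from its $2\lambda$-strong convexity with respect to $\|\cdot\|_2$ via the finite-dimensional equivalence $\|w\|_2^2 \geq d^{1-2/p}\|w\|_p^2$.

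Finally, I would assemble the three claims. Under the positive-variance hypothesis $\inf_w\ebb_Z[\|\nabla_w f(w,Z)\|_*]>0$, Theorem \ref{thm:nece-suff} supplies the necessity and sufficiency of \eqref{nece-suff} and the matching $T^{-1}$ bounds \eqref{lower-rate}--\eqref{one-over-T} when $\Psi$ is strongly smooth, yielding part (a); under the zero-variance hypothesis, Theorem \ref{thm:nece-suff-without-variance} yields part (b), with the linear bounds \eqref{linear-rate} again requiring strongly smooth $\Psi$; and Theorem \ref{thm:ae-sufficient} yields part (c). Convergence of $D_\Psi(w^*,w_t)$ is then converted into convergence of $\|w^*-w_t\|_2^2$ via the strong convexity inequality $\|w^*-w_t\|_2^2 \leq (2/\sigma_\Psi) D_\Psi(w^*,w_t)$. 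The main work is bookkeeping of constants: identifying the correct $\sigma$ in the step size $\eta_t = 4/((t+1)\sigma)$ of part (a) and the correct constant-step range $\eta_1<\sigma_\Psi/(2L)$ of part (b). For strongly smooth $\Psi$, \eqref{strong-convexity-assumption} holds with $\sigma_F$ proportional to $\lambda/L_\Psi$, obtained by combining $2\lambda$-strong convexity of $F$ with $\|w^*-w\|_2^2\geq (2/L_\Psi)D_\Psi(w^*,w)$; plugging this $\sigma_F$ and $L=2(\ell_\phi R^2+\lambda)$ into the general theorems produces the explicit constants claimed in the conclusions of Theorem \ref{thm:least-square}.
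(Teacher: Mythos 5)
Your proposal is correct and follows essentially the same route as the paper: establish strong smoothness of $f(\cdot,z)$ and strong convexity of $F$, verify the mirror-map hypotheses via Propositions \ref{prop:least-squares} and \ref{prop:p-divergence} (with norm equivalence for the $p$-norm case), and invoke the three general theorems as in the proof of Theorem \ref{thm:least-square}. The one small difference is that you derive strong smoothness by first bounding the Lipschitz constant of $\nabla_w f$ and then integrating, whereas the paper bounds the Bregman distance $D_{\phi(\inn{\cdot,x},y)}(\tilde w,w)$ directly by applying the mean value theorem to $\phi$ and then the Lipschitz condition on $\phi'$; these two routes are standard equivalents and lead to the same conclusion (your route in fact yields the slightly tighter constant $\ell_\phi R^2+2\lambda\le 2(\ell_\phi R^2+\lambda)$).
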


\subsection{Comparison and discussion}

In the special Hilbert space setting with $\Psi=\Psi_2$, there is a large learning theory literature on the convergence of stochastic gradient descent or online learning algorithms.
For the online gradient descent algorithm (\ref{onlineF}), under the assumption that the objective function $F$ with a single minimizer $w^*$ satisfies
$$\inf_{\|w-w^*\|_2^2>\epsilon}\inn{w-w^*,\nabla F(w)}>0, \qquad \forall \epsilon>0$$
and
$$\|\nabla F(w)\|_2^2\leq A+B\|w-w^*\|_2^2, \qquad \forall w\in\wcal $$
for some constants $A, B \geq 0$, it was shown~\citep{bottou1998online} that $\{w_t\}_t$ would converge to $w^*$ almost surely if the step sizes satisfy \eqref{ae-sufficient-step-condition}.
Convergence of online learning algorithms based on regularization schemes in reproducing kernel Hilbert spaces were discussed
in~\citep{SmaleYao, PY08} for regression and~\citep{ying2006online} for classification.
Under some assumptions on uniform boundedness of $\{w_t\}_t$ or smoothness of the loss function,
it was shown that a sufficient condition for the convergence in expectation is the step size condition \eqref{nece-suff}.
Such a result was recently established for online pairwise learning in~\citep{ying2015unregularized}.
We remark that the stochastic gradient descent method has also been well studied in the literature of optimization (see, e.g.,~\citep{robbins1951stochastic, nedic2001incremental})
under some conditions on the noise sequence instead of conditions on the step size sequence. For the randomized Kaczmarz algorithm \eqref{kaczmarz},
the convergence in expectation has been studied in the literature of non-uniform sampling and compressed sensing,
including the characterization of the convergence~\citep{lin2015learning} by \eqref{nece-suff} in the noisy case with $\inf_{w\in\wcal}\ebb_Z[(\inn{w,X}-Y)^2]>0$,
and the linear convergence~\citep{SV09} with a constant step size sequence in the noiseless case with $y= \inn{w^*, x}$ almost surely.
Our work on the convergence of the OMD algorithm \eqref{online-mirror-descent} with a general mirror map $\Psi$
is motivated by these results on the randomized Kaczmarz algorithm \eqref{kaczmarz} with the special mirror map $\Psi_2$.

For the OMD algorithm \eqref{online-mirror-descent} with a general mirror map $\Psi$, the only existing work to our best knowledge is some regret bounds in~\citep{duchi2010composite}.
In this paper we characterize the convergence in expectation by the step size condition \eqref{nece-suff} in the noisy case
and by $\sum_{t=1}^{\infty}\eta_t=\infty$ in the noiseless case, derive the linear convergence with a constant step size sequence in the noiseless case, and
verify the almost sure convergence by the step size condition (\ref{ae-sufficient-step-condition}).
The main difficulty with the general mirror map $\Psi$ is the lack of analysis for
the one-step progress $\|w_{t+1}-w^*\|_2^2-\|w_t-w^*\|_2^2$ which was carried out in~\citep{lin2015learning}
by exploiting the Hilbert space structure and the special linearity caused by the least squares loss function.
To overcome this difficulty due to the Banach space structure and the nonlinearity,
we use the Bregman distance $D_{\Psi}$ induced by the mirror map $\Psi$, which has been used in our recent work~\citep{lei2016analysisb}.
Our novelty here is a key identity (\ref{key-identity}) measuring the one-step progress of the OMD algorithm with the general mirror map $\Psi$.
Our analysis is then conducted by extensively using properties of the Bregman distance,
the smoothness and convexity of regularized loss functions, and the convexity condition (\ref{convexcontrol}) involving a related convex function $\Omega$.

Our contribution of this paper includes not only the novel convergence analysis for the OMD algorithm \eqref{online-mirror-descent} with a general mirror map $\Psi$,
but also some improvements of our earlier work~\citep{lin2015learning} on the randomized Kaczmarz algorithm \eqref{kaczmarz} with the special mirror map $\Psi_2$.
In particular, we confirm a conjecture raised in~\citep{lin2015learning} on high order convergence rates for the randomized Kaczmarz algorithm.
Furthermore, the analysis in~\citep{lin2015learning} was carried out under the restriction $0<\eta_t<2$ on the step size sequence which is removed here.
It would be interesting to get explicit convergence rates when the mirror map is $\Psi_p$, and to extend our analysis to other learning frameworks~\citep{GLZ17, HFWZ, LGZ17}.

\section{A Key Identity and Idea of Analysis\label{sec:idea}}

Our analysis for the convergence of the OMD algorithm \eqref{online-mirror-descent} will be carried out
based on the following key identity which measures the one-step progress of the algorithm in terms of the excess Bregman distance $D_\Psi(w^*,w_{t+1})-D_\Psi(w^*,w_t)$.

\begin{lemma}\label{lem:key-identity}
  The following identity holds for $t\in\nbb$
  \begin{equation}\label{key-identity}
    \ebb_{z_t}[D_\Psi(w^*,w_{t+1})]-D_\Psi(w^*,w_t)=\eta_t\inn{w^*-w_t,\nabla F(w_t)}+ \ebb_{z_t}\big[D_\Psi(w_t,w_{t+1})\big].
  \end{equation}
\end{lemma}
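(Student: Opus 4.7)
The identity is essentially the three-point identity for Bregman distances combined with the OMD update rule, followed by taking the conditional expectation. Here is the plan.

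First, I would establish the three-point identity that for any $a,b,c \in \wcal$,
\begin{equation*}
D_\Psi(a,c) - D_\Psi(a,b) = D_\Psi(b,c) + \inn{a-b,\,\nabla\Psi(b) - \nabla\Psi(c)}.
\end{equation*}
This is purely algebraic: expanding the definition $D_\Psi(u,v) = \Psi(u)-\Psi(v)-\inn{u-v,\nabla\Psi(v)}$ on both sides, the $\Psi(a)$ and $\Psi(b)$ terms cancel, and regrouping the linear-in-$\nabla\Psi$ terms gives the stated inner product.

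Next, I would apply this identity to $a = w^*$, $b = w_t$, $c = w_{t+1}$, yielding
\begin{equation*}
D_\Psi(w^*,w_{t+1}) - D_\Psi(w^*,w_t) = D_\Psi(w_t,w_{t+1}) + \inn{w^*-w_t,\,\nabla\Psi(w_t) - \nabla\Psi(w_{t+1})}.
\end{equation*}
The OMD update \eqref{online-mirror-descent} gives exactly $\nabla\Psi(w_t) - \nabla\Psi(w_{t+1}) = \eta_t\,\nabla_w[f(w_t,z_t)]$, so the equality becomes
\begin{equation*}
D_\Psi(w^*,w_{t+1}) - D_\Psi(w^*,w_t) = D_\Psi(w_t,w_{t+1}) + \eta_t\inn{w^*-w_t,\,\nabla_w[f(w_t,z_t)]}.
\end{equation*}

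Finally, I would take the conditional expectation $\ebb_{z_t}$, keeping $w_t$ (which depends only on $z_1,\ldots,z_{t-1}$) fixed. Because $z_t$ is drawn independently of the earlier samples from $\rho$, the expectation commutes with the inner product and $\ebb_{z_t}[\nabla_w f(w_t,z_t)] = \ebb_Z[\nabla_w f(w_t,Z)] = \nabla F(w_t)$, using the standard interchange of expectation and differentiation that is already implicit in the paper's setup $F(w) = \ebb_Z[f(w,Z)]$. The left-hand side $D_\Psi(w^*,w_t)$ is deterministic under $\ebb_{z_t}$, so it is unaffected, and \eqref{key-identity} follows immediately.

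There is no real obstacle here: the content is a one-line application of the three-point Bregman identity, and the only mildly delicate point is the interchange $\ebb_Z \nabla_w f = \nabla F$, which is a standard hypothesis inherent to the model and not something the lemma needs to re-justify. Consequently the whole proof is just two algebraic steps plus one expectation.
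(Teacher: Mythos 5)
Your proposal is correct and takes essentially the same route as the paper: a three-point Bregman identity, substitution of the OMD update $\nabla\Psi(w_t)-\nabla\Psi(w_{t+1})=\eta_t\nabla_w[f(w_t,z_t)]$, and the conditional expectation $\ebb_{z_t}$ using independence of $w_t$ and $z_t$. The only (cosmetic) difference is that you write the three-point identity in a form that lands directly on $D_\Psi(w_t,w_{t+1})$ and $\inn{w^*-w_t,\cdot}$, whereas the paper starts from the variant $D_\Psi(w,v)+D_\Psi(v,u)-D_\Psi(w,u)=\inn{w-v,\nabla\Psi(u)-\nabla\Psi(v)}$ and then needs an extra step, splitting $w-w_{t+1}=(w-w_t)+(w_t-w_{t+1})$ and invoking \eqref{Bregmansum}, to arrive at the same expression.
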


\begin{proof}
By the definition of the Bregman distance, we see the following identity
$$
  D_{\Psi}(w, v)+D_{\Psi}(v, u) -D_{\Psi}(w, u)=\inn{w-v, \nabla\Psi(u)-\nabla\Psi(v)}, \qquad \forall u, v, w \in \wcal.
$$
Choosing $v=w_{t+1}$ and $u=w_{t}$ yields
\begin{align*}
  D_\Psi(w,w_{t+1})-D_\Psi(w,w_t)  = -D_\Psi(w_{t+1},w_t) + \inn{w-w_{t+1},\nabla\Psi(w_t)-\nabla\Psi(w_{t+1})}.
\end{align*}
We now separate $w-w_{t+1}$ into $w-w_t$ and $w_t-w_{t+1}$, use the iteration relation \eqref{online-mirror-descent} of the OMD algorithm and apply \eqref{Bregmansum} with $g=\Psi$ to derive
\begin{align*}
       & D_\Psi(w,w_{t+1})-D_\Psi(w,w_t) \\
       &
        =-D_\Psi(w_{t+1},w_t) + \inn{w-w_t,\nabla\Psi(w_t)-\nabla\Psi(w_{t+1})}+\inn{w_t-w_{t+1},\nabla\Psi(w_t)-\nabla\Psi(w_{t+1})}   \\
       & =-D_\Psi(w_{t+1},w_t) + \eta_t\inn{w-w_t, \nabla_w [f(w_t, z_t)]}+\inn{w_t-w_{t+1},\nabla\Psi(w_t)-\nabla\Psi(w_{t+1})}\\
       & = D_\Psi(w_t, w_{t+1})+ \eta_t\inn{w-w_t, \nabla_w [f(w_t, z_t)]}.
\end{align*}
Taking expectations $\ebb_{z_t}$ on both sides, setting $w=w^*$ and noting that $w_t$ is independent of $z_t$, we see the stated identity \eqref{key-identity}. The proof is complete.
\end{proof}

The necessity of the convergence will be derived by using the strong smoothness of $F$ and the strong convexity of $\Psi$ to bound
$\inn{w_t - w^*,\nabla F(w_t)} = \inn{w_t - w^*, \nabla F(w_t) - \nabla F(w^*)}$ by $O(1) D_\Psi(w^*,w_t)$, from which we can apply the identity (\ref{key-identity})
to get necessary conditions by the following inequality
$$
  \ebb_{z_1, \ldots, z_{t}} [D_\Psi(w^*,w_{t+1})] \geq (1- O(\eta_t))\ebb_{z_1, \ldots, z_{t-1}} [D_\Psi(w^*,w_t)]
  +\ebb_{z_1, \ldots, z_{t}}\big[D_\Psi(w_t,w_{t+1})\big].
$$

The sufficiency will be derived by using the strong smoothness of $f$ and the duality $D_\Psi(w_t,w_{t+1})=D_{\Psi^*}(\nabla\Psi(w_{t+1}),\nabla\Psi(w_t))$
to bound $\ebb_{z_t}\big[D_\Psi(w_t,w_{t+1})\big]$ in terms of $\inn{w^*-w_t, \nabla F(w^*)- \nabla F(w_t)}$ and $\ebb_{z_t}[\|\nabla f(w^*, z_t)\|_*^2]$,
from which we can apply the identity (\ref{key-identity}) again to get
\begin{multline*}
  \ebb_{z_1, \ldots, z_{t}}[D_\Psi(w^*,w_{t+1})] \leq \ebb_{z_1, \ldots, z_{t-1}}[D_\Psi(w^*,w_t)] \\
  -\frac{\eta_t}{2} \ebb_{z_1, \ldots, z_{t}}[\inn{w^*-w_t, \nabla F(w^*)- \nabla F(w_t)}]+ O(\eta_t^2)
\end{multline*}
and then use (\ref{convexcontrol}) for bounding $-\inn{w^*-w_t, \nabla F(w^*)-\nabla F(w_t)}$ by $- \Omega \left(D_\Psi(w^*, w_t)]\right)$ to obtain
$$
\ebb_{z_1, \ldots, z_{t}} [D_\Psi(w^*,w_{t+1})] \leq \ebb_{z_1, \ldots, z_{t-1}} [D_\Psi(w^*,w_t)]
- \frac{\eta_t}{2} \Omega \left(\ebb_{z_1, \ldots, z_{t-1}} [D_\Psi(w^*, w_t)]\right) +
O(\eta_t^2).
$$
Here for a continuous convex function $g:\rbb^d\to\rbb$, the Fenchel-conjugate $g^*$ is defined by
$$g^*(v)=\sup_{w\in\wcal}[\inn{w,v}-g(w)], \qquad v\in \rbb^d $$
and the duality (\ref{dualityBregman}) on the Bregman distances is stated (see, e.g., \citep{borwein2010convex}) in the following lemma together with the duality between strong convexity and strong smoothness~\citep{kakade2012regularization}.

\begin{lemma}\label{lem:convexity-smoothness-duality}
Let $g:\rbb^d\to\rbb$ be continuous and convex. Let $\beta >0$. Then $g$ is $\beta$-strongly convex with respect to the norm $\|\cdot\|$ if and only if $g^*$ is $\frac{1}{\beta}$-strongly smooth with respect to the dual norm $\|\cdot\|_*$.

If $g$ is Fr\'{e}chet differentiable and strongly convex, then there holds
\begin{equation}\label{dualityBregman}
  D_g(w,\tilde{w})=D_{g^*}(\nabla g(\tilde{w}),\nabla g(w)),\qquad \forall w,\tilde{w}\in\wcal.
\end{equation}
\end{lemma}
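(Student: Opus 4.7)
The plan is to handle the two assertions separately using standard Fenchel-conjugate machinery, since both are classical.

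For the strong convexity / strong smoothness duality, I would first prove the forward direction. Assuming $g$ is $\beta$-strongly convex, I would fix $v\in\rbb^d$ and observe that $w\mapsto g(w)-\inn{w,v}$ is itself $\beta$-strongly convex and coercive on $\rbb^d$, hence attains its infimum at a unique point $w_v$; this identifies $g^*(v)=\inn{w_v,v}-g(w_v)$ and, by an envelope argument together with uniqueness of the minimizer, yields Fr\'{e}chet differentiability with $\nabla g^*(v)=w_v$. The key step is then to deduce $\|\nabla g^*(v)-\nabla g^*(\tilde v)\|\leq \beta^{-1}\|v-\tilde v\|_*$. This follows by writing down the first-order optimality conditions $\nabla g(w_v)=v$, $\nabla g(w_{\tilde v})=\tilde v$, combining the strong-convexity lower bounds for $g$ evaluated at the two minimizers, and using $\inn{w_v-w_{\tilde v},\,v-\tilde v}\leq \|w_v-w_{\tilde v}\|\,\|v-\tilde v\|_*$. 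Integrating this Lipschitz gradient along a segment then upgrades it to the $\frac{1}{\beta}$-strong smoothness inequality for $g^*$ with respect to $\|\cdot\|_*$. The converse is obtained by symmetry, applying the forward implication to $g^*$ and invoking biconjugation $g^{**}=g$, which is valid for continuous convex $g$ on $\rbb^d$.

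For the Bregman duality identity, I would exploit that strong convexity plus Fr\'{e}chet differentiability makes $\nabla g$ a bijection from $\wcal$ onto its range with inverse $\nabla g^*$, together with the Fenchel equality $g(u)+g^*(\nabla g(u))=\inn{u,\nabla g(u)}$ valid for every $u\in\wcal$. Writing out
$$D_{g^*}(\nabla g(\tilde w),\nabla g(w))=g^*(\nabla g(\tilde w))-g^*(\nabla g(w))-\inn{\nabla g(\tilde w)-\nabla g(w),\,\nabla g^*(\nabla g(w))},$$
substituting $\nabla g^*(\nabla g(w))=w$ and the Fenchel equality for both $u=w$ and $u=\tilde w$, and cancelling the cross terms $\inn{w,\nabla g(w)}$ reduces the right-hand side to $g(w)-g(\tilde w)-\inn{w-\tilde w,\nabla g(\tilde w)}=D_g(w,\tilde w)$. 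This is a short algebraic manipulation once the Fenchel equality and the inversion identity are in place.

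The hard part will be the forward direction of the first claim: producing Fr\'{e}chet differentiability of $g^*$ together with the sharp Lipschitz constant $\beta^{-1}$ on $\nabla g^*$. Care is needed because the dual norm $\|\cdot\|_*$, rather than $\|\cdot\|$, must appear on the right of the Lipschitz bound, and one must verify that the argument goes through for an arbitrary norm on $\rbb^d$ rather than the Euclidean one. Once this Lipschitz bound is established, everything else is routine; since both assertions appear in \citep{borwein2010convex,kakade2012regularization}, the proof can reasonably be kept brief by citing these references rather than reproducing the full arguments.
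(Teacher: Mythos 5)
The paper itself gives no proof of this lemma; it simply cites \citep{borwein2010convex} and \citep{kakade2012regularization}, so your concluding suggestion to handle the lemma by citation is precisely what the authors do. Your computation for the Bregman identity \eqref{dualityBregman} (inversion identity $\nabla g^*(\nabla g(w))=w$, Fenchel equality at $w$ and $\tilde w$, cancellation) is correct, and so is the forward direction of the convexity/smoothness duality: uniqueness and existence of the minimizer $w_v$ of $g(\cdot)-\inn{\cdot,v}$, the identification $\nabla g^*(v)=w_v$, the strong-monotonicity estimate $\|w_v-w_{\tilde v}\|\le\beta^{-1}\|v-\tilde v\|_*$, and the integration along a segment all go through (with subgradients in place of $\nabla g$ since $g$ is not assumed differentiable in this part of the statement).

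The converse direction, however, is not obtained by the symmetry you describe. Applying the forward implication (``$h$ is $\beta$-strongly convex $\Rightarrow h^*$ is $\frac{1}{\beta}$-strongly smooth'') to $h=g^*$ and invoking $g^{**}=g$ yields ``$g^*$ is $\beta$-strongly convex $\Rightarrow g$ is $\frac{1}{\beta}$-strongly smooth,'' which is again an implication of the form \emph{strong convexity of a function implies strong smoothness of its conjugate}. It does not give what is needed, namely ``$g^*$ is $\frac{1}{\beta}$-strongly smooth $\Rightarrow g$ is $\beta$-strongly convex.'' Biconjugation interchanges $g$ and $g^*$ but does not reverse the roles of strong convexity and strong smoothness in the implication, so the converse must be argued separately (in \citep{kakade2012regularization} both directions are proved as distinct implications). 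Since the paper only cites the result, this does not affect the paper, but the symmetry argument as stated does not close the equivalence.
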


\section{Convergence in the Case of Positive Variances}\label{sec:generalconds}

In this section we prove Theorem \ref{thm:nece-suff} by deriving the necessary and sufficient condition from the following two propositions.

\subsection{Necessary condition for convergence}

The first proposition gives the necessity for the convergence of the OMD algorithm \eqref{online-mirror-descent}.

\begin{proposition}\label{lem:necessary}
Assume $\inf_{w\in\wcal} \ebb_Z \left[\|\nabla_w [f(w, Z)]\|_*\right]>0$ and that $F$ is strongly smooth.
Assume also that $\nabla \Psi$ satisfies the incremental condition (\ref{IncrePsi}) at infinity.
If $\lim_{t\to\infty} \ebb_{z_1, \ldots, z_{t-1}} [D_\Psi(w^*, w_t)]=0$ for some $w^*$ where $\nabla \Psi$ is continuous, then the step size sequence satisfies \eqref{nece-suff}.

Furthermore, if $\Psi$ is strongly smooth, then \eqref{lower-rate} holds with some constants $t_0\in\nbb$ and $\tilde{C}>0$.
\end{proposition}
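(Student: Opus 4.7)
My plan is to split the argument into three parts driven by the key identity (\ref{key-identity}): the necessity of $\lim\eta_t=0$, the necessity of $\sum\eta_t=\infty$, and the $\Omega(1/T)$ rate lower bound when $\Psi$ is strongly smooth. Throughout I set $a_t:=\ebb[D_\Psi(w^*,w_t)]$ and use the hypothesis $a_t\to 0$.

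To obtain $\lim\eta_t=0$, I first use strong convexity of $\Psi$ to pass from $a_t\to 0$ to $\ebb[\|w_t-w^*\|^2]\leq 2a_t/\sigma_\Psi\to 0$, hence $w_t\to w^*$ in probability, and continuity of $\nabla\Psi$ at $w^*$ gives $\nabla\Psi(w_t)\to\nabla\Psi(w^*)$ in probability. The incremental condition (\ref{IncrePsi}) yields $\ebb[\|\nabla\Psi(w_t)\|_*^2]\leq 2C_\Psi^2(1+\ebb\|w_t\|^2)=O(1)$, so $\{\nabla\Psi(w_t)\}$ is uniformly integrable and the in-probability limit upgrades to $L^1$ convergence. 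The triangle inequality and the iteration (\ref{online-mirror-descent}) then give
$$\eta_t\inf_{w}\ebb_Z[\|\nabla_w f(w,Z)\|_*]\leq\ebb[\|\nabla\Psi(w_{t+1})-\nabla\Psi(w_t)\|_*]\to 0,$$
and positivity of the infimum forces $\eta_t\to 0$.

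For $\sum\eta_t=\infty$, I use strong smoothness of $F$ (with $\nabla F(w^*)=0$) and strong convexity of $\Psi$ to obtain the pointwise bound $|\inn{w^*-w_t,\nabla F(w_t)}|\leq L_F\|w_t-w^*\|^2\leq c_1 D_\Psi(w^*,w_t)$ with $c_1:=2L_F/\sigma_\Psi$. Dropping the nonnegative term $D_\Psi(w_t,w_{t+1})$ in (\ref{key-identity}) and taking expectation yields $a_{t+1}\geq(1-c_1\eta_t)a_t$. Choosing $t_0$ with $c_1\eta_t\leq 1/2$ for $t\geq t_0$, if $\sum_t\eta_t<\infty$ then $\prod_{t\geq t_0}(1-c_1\eta_t)\geq\exp(-2c_1\sum_{t\geq t_0}\eta_t)>0$, so $a_T$ is bounded below by a positive constant, contradicting $a_T\to 0$. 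The edge case $a_{t_0}=0$ (i.e., $w_{t_0}=w^*$ a.s.) is handled by observing that one update with positive-variance noise displaces $w_{t_0+1}$ from $w^*$ with positive probability, giving $a_{t_0+1}>0$.

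For the rate lower bound, $L_\Psi$-strong smoothness of $\Psi$ combined with Lemma \ref{lem:convexity-smoothness-duality} produces $D_\Psi(w_t,w_{t+1})=D_{\Psi^*}(\nabla\Psi(w_{t+1}),\nabla\Psi(w_t))\geq\frac{\eta_t^2}{2L_\Psi}\|\nabla_wf(w_t,z_t)\|_*^2$, and Jensen's inequality applied conditionally on $w_t$ lower-bounds the expectation by $c_2\eta_t^2$ with $c_2:=(\inf_w\ebb_Z\|\nabla_wf(w,Z)\|_*)^2/(2L_\Psi)>0$. The key identity then delivers the two-sided recursion $a_{t+1}\geq(1-c_1\eta_t)a_t+c_2\eta_t^2$. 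To extract $\Omega(1/(T-t_0+1))$, I pick $t_0$ with $c_1\eta_t\leq 1/2$ for $t\geq t_0$ and, for $T$ large enough that $\sum_{t=t_0}^{T-1}\eta_t\geq 1/(2c_1)$, choose the largest $S\in[t_0,T-1]$ with $\sum_{t=S}^{T-1}\eta_t\leq 1/(2c_1)$; then $\sum_{t=S}^{T-1}\eta_t\in[1/(4c_1),1/(2c_1)]$, the tail product $\prod_{r=t+1}^{T-1}(1-c_1\eta_r)\geq e^{-1}$ for $t\geq S$, and Cauchy--Schwarz gives $\sum_{t=S}^{T-1}\eta_t^2\geq(\sum_{t=S}^{T-1}\eta_t)^2/(T-S)\geq 1/(16c_1^2(T-t_0+1))$. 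Iterating the recursion over $[S,T-1]$ then delivers $a_T\geq\tilde C/(T-t_0+1)$. The main obstacle is this sliding-window Cauchy--Schwarz step, which must yield a uniform $1/T$ bound for every step-size sequence with $\eta_t\to 0$; a secondary subtlety is the uniform-integrability argument in paragraph two, which is exactly why the incremental condition (\ref{IncrePsi}) is needed in addition to continuity of $\nabla\Psi$ at $w^*$.
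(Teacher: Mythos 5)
Your three-part decomposition mirrors the paper's proof, and all three parts are correct, but the technical routes in Parts 1 and 3 diverge from the paper in ways worth noting. In Part 1 you pass to $L^1$ convergence of $\nabla\Psi(w_t)$ via convergence in probability plus uniform integrability (the incremental condition gives $\ebb[\|\nabla\Psi(w_t)\|_*^2]=O(1)$, hence u.i., and Vitali upgrades the in-probability limit); the paper instead runs a direct $\varepsilon$-$\delta$ estimate, splitting on $\|w-w^*\|<\delta$ (use continuity) versus $\|w-w^*\|\geq\delta$ (use the incremental condition to get a pointwise bound $\|\nabla\Psi(w)-\nabla\Psi(w^*)\|_*\leq C_{\Psi,w^*,\delta}\|w-w^*\|$), then Cauchy--Schwarz. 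Both are sound; the paper's route avoids invoking Vitali but is otherwise equivalent. Part 2 is essentially identical (drop the nonnegative term, $1-x\geq e^{-2x}$, contradict $a_T\to 0$); your treatment of the edge case $a_{t_0}=0$ by noting that one noisy step moves $w$ off $w^*$ with positive probability is the same idea as the paper's contradiction argument, just phrased in one step forward rather than backward.

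Part 3 is where the method genuinely differs. Both arrive at the recursion $a_{t+1}\geq(1-c_1\eta_t)a_t+c_2\eta_t^2$ and iterate to $a_{T+1}\geq c_2\sum_{t=t_0}^{T}\eta_t^2\prod_{k=t+1}^{T}(1-c_1\eta_k)$. To extract $\Omega(1/T)$ from this, the paper applies Cauchy--Schwarz once on $[t_0,T]$ and then evaluates $\sum_{t=t_0}^{T} c_1\eta_t\prod_{k=t+1}^{T}(1-c_1\eta_k)=1-\prod_{k=t_0}^{T}(1-c_1\eta_k)\geq c_1\eta_{t_0}$ by a clean telescoping identity, giving a constant that depends only on $\eta_{t_0}$. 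Your sliding-window argument restricts to the maximal subinterval $[S,T-1]$ over which the partial products stay bounded below by $e^{-1}$, and applies Cauchy--Schwarz only there; this is more explicit about where the ``mass'' of $\eta_t^2$ lives. It works, but it has one small gap: with $c_1\eta_t\leq 1/2$ for $t\geq t_0$, the lower bound $\sum_{t=S}^{T-1}\eta_t\geq 1/(4c_1)$ is not guaranteed, since $\sum_{t=S}^{T-1}\eta_t > 1/(2c_1)-\eta_{S-1}$ could be arbitrarily small when $\eta_{S-1}$ is close to $1/(2c_1)$. The fix is to take $t_0$ large enough that $c_1\eta_t\leq 1/4$ (possible since $\eta_t\to 0$), which restores the claimed interval. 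The paper's telescoping argument sidesteps this issue entirely and produces a sharper explicit constant.
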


\begin{proof}
We first show $\lim_{t\to\infty}\eta_t=0$.
By the $\sigma_\Psi$-strong convexity of $\Psi$, we have $\|w^* - w_t\|^2 \leq \frac{2}{\sigma_\Psi} D_\Psi(w^*,w_t)$.
So the condition $\lim_{t\to\infty}\ebb_{z_1, \ldots, z_{t-1}} [D_\Psi(w^*, w_t)]=0$ implies $\lim_{t\to\infty} \ebb_{z_1, \ldots, z_{t-1}} [\|w^* - w_t\|^2]=0$.
Then we claim that
\begin{equation}\label{claimconv}
\lim_{t\to\infty} \ebb_{z_1, \ldots, z_{t-1}} [\|\nabla\Psi(w_t)-\nabla\Psi(w^*)\|_*]=0.
\end{equation}

To prove our claim, we use the continuity of $\nabla\Psi$ at $w^*$ and know that for any $\varepsilon >0$,
there exists some $0<\delta\leq 1$ such that $\|\nabla\Psi(w)-\nabla\Psi(w^*)\|_* < \varepsilon$ whenever $\|w-w^*\|<\delta$.

When $\|w-w^*\|\geq\delta$, we apply the incremental condition (\ref{IncrePsi}) and $\|w\| \leq \|w-w^*\| + \|w^*\|$ to find
$$ \|\nabla\Psi(w)-\nabla\Psi(w^*)\|_* \leq C_\Psi (1+\|w\|) + \|\nabla\Psi(w^*)\|_* \leq C_{\Psi, w^*, \delta} \|w-w^*\|, $$
where $C_{\Psi, w^*, \delta}$ is the constant given by
$$  C_{\Psi, w^*, \delta} = C_\Psi + \frac{C_\Psi + C_\Psi \|w^*\| + \|\nabla\Psi(w^*)\|_*}{\delta}. $$
Combining the above two cases, we know that
\begin{align*}
&\ebb_{z_1, \ldots, z_{t-1}} [\|\nabla\Psi(w_t)-\nabla\Psi(w^*)\|_*] \leq  \varepsilon + C_{\Psi, w^*, \delta} \ebb_{z_1, \ldots, z_{t-1}} [\|w_t-w^*\|].
\end{align*}
But $\lim_{t\to\infty} \ebb_{z_1, \ldots, z_{t-1}} [\|w^* - w_t\|^2]=0$ ensures the existence of some $t_{\varepsilon, \delta} \in \nbb$ such that for $t>t_{\varepsilon, \delta}$, there holds
$\ebb_{z_1, \ldots, z_{t-1}} [\|w_t-w^*\|^2] < \frac{\varepsilon^2}{C^2_{\Psi, w^*, \delta}}$ which implies
$\ebb_{z_1, \ldots, z_{t-1}} [\|w_t-w^*\|] < \frac{\varepsilon}{C_{\Psi, w^*, \delta}}$
by the Schwarz inequality. So we have $\ebb_{z_1, \ldots, z_{t-1}} [\|\nabla\Psi(w_t)-\nabla\Psi(w^*)\|_*] < 2 \varepsilon$ for $t>t_{\varepsilon, \delta}$, which verifies our claim (\ref{claimconv}).

Denote $\sigma = \inf_{w\in\wcal} \ebb_Z \left[\|\nabla_w [f(w, Z)]\|_*\right]>0$. From the iteration relation (\ref{online-mirror-descent}) of the OMD algorithm, we have
$\eta_t \|\nabla_w [f(w_t, z_t)]\|_* =  \|\nabla\Psi(w_t)- \nabla\Psi(w_{t+1})\|_*$. Taking expectations on both sides with respect to $z_t$ yields
$$\eta_t \sigma \leq  \eta_t \ebb_{z_t} \left[\|\nabla_w [f(w_t, z_t)]\|_*\right] \leq \|\nabla\Psi(w_t)- \nabla\Psi(w^*)\|_* + \ebb_{z_t} \left[\|\nabla\Psi(w_{t+1})- \nabla\Psi(w^*)\|_*\right] $$
and
$$\eta_t \sigma \leq  \ebb_{z_1, \ldots, z_{t-1}} [\|\nabla\Psi(w_t)-\nabla\Psi(w^*)\|_*] + \ebb_{z_1, \ldots, z_{t}} [\|\nabla\Psi(w_{t+1})-\nabla\Psi(w^*)\|_*]. $$
Hence (\ref{claimconv}) confirms our first limit $\lim_{t\to\infty}\eta_t=0$.

We now show $\sum_{t=1}^{\infty}\eta_t=\infty$. Assume that $F$ is $L_F$-strongly smooth for some $L_F>0$.
From the identity (\ref{Bregmansum}) and the optimality condition $\nabla F(w^*)=0$, we have
\begin{align*}
D_F(w^*,w_t)+D_F(w_t,w^*) = - \inn{w^*-w_t,\nabla F(w_t)}.
\end{align*}
This is bounded by $L_F\|w^*-w_t\|^2$ by the $L_F$-strong smoothness of $F$. But the $\sigma_\Psi$-strong convexity of $\Psi$ implies $D_\Psi(w^*,w_t) \geq \frac{\sigma_\Psi}{2} \|w^*-w_t\|^2$. Hence
$$
  \inn{w^*-w_t,\nabla F(w_t)}\geq -L_F\|w^*-w_t\|^2\geq -\frac{2L_F}{\sigma_\Psi}D_\Psi(w^*,w_t).
$$
Plugging this inequality into \eqref{key-identity} and taking expectations on both sides give
\begin{equation}\label{necessary-3}
\ebb_{z_1, \ldots, z_{t}} [D_\Psi(w^*,w_{t+1})] \geq (1-a\eta_t)\ebb_{z_1, \ldots, z_{t-1}}[D_\Psi(w^*,w_t)] +\ebb_{z_1, \ldots, z_{t}}[D_\Psi(w_t,w_{t+1})],
\end{equation}
where $a$ is the constant $a=2L_F\sigma_\Psi^{-1}$.

Since $\lim_{t\to\infty}\eta_t=0$, we can find some integer $t_0\in\nbb$ such that $\eta_t\leq(3a)^{-1}$ for $t\geq t_0$. Applying the elementary inequality $1-\eta\geq\exp(-2\eta)$ valid for $\eta\in(0,1/3]$, we know by noting $\ebb_{z_1, \ldots, z_{t}}[D_\Psi(w_t,w_{t+1})] \geq 0$ in (\ref{necessary-3}) that
\begin{equation}\label{necessary-31/2}
\ebb_{z_1, \ldots, z_{t}} [D_\Psi(w^*,w_{t+1})] \geq  \exp(-2a\eta_t) \ebb_{z_1, \ldots, z_{t-1}}[D_\Psi(w^*,w_t)], \qquad \forall t\geq t_0.
\end{equation}
Applying this inequality iteratively for $t=T,\ldots,t_0+1$ then yields
\begin{align}
\ebb_{z_1, \ldots, z_{T}}[D_\Psi(w^*,w_{T+1})] &\geq \prod_{t=t_0+1}^{T}\exp(-2a\eta_t)\ebb_{z_1, \ldots, z_{t_0}}[D_\Psi(w^*,w_{t_0+1})] \notag \\
&=\exp\Big(-2a\sum_{t=t_0+1}^{T}\eta_t\Big)\ebb_{z_1, \ldots, z_{t_0}}[D_\Psi(w^*,w_{t_0+1})]. \label{necessary-4}
\end{align}
We claim that $\ebb_{z_1, \ldots, z_{t_0}}[D_\Psi(w^*,w_{t_0+1})]>0$. Otherwise, we would have
$$\ebb_{z_1, \ldots, z_{t_0 -1}}[D_\Psi(w^*,w_{t_0})] =\ebb_{z_1, \ldots, z_{t_0}}[D_\Psi(w^*,w_{t_0+1})]=0$$ by (\ref{necessary-31/2}), leading to $\ebb_{z_1, \ldots, z_{t_0 -1}}[\|w^* -w_{t_0}\|^2] =\ebb_{z_1, \ldots, z_{t_0}}[\|w^* -w_{t_0+1}\|^2]=0$ according to the strong convexity of $\Psi$. This would imply $w_{t_0+1}=w_{t_0} =w^*$ almost surely and thereby
$\nabla_w [f(w^*, z_{t_0})] =0$ almost surely by (\ref{online-mirror-descent}), leading to $\ebb_Z \left[\|\nabla_w [f(w^*, Z)]\|_*\right]=0$, a contradiction to the assumption $\inf_{w\in\wcal} \ebb_Z \left[\|\nabla_w [f(w, Z)]\|_*\right]>0$.

By $\ebb_{z_1, \ldots, z_{t_0}}[D_\Psi(w^*,w_{t_0+1})]>0$ and
the limit $\lim_{T\to\infty}\ebb_{z_1, \ldots, z_{T}}[D_\Psi(w^*,w_{T+1})]=0$, we see from \eqref{necessary-4} that $\sum_{t=1}^{\infty}\eta_t=\infty$. This proves the necessary condition for the convergence of the OMD algorithm.

We now prove \eqref{lower-rate} under the $L_\Psi$-strong smoothness of $\Psi$ for some $L_\Psi>0$.
Since $\Psi$ is $\sigma_\Psi$-strongly convex and $L_\Psi$-strongly smooth with respect to $\|\cdot\|$,
we know from Lemma \ref{lem:convexity-smoothness-duality} that $\Psi^*$ is $\sigma_\Psi^{-1}$-strongly smooth
and $L_\Psi^{-1}$-strongly convex with respect to $\|\cdot\|_*$ (note $\Psi^{**}=\Psi$ since $\Psi$ is convex).
We also know from Lemma \ref{lem:convexity-smoothness-duality} that the duality relation (\ref{dualityBregman}) between Bregman distances holds for $g=\Psi$, which yields
$$
  D_\Psi(w_t,w_{t+1})=D_{\Psi^*}(\nabla\Psi(w_{t+1}),\nabla\Psi(w_t)),\quad\forall t\in\nbb.
$$
Combining this with the $L_\Psi^{-1}$-strong convexity of $\Psi^*$ and (\ref{necessary-3}), we know from the bound $\eta_t\leq (3a)^{-1}$ that for $t\geq t_0$,
\begin{multline*}
\ebb_{z_1, \ldots, z_{t}} [D_\Psi(w^*,w_{t+1})] \geq (1-a\eta_t)\ebb_{z_1, \ldots, z_{t-1}} [D_\Psi(w^*,w_t)] \\ + (2L_\Psi)^{-1}\ebb_{z_1, \ldots, z_{t}} \big[\|\nabla\Psi(w_t)-\nabla\Psi(w_{t+1})\|_*^2\big].
\end{multline*}
But $\nabla\Psi(w_t)-\nabla\Psi(w_{t+1}) = \eta_t\nabla_w [f(w_t, z_t)]$ by the definition (\ref{online-mirror-descent}) of the OMD algorithm. So for $t\geq t_0$ we have
\begin{multline*}
\ebb_{z_1, \ldots, z_{t}} [D_\Psi(w^*,w_{t+1})] \geq (1-a\eta_t)\ebb_{z_1, \ldots, z_{t-1}} [D_\Psi(w^*,w_t)] \\
+ (2L_\Psi)^{-1} \eta_t^2 \ebb_{z_1, \ldots, z_{t}} \big[\|\nabla_w [f(w_t, z_t)]\|_*^2\big].
\end{multline*}
By the Schwarz inequality,
$$\ebb_{z_1, \ldots, z_{t}} \big[\|\nabla_w [f(w_t, z_t)]\|_*\big] \leq \left\{\ebb_{z_1, \ldots, z_{t}} \big[\|\nabla_w [f(w_t, z_t)]\|_*^2\big]\right\}^{1/2}. $$
Hence
$$\ebb_{z_1, \ldots, z_{t}} \big[\|\nabla_w [f(w_t, z_t)]\|_*^2\big] \geq \left\{\ebb_{z_1, \ldots, z_{t}} \big[\|\nabla_w [f(w_t, z_t)]\|_*\big]\right\}^2
\geq \sigma^2 $$
and thereby
$$
\ebb_{z_1, \ldots, z_{t}} [D_\Psi(w^*,w_{t+1})] \geq (1-a\eta_t)\ebb_{z_1, \ldots, z_{t-1}} [D_\Psi(w^*,w_t)] + (2L_\Psi)^{-1} \eta_t^2 \sigma^2,\quad\forall t\geq t_0.
$$
Applying this inequality iteratively from $t=T\geq t_0$ to $t=t_0$ yields (denote $\prod_{k=T+1}^{T}(1-a\eta_k)=1$)
\begin{align*}
&\ebb_{z_1, \ldots, z_{T}} [D_\Psi(w^*,w_{T+1})]\\
& \geq \ebb_{z_1, \ldots, z_{t_0 -1}} [D_\Psi(w^*,w_{t_0})] \prod_{t=t_0}^T (1- a\eta_t) + (2L_\Psi)^{-1} \sigma^2 \sum_{t=t_0}^{T} \eta_t^2 \prod_{k=t+1}^{T}(1-a \eta_k) \\
   & \geq (2L_\Psi)^{-1} \sigma^2 \sum_{t=t_0}^{T} \eta_t^2 \prod_{k=t+1}^{T}(1-a \eta_k).
\end{align*}
By the Schwarz inequality and the bound $0 < 1-a \eta_k \leq 1$ for $k\geq t_0$, we have
$$ \sum_{t=t_0}^{T} \eta_t \prod_{k=t+1}^{T}(1-a \eta_k)
\leq \left\{\sum_{t=t_0}^{T} \eta_t^2 \prod_{k=t+1}^{T}(1-a \eta_k)\right\}^{1/2} (T-t_0 +1)^{1/2}. $$
Hence
\begin{align*}
\sum_{t=t_0}^{T} \eta_t^2 \prod_{k=t+1}^{T}(1-a \eta_k) &\geq \frac{1}{a^2(T-t_0 +1)} \left(\sum_{t=t_0}^{T} a \eta_t \prod_{k=t+1}^{T}(1-a \eta_k)\right)^2 \\
&= \frac{1}{a^2(T-t_0 +1)} \left(\sum_{t=t_0}^{T} \bigl(1- (1-a \eta_t)\bigr) \prod_{k=t+1}^{T}(1-a \eta_k)\right)^2 \\
&= \frac{1}{a^2(T-t_0 +1)} \left(\sum_{t=t_0}^{T} \left[\prod_{k=t+1}^{T}(1-a \eta_k) - \prod_{k=t}^{T}(1-a \eta_k)\right]\right)^2 \\
&= \frac{1}{a^2(T-t_0 +1)} \left(1 - \prod_{k=t_0}^{T}(1-a \eta_k)\right)^2 \\
&\geq \frac{1}{a^2(T-t_0 +1)} \left(1 - (1-a \eta_{t_0})\right)^2 = \frac{\eta_{t_0}^2}{T-t_0 +1}.
\end{align*}
Therefore,
$$ \ebb_{z_1, \ldots, z_{T}} [D_\Psi(w^*,w_{T+1})] \geq \frac{\eta_{t_0}^2 (2L_\Psi)^{-1} \sigma^2}{T-t_0 +1}, \qquad \forall T \geq t_0. $$
This verifies \eqref{lower-rate} with $\tilde{C}=\eta_{t_0}^2 (2L_\Psi)^{-1} \sigma^2$ and completes the proof.
\end{proof}

\subsection{Sufficient condition for convergence}

We now turn to the second proposition giving the sufficiency for the convergence of the OMD \eqref{online-mirror-descent}.
We need the following lemma, to be proved in appendix by some ideas from \citep{ying2015unregularized}, which establishes the co-coercivity of gradients for convex functions enjoying some smoothness condition.

\begin{lemma}\label{lem:co-coercivity}
   Let $\alpha\in(0,1]$ and $g:\wcal\to\rbb$ be a Fr\'{e}chet differentiable and convex function. If there exists some constant $L>0$ such that
$$
    D_g(w,\tilde{w})\leq \frac{L}{1+\alpha}\|w-\tilde{w}\|^{1+\alpha},\quad\forall w,\tilde{w}\in\wcal,
$$
then we have
  \begin{equation}\label{sufficient-condition-b}
\frac{2L^{-\frac{1}{\alpha}}\alpha}{1+\alpha}\|\nabla g(w)-\nabla g(\tilde{w})\|_*^{\frac{1+\alpha}{\alpha}} \leq  \inn{w-\tilde{w},\nabla g(w)-\nabla g(\tilde{w})},\qquad\forall w,\tilde{w}\in\wcal.
  \end{equation}
\end{lemma}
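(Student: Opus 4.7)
The plan is to mimic the classical Baillon--Haddad co-coercivity argument, but with the Hölder-type smoothness exponent $1+\alpha$ in place of the usual quadratic case.

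First, I fix $\tilde{w}\in\wcal$ and introduce the auxiliary function $h(u)=g(u)-\langle u,\nabla g(\tilde{w})\rangle$. Since adding a linear term does not change the Bregman distance, $h$ is convex, Fréchet differentiable with $\nabla h(u)=\nabla g(u)-\nabla g(\tilde{w})$, still satisfies $D_h(u,v)\leq\frac{L}{1+\alpha}\|u-v\|^{1+\alpha}$, and attains its global minimum at $u=\tilde{w}$ because $\nabla h(\tilde{w})=0$. Writing out the smoothness inequality as
\[
h(v)\leq h(u)+\langle v-u,\nabla h(u)\rangle+\frac{L}{1+\alpha}\|v-u\|^{1+\alpha}, \qquad \forall u,v\in\wcal,
\]
and using $h(\tilde{w})\leq h(v)$, I take the infimum of the right-hand side over $v$.

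The second step is this minimization. Setting $\delta=v-u$ and restricting to $\delta$ aligned with $-\nabla h(u)$ reduces the problem to a one-dimensional optimization of $-r\|\nabla h(u)\|_*+\frac{L}{1+\alpha}r^{1+\alpha}$ over $r\geq 0$, whose minimum is attained at $r=(L^{-1}\|\nabla h(u)\|_*)^{1/\alpha}$, giving the value $-\frac{\alpha}{1+\alpha}L^{-1/\alpha}\|\nabla h(u)\|_*^{(1+\alpha)/\alpha}$. (Equivalently, this is just the Fenchel conjugate of $\frac{L}{1+\alpha}\|\cdot\|^{1+\alpha}$.) Substituting back in terms of $g$ and recognizing $h(u)-h(\tilde{w})=D_g(u,\tilde{w})$ yields the one-sided bound
\[
\frac{\alpha}{1+\alpha}L^{-1/\alpha}\|\nabla g(u)-\nabla g(\tilde{w})\|_*^{(1+\alpha)/\alpha}\leq D_g(u,\tilde{w}).
\]

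Finally, I apply the same argument with the roles of $u$ and $\tilde{w}$ swapped to obtain the analogous bound with $D_g(\tilde{w},u)$, then add the two inequalities and invoke the Bregman identity \eqref{Bregmansum} with $g$ in place of $\Psi$ to replace $D_g(u,\tilde{w})+D_g(\tilde{w},u)$ by $\langle u-\tilde{w},\nabla g(u)-\nabla g(\tilde{w})\rangle$. Renaming $u=w$ gives exactly \eqref{sufficient-condition-b}.

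The main obstacle, modest but worth being careful about, is the exponent bookkeeping in the Fenchel-conjugate calculation: the $-1/\alpha$ power on $L$ and the conjugate exponent $(1+\alpha)/\alpha$ must be tracked cleanly, and one should verify that the unconstrained minimizer over $\delta\in\wcal$ (not just aligned $\delta$) indeed coincides with the one-dimensional minimum, which holds because $\langle\delta,q\rangle\geq-\|\delta\|\|q\|_*$ with equality attainable in any Banach space on picking $\delta$ in the direction of the dual of $q$. Everything else is a direct rearrangement.
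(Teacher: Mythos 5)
Your proposal is correct and follows essentially the same route as the paper: introduce the auxiliary function $h$ shifted by the linear term so that the target point is a global minimizer, apply the smoothness inequality, infimize over the free variable via the Fenchel conjugate of $\frac{L}{1+\alpha}\|\cdot\|^{1+\alpha}$ (the paper isolates this computation as a separate lemma, while you do the one-dimensional reduction inline, but they are the same calculation), then symmetrize and invoke the Bregman sum identity. The labeling of which point is fixed and which varies is swapped relative to the paper, but the argument is the same.
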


\begin{proposition}\label{lem:sufficient}
Assume that for some constant $L>0$, $f(\cdot, z)$ is $L$-strongly smooth for almost every $z\in Z$.
Suppose that the pair $(\Psi, F)$ satisfies (\ref{convexcontrol}) around $w^*$ with a convex function $\Omega: [0, \infty) \to \rbb_+$ satisfying $\Omega (0)=0$ and $\Omega (u)>0$ for $u>0$. If the step size sequence satisfies \eqref{nece-suff}, then $\lim_{t\to\infty}\ebb_{z_1, \ldots, z_{t-1}} [D_\Psi(w^*,w_t)]=0$.

Furthermore, if \eqref{strong-convexity-assumption} holds with some $\sigma_F>0$ and
    the step size takes the form $\eta_t=\frac{4}{(t+1)\sigma_F}$,
    then \eqref{one-over-T} holds.
\end{proposition}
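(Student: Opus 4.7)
The plan is to transform the identity \eqref{key-identity} into a one-step recursive upper bound for the sequence $a_t := \ebb_{z_1,\ldots,z_{t-1}}[D_\Psi(w^*,w_t)]$, following the roadmap of Section \ref{sec:idea}, and then extract both the qualitative convergence $a_t\to 0$ and the quantitative rate $O(1/T)$ from that recursion.

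First I would control the noise term $\ebb_{z_t}[D_\Psi(w_t,w_{t+1})]$ appearing in \eqref{key-identity}. By the duality \eqref{dualityBregman} applied with $g=\Psi$ and the $\sigma_\Psi^{-1}$-strong smoothness of $\Psi^*$ supplied by Lemma \ref{lem:convexity-smoothness-duality}, together with the update rule \eqref{online-mirror-descent},
$$
D_\Psi(w_t,w_{t+1})=D_{\Psi^*}(\nabla\Psi(w_{t+1}),\nabla\Psi(w_t))\leq \frac{\eta_t^2}{2\sigma_\Psi}\|\nabla_w f(w_t,z_t)\|_*^2.
$$
Splitting $\nabla_w f(w_t,z_t)=\bigl(\nabla_w f(w_t,z_t)-\nabla_w f(w^*,z_t)\bigr)+\nabla_w f(w^*,z_t)$, using $(a+b)^2\le 2a^2+2b^2$, and invoking Lemma \ref{lem:co-coercivity} with $\alpha=1$ for the $L$-strongly smooth $f(\cdot,z_t)$, I obtain after taking $\ebb_{z_t}$ and using $\nabla F(w^*)=0$
$$
\ebb_{z_t}\bigl[\|\nabla_w f(w_t,z_t)\|_*^2\bigr]\leq 2L\inn{w_t-w^*,\nabla F(w_t)}+2A,\qquad A:=\ebb_Z\bigl[\|\nabla_w f(w^*,Z)\|_*^2\bigr].
$$
Plugging these bounds into \eqref{key-identity}, rewriting $\inn{w^*-w_t,\nabla F(w_t)}=-\inn{w_t-w^*,\nabla F(w_t)-\nabla F(w^*)}$, and restricting to $t\geq t_0$ with $t_0$ the first index beyond which $\eta_t\leq \sigma_\Psi/(2L)$, yields
$$
\ebb_{z_t}[D_\Psi(w^*,w_{t+1})]\leq D_\Psi(w^*,w_t)-\tfrac{\eta_t}{2}\inn{w_t-w^*,\nabla F(w_t)-\nabla F(w^*)}+\tfrac{A\eta_t^2}{\sigma_\Psi}.
$$
The hypothesis \eqref{convexcontrol} now bounds the middle term below by $\Omega(D_\Psi(w^*,w_t))$; taking the full expectation and applying Jensen's inequality to the convex $\Omega$ produces the master recursion
$$
a_{t+1}\leq a_t-\tfrac{\eta_t}{2}\Omega(a_t)+C\eta_t^2,\qquad t\geq t_0,\qquad C:=A/\sigma_\Psi.
$$

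To derive $a_t\to 0$ from this recursion, I would first observe that a convex $\Omega$ on $[0,\infty)$ with $\Omega(0)=0$ has a nondecreasing right-derivative, hence is itself nondecreasing. Fix $\epsilon>0$ and use $\eta_t\to 0$ to choose $T_\epsilon\geq t_0$ so large that both $C\eta_t\leq \Omega(\epsilon)/4$ and $C\eta_t^2\leq \epsilon$ hold for $t\geq T_\epsilon$. Whenever $a_t\geq\epsilon$ the recursion then gives the strict decrement $a_{t+1}\leq a_t-\eta_t\Omega(\epsilon)/4$; combined with $\sum_t\eta_t=\infty$ this rules out $a_t\geq\epsilon$ being eventually true, so $a_{t^*}<\epsilon$ at some $t^*\geq T_\epsilon$. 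From $t^*$ onward the bound $a_{t+1}\leq a_t+C\eta_t^2\leq 2\epsilon$ (when $a_t\leq\epsilon$) and the strict decrease (when $\epsilon<a_t\leq 2\epsilon$) trap the sequence in $[0,2\epsilon]$; since $\epsilon$ was arbitrary, $a_t\to 0$. For the rate, \eqref{strong-convexity-assumption} lets us take the linear $\Omega(u)=\sigma_F u$, so the recursion specializes to $a_{t+1}\leq(1-\sigma_F\eta_t/2)a_t+C\eta_t^2$; substituting $\eta_t=4/((t+1)\sigma_F)$ makes the contraction factor exactly $(t-1)/(t+1)$, and multiplying through by $t(t+1)$ telescopes the inequality to $(T-1)T\,a_T\leq (t_0-1)t_0\,a_{t_0}+16C(T-t_0)/\sigma_F^2$, whence $a_T=O(1/T)$.

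The main obstacle is the qualitative convergence argument in the general, nonlinear-$\Omega$ setting: the quadratic perturbation $C\eta_t^2$ is not assumed summable, so a direct appeal to a Robbins-Siegmund style supermartingale lemma is unavailable, and one has to exploit both step-size conditions simultaneously through the above case split at the threshold $\epsilon$, using the monotonicity of $\Omega$ to translate the pointwise positivity $\Omega(u)>0$ for $u>0$ into a uniform per-step decrease outside any neighborhood of $0$. Once that qualitative convergence is in hand, the $O(1/T)$ rate under \eqref{strong-convexity-assumption} is a standard telescoping of the scaled recursion.
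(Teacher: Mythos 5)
Your proposal is correct and follows essentially the same route as the paper: the same key identity, the same duality-plus-strong-smoothness bound on the noise term, the same gradient split combined with Lemma \ref{lem:co-coercivity}, the same Jensen step to obtain the master recursion $a_{t+1}\leq a_t-\frac{\eta_t}{2}\Omega(a_t)+C\eta_t^2$, a threshold-based two-phase argument for the qualitative limit, and the same telescoping for the $O(1/T)$ rate. The only cosmetic difference is in the threshold step: you invoke monotonicity of $\Omega$ to get $\Omega(a_t)\geq\Omega(\epsilon)$ when $a_t\geq\epsilon$, whereas the paper derives the slightly sharper linear lower bound $\Omega(u)\geq\frac{\Omega(\gamma)}{\gamma}u$ directly from convexity and $\Omega(0)=0$; both suffice for the same conclusion.
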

\begin{proof}
According to the key identity \eqref{key-identity} for the one-step progress of the OMD algorithm
and the duality relation \eqref{dualityBregman} of the Bregman distances, we have
\begin{equation}\label{sufficient-extra-first}
  \ebb_{z_t} [D_\Psi(w^*,w_{t+1})]-D_\Psi(w^*,w_t) = \eta_t\inn{w^*-w_t, \nabla F(w_t)} +
  \ebb_{z_t} \big[D_{\Psi^*}(\nabla\Psi(w_{t+1}),\nabla\Psi(w_t))\big].
\end{equation}
By Lemma \ref{lem:convexity-smoothness-duality}, the $\sigma_\Psi$-strong convexity of $\Psi$ implies the $\sigma_\Psi^{-1}$-strong smoothness of $\Psi^*$. It follows from the definition (\ref{online-mirror-descent}) of the OMD algorithm that
\begin{align}
\ebb_{z_t} \big[D_{\Psi^*}(\nabla\Psi(w_{t+1}),\nabla\Psi(w_t))\big] &\leq \frac{1}{2\sigma_\Psi} \ebb_{z_t}\big[\|\nabla\Psi(w_{t+1})-\nabla\Psi(w_t)\|_*^2\big] \notag\\
&= \frac{\eta_t^2}{2\sigma_\Psi} \ebb_{z_t}\big[\|\nabla_w [f(w_t, z_t)]\|_*^2\big].\label{sufficient-extra}
\end{align}
We bound $\big[\|\nabla_w [f(w_t, z_t)]\|_*^2\big]$ by $2\big[\|\nabla_w [f(w_t, z_t)] - \nabla_w [f(w^*, z_t)]\|_*^2\big]
+ 2 \big[\|\nabla_w [f(w^*, z_t)]\|_*^2\big]$. Then we apply Lemma \ref{lem:co-coercivity} with $w=w^*, \tilde{w}=w_t, g=f(\cdot, z_t)$ and $\alpha=1$.
By the $L$-strong smoothness of $f(\cdot, z)$, we know that
\begin{align*}
  \ebb_{z_t}\Big[\|\nabla_w [f(w_t, z_t)] - \nabla_w [f(w^*, z_t)]\|_*^2\Big]
  &\leq L\ebb_{z_t}\Big[\big\langle w_t - w^*, \nabla_w[f(w_t,z_t)] - \nabla_w[f(w^*,z_t)]\big\rangle\Big] \\
  & = L \inn{w^*-w_t, \nabla F(w^*) - \nabla F(w_t)}.
\end{align*}
Then we have
\begin{multline*}
\ebb_{z_t} [D_\Psi(w^*,w_{t+1})]-D_\Psi(w^*,w_t)
\leq \\
- \left(1- \frac{L\eta_t}{\sigma_\Psi}\right) \eta_t\inn{w^*-w_t, \nabla F(w^*) - \nabla F(w_t)}  +
\frac{\eta_t^2}{\sigma_\Psi} \ebb_{z_t} \big[\|\nabla_w [f(w^*, z_t)]\|_*^2\big].
\end{multline*}

Since $\lim_{t\to\infty} \eta_t =0$, there exists some $t_1 \in\nbb$ such that $\frac{L}{\sigma_\Psi}\eta_t \leq \frac{1}{2}$ for $t\geq t_1$ which implies
\begin{multline}\label{localestimate}
 \ebb_{z_t} [D_\Psi(w^*,w_{t+1})]-D_\Psi(w^*,w_t)
\leq \\
- \frac{\eta_t}{2} \inn{w^*-w_t, \nabla F(w^*) - \nabla F(w_t)} + \frac{\eta_t^2}{\sigma_\Psi} \ebb_{z_t} \big[\|\nabla_w [f(w^*, z_t)]\|_*^2\big].
\end{multline}
Now we apply the relation (\ref{convexcontrol}) on the convexity to obtain
\begin{equation}\label{convexcontrol-used}
 -\inn{w^*-w_t, \nabla F(w^*)-\nabla F(w_t)} \leq - \Omega \left(D_\Psi(w^*, w_t)\right).
\end{equation}
It follows that
$$
\ebb_{z_t} [D_\Psi(w^*,w_{t+1})] \leq D_\Psi(w^*,w_t)
- \frac{\eta_t}{2} \Omega \left(D_\Psi(w^*, w_t)\right) +
b \eta_t^2,
$$
where $b$ is the constant $b = \frac{1}{\sigma_\Psi} \ebb_{Z} \big[\|\nabla_w [f(w^*, Z)]\|_*^2\big]$.
Since $\Omega$ is convex, by Jensen's inequality, we have
$$\Omega \left(\ebb_{z_1, \ldots, z_{t-1}} [D_\Psi(w^*, w_t)]\right) \leq
\ebb_{z_1, \ldots, z_{t-1}} \left[\Omega \left(D_\Psi(w^*, w_t)\right)\right].
$$
Therefore, by taking expectations over $z_1, \ldots, z_{t-1}$ and denoting a sequence $\{A_t\}_t$ by
$$A_t = \ebb_{z_1, \ldots, z_{t-1}} \left[D_\Psi(w^*, w_t)\right], $$
we have
\begin{equation}\label{sufficient-4}
A_{t+1} \leq A_t
- \frac{\eta_t}{2} \Omega \left(A_t\right) +
b \eta_t^2, \qquad \forall t\geq t_1.
\end{equation}

To prove $\lim_{t\to\infty} A_t =0$, we let $0<\gamma <1$ be an arbitrarily chosen number.
The convexity of $\Omega: [0, \infty) \to \rbb_+$ tells us that for $u\geq \gamma$, there holds
$$ \Omega (\gamma) = \Omega\left((1-\frac{\gamma}{u}) \cdot 0 + \frac{\gamma}{u} u\right) \leq
(1-\frac{\gamma}{u}) \Omega\left(0\right) + \frac{\gamma}{u} \Omega (u) = \frac{\gamma}{u} \Omega (u) $$
which yields
\begin{equation}\label{sufficient-newmore}
\Omega (u) \geq \frac{\Omega (\gamma)}{\gamma} u, \qquad \forall u \geq \gamma.
\end{equation}
Since $\lim_{t\to\infty}\eta_t=0$, we know that there exists some integer $t_\gamma \geq t_1$ such that
\begin{equation}\label{sufficient-3}
  \eta_t \leq \min\left\{\frac{\Omega (\gamma)}{4 b}, \frac{\Omega (\gamma)}{4 \gamma b}, \sqrt{\gamma}\right\}, \qquad\forall t\geq t_\gamma.
\end{equation}

We claim that
\begin{equation}\label{claiminfty}
\sup\left\{t \in \nbb : A_t \leq \gamma\right\} =\infty.
\end{equation}
If (\ref{claiminfty}) is not true, we can find some $t'_\gamma \geq t_\gamma$ such that
$$ A_t > \gamma, \qquad \forall t \geq t'_\gamma. $$
Combining this with (\ref{sufficient-newmore}), (\ref{sufficient-3}) and (\ref{sufficient-4}) tells us that for $t\geq t'_\gamma$,
$$ A_{t+1} \leq A_t - \eta_t \frac{\Omega (\gamma)}{2 \gamma} A_t  +  b \eta_t^2
\leq A_t -\frac{\Omega (\gamma)}{2 \gamma} \eta_t A_t  + \frac{\Omega (\gamma)}{4 \gamma} \eta_t  A_t
= A_t -\frac{\Omega (\gamma)}{4\gamma} \eta_t A_t \leq A_t
-\frac{\Omega (\gamma)}{4} \eta_t, $$
which implies by iteration
$$ A_{t+1} \leq A_{t'_\gamma} -  \frac{\Omega (\gamma)}{4} \sum_{k=t'_\gamma}^t \eta_k \to -\infty \; (\hbox{as} \ t \to \infty). $$
This is a contradiction, which verifies our claim (\ref{claiminfty}).

By (\ref{claiminfty}) there exists some positive integer $t''_\gamma > t_\gamma$ such that
$A_{t''_\gamma} \leq \gamma$. We now show by induction that
\begin{equation}\label{sufficient-7}
A_t \leq  \gamma + b \max_{t''_\gamma \leq \ell \leq t-1}\eta_\ell^2, \qquad\forall t\geq t''_\gamma.
\end{equation}
The case $t=t''_\gamma$ is true (where we denote $\max_{t''_\gamma \leq \ell \leq t''_\gamma-1}\eta_\ell^2 =0$) since $A_{t''_\gamma} \leq \gamma$.
Supposes the statement \eqref{sufficient-7} holds for $t=k \geq t''_\gamma$. Note that $t''_\gamma > t_\gamma$ and $\gamma <1$.
To prove the statement for $t=k+1$, we discuss in two cases.
If $A_k \leq\gamma$, we see directly from (\ref{sufficient-4}) that
$$ A_{k+1} \leq \gamma +  b \eta_k^2 \leq \gamma + b \max_{t''_\gamma \leq \ell \leq k}\eta_\ell^2.
$$
If $A_k > \gamma$, we apply (\ref{sufficient-newmore}), (\ref{sufficient-3}) and (\ref{sufficient-4}) again and find
$$ A_{k+1} \leq A_k - \eta_k \frac{\Omega (\gamma)}{2 \gamma} A_k  +  b \eta_k^2
\leq A_k -\frac{\Omega (\gamma)}{4\gamma} \eta_k A_k
\leq A_k \leq \gamma + b \max_{t''_\gamma \leq \ell \leq k-1} \eta_\ell^2, $$
where we have used the induction hypothesis in the last inequality. This verifies the statement \eqref{sufficient-7} for $t=k +1$ and completes the induction procedure.

Applying (\ref{sufficient-3}), \eqref{sufficient-7} and noting $t''_\gamma > t_\gamma$, we know that
$$ A_t \leq  (1 + b) \gamma,\qquad\forall t\geq t''_\gamma. $$
Since $\gamma$ is an arbitrary number on $(0, 1)$, this proves
$$\lim_{t\to\infty} A_t =\lim_{t\to\infty} \ebb_{z_1, \ldots, z_{t-1}} \left[D_\Psi(w^*, w_t)\right]=0.$$

We now prove \eqref{one-over-T} under condition \eqref{strong-convexity-assumption} and the choice
$\eta_t=\frac{4}{(t+1)\sigma_F}$ of the step size sequence. Here $\Omega (u) = \sigma_F u$ and the estimate \eqref{sufficient-4} becomes
$$
A_{t+1} \leq A_t - \frac{2}{t+1} A_t +  \frac{16 b}{(t+1)^2\sigma_F^2}, \qquad \forall t \geq t_1.
$$
It follows that
$$
t (t+1) A_{t+1} \leq (t-1)t A_t + \frac{16 b}{\sigma_F^2}, \qquad\forall t\geq t_1.
$$
Applying this relation iteratively, we obtain
$$
(T-1) T A_{T} \leq (t_1-1)t_1 A_{t_1} + \frac{16 b (T- t_1)}{\sigma_F^2}, \qquad\forall T\geq t_1,
$$
from which we see
$$
\ebb_{z_1, \ldots, z_{T-1}} [D_\Psi(w^*, w_{T})] \leq \frac{(t_1-1)t_1 \ebb_{z_1, \ldots, z_{t_1 -1}} [D_\Psi(w^*, w_{t_1})]}{(T-1) T} + \frac{16b}{T\sigma_F^2},\qquad \forall T\geq t_1.
$$
This yields \eqref{one-over-T}. The proof is complete.
\end{proof}

\section{Convergence in the Case of Zero Variances and Almost Sure Convergence}\label{sec:zerovar}

In this section we prove Theorem \ref{thm:nece-suff-without-variance} for the convergence in the case of zero variances and Theorem \ref{thm:ae-sufficient} for the almost sure convergence.

\begin{proof}[Proof of Theorem \ref{thm:nece-suff-without-variance}]
Necessity. The assumption that $f(\cdot, z)$ is $L$-strongly smooth for almost every $z\in\zcal$ implies the $L$-strong smoothness of $F$.
We observe that the estimate (\ref{necessary-3}) derived in the proof of Proposition \ref{lem:necessary} is valid under the $L_F$-strong smoothness of $F$ and the $\sigma_\Psi$-strong convexity of $\Psi$. Hence
\begin{equation}\label{linear-1}
\ebb_{z_1, \ldots, z_{t}} [D_\Psi(w^*,w_{t+1})]  \geq (1-2L\sigma_\Psi^{-1}\eta_t) \ebb_{z_1, \ldots, z_{t-1}}[D_\Psi(w^*,w_t)].
\end{equation}

We now need the assumption $0<\eta_t\leq\frac{\sigma_\Psi}{(2+\kappa)L}$ with $\kappa>0$ on the step size sequence. Denote the constant $\tilde{a}=\frac{2+\kappa}{2}\log\frac{2+\kappa}{\kappa}$
and apply the elementary inequality (see e.g.,~\citep{lei2016analysisb})
$$1-x\geq \exp(-\tilde{a}x), \quad\forall 0< x \leq \frac{2}{2+\kappa}. $$
We know from \eqref{linear-1} that
$$
\ebb_{z_1, \ldots, z_{t}} [D_\Psi(w^*,w_{t+1})] \geq \exp\big(-2\tilde{a}L\sigma_\Psi^{-1}\eta_t\big)\ebb_{z_1, \ldots, z_{t-1}}[D_\Psi(w^*,w_t)].
$$
Applying this inequality iteratively for $t=1,\ldots,T$ then gives
\begin{align*}
\ebb_{z_1, \ldots, z_{T}} [D_\Psi(w^*,w_{T+1})] &\geq \prod_{t=1}^{T}\exp\big(-2\tilde{a}L\sigma_\Psi^{-1}\eta_t\big) D_\Psi(w^*, w_1) \\
&= \exp\left\{-2\tilde{a}L\sigma_\Psi^{-1} \sum_{t=1}^{T}\eta_t\right\} D_\Psi(w^*, w_1).
\end{align*}
From the assumption $w^*\neq w_1$, we have $D_\Psi(w^*,w_1)>0$. The convergence $\lim_{t\to\infty}\ebb_{z_1, \ldots, z_{t-1}} [D_\Psi(w^*,w_{t})]  =0$ then implies $\sum_{t=1}^{\infty}\eta_t=\infty$.

Sufficiency. Here we use the estimates (\ref{sufficient-4}) derived in the proof of Proposition \ref{lem:sufficient}.
But in our case of zero variances, $b = \frac{1}{\sigma_\Psi} \ebb_{Z} \big[\|\nabla_w [f(w^*, Z)]\|_*^2\big] =0$.
So (\ref{sufficient-4}) takes the form (note that we can choose $t_1=1$ in deriving \eqref{localestimate})
\begin{equation}\label{sufficient-4new}
A_{t+1} \leq A_t
- \frac{\eta_t}{2} \Omega \left(A_t\right), \qquad \forall t\in\nbb.
\end{equation}
This implies that for any $0<\gamma <1$, there must exist some integer $\tilde{t}_\gamma\in\nbb$ such that $A_{\tilde{t}_\gamma} \leq \gamma$, since otherwise $A_t > \gamma$ for every $t \in\nbb$, which by (\ref{sufficient-newmore}) and (\ref{sufficient-4new}) leads to a contradiction:
$$ A_{t+1} \leq A_t -\frac{\eta_t \Omega (\gamma)}{2 \gamma} A_t
\leq A_t -\frac{\eta_t}{2} \Omega (\gamma) \leq A_{\tilde{t}_\gamma} -  \frac{\Omega (\gamma)}{2} \sum_{k=\tilde{t}_\gamma}^t \eta_k \to -\infty \ (\hbox{as} \ t \to \infty). $$
But (\ref{sufficient-4new}) also tells us that the sequence $\{A_t\}_{t \in\nbb}$ of nonnegative numbers is decreasing. Hence $A_{\tilde{t}_\gamma} \leq \gamma$ for every $t \geq \tilde{t}_\gamma$. This proves the limit
$$\lim_{t\to\infty} \ebb_{z_1, \ldots, z_{t-1}} \left[D_\Psi(w^*, w_t)\right] = \lim_{t\to\infty} A_t =0.$$

We now turn to prove \eqref{linear-rate} under the special choice of the constant step size sequence $\eta_t \equiv \eta_1$. It follows from \eqref{linear-1} that
$A_{T+1}  \geq (1-2L\sigma_\Psi^{-1}\eta_1)^TA_1$.
Furthermore, under the assumption \eqref{strong-convexity-assumption}, we have $\Omega (u) = \sigma_F u$. So \eqref{sufficient-4new} translates to
$$A_{t+1} \leq (1-2^{-1}\eta_1\sigma_F)A_t, $$
from which we find $A_{T+1}\leq (1-2^{-1}\eta_1 \sigma_F)^TA_1$ by iteration. This verifies \eqref{linear-rate} and completes the proof of Theorem \ref{thm:nece-suff-without-variance}.
\end{proof}

The proof of Theorem \ref{thm:ae-sufficient} for the almost sure convergence is based on the following Doob's forward convergence theorem (see, e.g.,~\citep{Doob1994} on page 195).

\begin{lemma}\label{lem:super-martingale}
  Let $\{\tilde{X}_t\}_{t\in\nbb}$ be sequences of nonnegative random variables and let $\{\fcal_t\}_{t\in\nbb}$ be a sequence of random variable sets with $\fcal_t\subset\fcal_{t+1}$ for every $t\in\nbb$. Suppose that
 $\ebb[\tilde{X}_{t+1}|\fcal_t]\leq \tilde{X}_t$ almost surely for every $t\in\nbb$. Then the sequence $\{\tilde{X}_t\}$ converges to a nonnegative random variable $\tilde{X}$ almost surely.
\end{lemma}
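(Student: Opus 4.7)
The plan is to establish Doob's forward convergence theorem by the classical upcrossing-inequality argument. The implicit hypothesis is $\ebb[\tilde{X}_1]<\infty$; under it, taking total expectation in the supermartingale property and iterating give $\sup_{t\in\nbb}\ebb[\tilde{X}_t]\leq \ebb[\tilde{X}_1]<\infty$, so the sequence is bounded in $L^1$, with nonnegativity ensuring $\ebb[|\tilde{X}_t|]=\ebb[\tilde{X}_t]$ throughout.

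For rationals $0\leq a<b$ and each $T\in\nbb$, let $U_T([a,b])$ denote the number of upcrossings of the interval $[a,b]$ by $\tilde{X}_1,\ldots,\tilde{X}_T$, i.e.\ the largest $k$ for which there are indices $s_1<t_1<\cdots<s_k<t_k\leq T$ with $\tilde{X}_{s_i}\leq a$ and $\tilde{X}_{t_i}\geq b$. The central estimate is Doob's upcrossing inequality
$$
(b-a)\,\ebb\bigl[U_T([a,b])\bigr]\leq \ebb\bigl[(\tilde{X}_T-a)^-\bigr]\leq a,
$$
where $(\cdot)^-$ denotes the negative part and the final inequality uses $\tilde{X}_T\geq 0$ together with $a\geq 0$. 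I would prove this by constructing a predictable $\{0,1\}$-valued process $\{\xi_t\}$ that is switched on at each entry of $\tilde{X}$ below $a$ and switched off at the first subsequent exit above $b$; the underlying stopping times are $\fcal_t$-adapted, and one-step shifting gives $\xi_t\in\fcal_{t-1}$. The supermartingale transform $Y_t:=\sum_{s=1}^t\xi_s(\tilde{X}_s-\tilde{X}_{s-1})$ (with $\tilde{X}_0:=\tilde{X}_1$) is then itself a supermartingale of nonpositive mean. Each completed upcrossing contributes at least $b-a$ to $Y_T$ and any incomplete one contributes at least $-(\tilde{X}_T-a)^-$, so $Y_T\geq (b-a)U_T([a,b])-(\tilde{X}_T-a)^-$; taking expectations yields the displayed bound.

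Letting $T\to\infty$, the monotone convergence theorem gives $\ebb[U_\infty([a,b])]\leq a/(b-a)<\infty$, and hence $U_\infty([a,b])<\infty$ almost surely. A union of exceptional sets over the countable family of rational pairs $0\leq a<b$ shows that, almost surely, $U_\infty([a,b])$ is finite for every such pair. On any such sample path one cannot have $\liminf_{t\to\infty}\tilde{X}_t<\limsup_{t\to\infty}\tilde{X}_t$, since inserting rationals $a<b$ strictly between them would force infinitely many upcrossings. Thus $\tilde{X}:=\lim_{t\to\infty}\tilde{X}_t$ exists in $[0,\infty]$ almost surely, and Fatou's lemma yields $\ebb[\tilde{X}]\leq\liminf_{t\to\infty}\ebb[\tilde{X}_t]\leq\ebb[\tilde{X}_1]<\infty$, so $\tilde{X}$ is almost surely a finite nonnegative random variable.

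The main obstacle is the careful construction and analysis of the predictable indicator process $\{\xi_t\}$ used in the upcrossing inequality: one must define a recursive sequence of stopping times marking successive entries below $a$ and exits above $b$, verify that the resulting $\xi_t$ is $\fcal_{t-1}$-measurable, and confirm that the transform of a supermartingale by a bounded nonnegative predictable process is again a supermartingale. Once this lemma is in place, the remainder of the argument reduces to routine measure-theoretic bookkeeping via monotone convergence, Fatou's lemma, and countable unions of null sets.
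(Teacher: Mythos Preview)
Your proposal is correct and is precisely the classical upcrossing argument for Doob's supermartingale convergence theorem. However, the paper does not prove this lemma at all: it is stated as a known result and attributed to Doob's \emph{Measure Theory} (the paper explicitly calls it ``Doob's forward convergence theorem'' and gives a page reference), so there is no proof in the paper to compare against. Your write-up is thus a faithful reconstruction of the standard textbook proof the paper is citing, whereas the paper simply invokes the result as a black box. One small remark: your restriction to rationals $0\le a<b$ is harmless here because the variables are nonnegative, but in the contrapositive step you really only use pairs with $a>0$ strictly (when $a=0$ the bound $\ebb[U_T]\le a/(b-a)=0$ gives no information about paths with $\liminf=0$); since any gap $\liminf<\limsup$ in $[0,\infty]$ admits a rational $a>0$ strictly between them, the argument goes through unchanged.
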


\begin{proof}[Proof of Theorem \ref{thm:ae-sufficient}]
We follow the proof of Proposition \ref{lem:sufficient} and apply (\ref{localestimate}). Since $\inn{w^*-w_t, \nabla F(w^*) - \nabla F(w_t)}\geq 0$, (\ref{localestimate}) implies
\begin{equation}\label{localestimateas}
 \ebb_{z_t} [D_\Psi(w^*,w_{t+1})] \leq D_\Psi(w^*,w_t) + \frac{\eta_t^2}{\sigma_\Psi} \ebb_{Z} \big[\|\nabla_w [f(w^*, Z)]\|_*^2\big], \qquad \forall t \geq t_1.
\end{equation}
The condition $\sum_{t=1}^{\infty}\eta_t^2<\infty$ enables us to define a stochastic process $\{\tilde{X}_t\}_t$ by
$$ \tilde{X}_t = D_\Psi(w^*,w_{t+1}) + \frac{1}{\sigma_\Psi} \ebb_{Z} \big[\|\nabla_w [f(w^*, Z)]\|_*^2\big] \sum_{\ell = t+1}^\infty \eta_\ell. $$
By (\ref{localestimateas}), we know that $\ebb_{z_t}[\tilde{X}_{t+1}] \leq \tilde{X}_t$ for $t\geq t_1$. Also, $\tilde{X}_t \geq 0$. So the stochastic process $\{\tilde{X}_t\}_{t \geq t_1}$ is a supermartingale.
Then by the supermartingale convergence theorem, Lemma \ref{lem:super-martingale}, we know that the sequence $\{\tilde{X}_t\}_{t \geq t_1}$ converges to a non-negative random variable $\tilde{X}$ almost surely. According to Fatou's Lemma and the limit $\lim_{t\to\infty} \ebb[D_\Psi(w^*,w_t)] =0$ proved by Proposition \ref{lem:sufficient}, we get
  $$
    \ebb[\tilde{X}]=\ebb\big[\lim_{t\to\infty}D_\Psi(w^*,w_t)\big]\leq\liminf_{t\to\infty}\ebb[D_\Psi(w^*,w_t)]=0.
  $$
But $\tilde{X}$ is a non-negative random variable, so we have $\tilde{X}=0$ almost surely. It follows that $\{D_\Psi(w^*,w_t)\}_{t\in\nbb}$ converges to $0$ almost surely. The proof of Theorem \ref{thm:ae-sufficient} is complete.
\end{proof}

\section{Proving Explicit Results}\label{sec:proof-incremental-convex}

In this section we prove the propositions stated in Section \ref{sec:incremental-convex} on some properties of special mirror maps,
and Theorems \ref{thm:least-square} and \ref{thm:p-norm-convergence} on necessary and sufficient conditions for the convergence, as well as tight convergence rates.

\begin{proof}[Proof of Proposition \ref{prop:least-squares}]
If $\Psi$ is $L_\Psi$-strongly smooth, then the condition in Lemma \ref{lem:co-coercivity} is satisfied with $g=\Psi, L=L_\Psi$ and $\alpha=1$. So by Lemma \ref{lem:co-coercivity}, there holds
$$ \|\nabla\Psi(w)-\nabla\Psi(\tilde{w})\|_*^2 \leq L_\Psi\inn{w-\tilde{w},\nabla\Psi(w)-\nabla\Psi(\tilde{w})}, \qquad \forall w, \tilde{w} \in\wcal. $$
By the Schwarz inequality $\inn{w-\tilde{w},\nabla\Psi(w)-\nabla\Psi(\tilde{w})} \leq \|w-\tilde{w}\|\|\nabla\Psi(w)-\nabla\Psi(\tilde{w})\|_*$, this implies
\begin{equation}\label{gradcont}
  \|\nabla\Psi(w)-\nabla\Psi(\tilde{w})\|_*\leq L_\Psi\|w-\tilde{w}\|,\quad\forall w,\tilde{w}\in\wcal.
\end{equation}
So the function $\nabla\Psi$ is Lipschitz, and hence is continuous everywhere.

Setting $\tilde{w}=0$ in (\ref{gradcont}) also yields
$$
  \|\nabla\Psi(w)\|_*\leq \|\nabla\Psi(0)\|_*+L_\Psi\|w\| \leq \left(\|\nabla\Psi(0)\|_*+L_\Psi\right) (1+\|w\|),\qquad\forall w\in\wcal.
$$
This establishes the incremental conditional \eqref{IncrePsi} at infinity with $C_\Psi=\|\nabla\Psi(0)\|_*+L_\Psi$.

If $F$ is $\sigma_F$-strongly convex, by the identity (\ref{Bregmansum}), we have
$$\inn{w -\tilde{w}, \nabla F (w) -\nabla F (\tilde{w}} =  D_F (w,\tilde{w}) +D_F (\tilde{w}, w) \geq \sigma_F \|w -\tilde{w}\|^2, \qquad \forall w, \tilde{w}\in\wcal. $$
But $D_\Psi (\tilde{w}, w) \leq \frac{L_\Psi}{2} \|w -\tilde{w}\|^2$. So we have
$$ \inn{w -\tilde{w}, \nabla F (w) -\nabla F (\tilde{w}} \geq \sigma_F \|w -\tilde{w}\|^2 \geq \frac{2\sigma_F}{L_\Psi}D_\Psi (\tilde{w}, w), \qquad \forall w, \tilde{w}\in\wcal. $$
Hence \eqref{convexcontrol} is satisfied for a linear convex function $\Omega (u) = \frac{2\sigma_F}{L_\Psi} u$. This proves Proposition \ref{prop:least-squares}.
\end{proof}

For proving Proposition \ref{prop:p-divergence}, we need the following inequalities which follow easily from the elementary inequalities
$$
|a^\beta-b^\beta| \leq |a-b|^\beta, \quad (a+b)^\beta \leq a^\beta+b^\beta\leq 2^{1-\beta}(a+b)^\beta, \qquad \forall a, b\geq0, \beta\in (0,1].
$$

\begin{lemma}\label{lem:elementarybeta}
Let $0< \beta \leq 1$. Then we have
\begin{eqnarray}
&&|\sgn(a)|a|^\beta-\sgn(b)|b|^\beta| \leq 2^{1-\beta}|a-b|^\beta, \qquad \forall a, b\in\rbb, \label{p-mirror-4} \\
&& \big|\|\tilde{w}\|_p^{\beta}-\|w\|_p^{\beta}\big| \leq \big|\|\tilde{w}\|_p-\|w\|_p\big|^{\beta} \leq \|\tilde{w}-w\|_p^{\beta}, \qquad \forall w, \tilde{w}\in\wcal, \label{p-mirror-2a}
\end{eqnarray}
where we denote the sign of $a\in\rbb$ by $\sgn(a)=1$ if $a>0$, $-1$ if $a<0$, and $0$ if $a=0$.
\end{lemma}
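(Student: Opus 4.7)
The proof will be a straightforward application of the elementary scalar inequalities cited just before the lemma, so I will organize it as two short case-splits and a use of the reverse triangle inequality. Since the claim has two parts, I will treat them in order.

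For (\ref{p-mirror-4}), my plan is to split on the signs of $a$ and $b$. If $a,b\geq 0$, then $\sgn(a)|a|^{\beta}=a^{\beta}$ and similarly for $b$, so the desired bound reduces to $|a^{\beta}-b^{\beta}|\leq|a-b|^{\beta}$, which is the first of the quoted elementary inequalities; the factor $2^{1-\beta}\geq 1$ gives slack. The case $a,b\leq 0$ is identical by symmetry after replacing $a,b$ with $-a,-b$. The only substantive case is when $a$ and $b$ have opposite signs: taking (without loss of generality) $a\geq 0$ and $b<0$, the left-hand side becomes $a^{\beta}+|b|^{\beta}$, while $|a-b|=a+|b|$; the inequality then becomes exactly $a^{\beta}+|b|^{\beta}\leq 2^{1-\beta}(a+|b|)^{\beta}$, which is the second quoted elementary inequality applied to the nonnegative numbers $a$ and $|b|$. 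Combining the three cases yields (\ref{p-mirror-4}).

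For (\ref{p-mirror-2a}), the first inequality follows by applying $|a^{\beta}-b^{\beta}|\leq|a-b|^{\beta}$ directly to the nonnegative scalars $a=\|\tilde w\|_p$ and $b=\|w\|_p$. For the second inequality, I will use the reverse triangle inequality $\bigl|\|\tilde w\|_p-\|w\|_p\bigr|\leq\|\tilde w-w\|_p$ and then raise both sides to the power $\beta$, which preserves the inequality because $t\mapsto t^{\beta}$ is monotone nondecreasing on $[0,\infty)$ for $\beta\in(0,1]$.

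No step is expected to be an obstacle: the lemma is essentially a bookkeeping statement that the scalar inequalities recalled above continue to hold for signed arguments (with the mild price of the constant $2^{1-\beta}$) and for $p$-norms (via the reverse triangle inequality). The only mild care needed is the opposite-sign case in (\ref{p-mirror-4}), where the constant $2^{1-\beta}$ is genuinely used and cannot be improved to $1$.
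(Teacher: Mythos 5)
Your proof is correct and takes exactly the route the paper intends: the paper states no explicit proof of Lemma~\ref{lem:elementarybeta}, only remarking that it ``follows easily from the elementary inequalities'' $|a^\beta-b^\beta|\leq|a-b|^\beta$ and $a^\beta+b^\beta\leq 2^{1-\beta}(a+b)^\beta$ for $a,b\geq 0$, and your three-way sign split (same sign reducing to the first inequality with the slack $2^{1-\beta}\geq 1$, opposite signs reducing to the second) together with the reverse triangle inequality and monotonicity of $t\mapsto t^\beta$ is precisely that argument.
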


\begin{proof}[Proof of Proposition \ref{prop:p-divergence}]
Let $p^* =\frac{p}{p-1} >2$ be the dual number of $p$ satisfying $\frac{1}{p} + \frac{1}{p^*} =1$. Then the dual norm $\|\cdot\|_*$ is exactly the $p^*$-norm $\|\cdot\|_{p^*}$, and the gradient of $\Psi_p$ at $w\in\wcal$ equals
\begin{equation}\label{gradient-p-divergence}
    \nabla\Psi_p(w)=\|w\|_p^{2-p} \hat{w},
\end{equation}
where $\hat{w}\in \wcal^*$ is the vector depending on $w$ given by
$$ \hat{w} = \big(\sgn(w(j))|w(j)|^{p-1}\big)_{j=1}^d. $$
It follows that $\nabla\Psi_p$ is continuous everywhere, and by calculating the norm $\big\|\hat{w}\big\|_{p^*}$ directly that
$$
    \|\nabla\Psi_p(w)\|_*=\|w\|_p^{2-p}\big\|\hat{w}\big\|_{p^*}=\|w\|_p^{2-p+\frac{p}{p^*}}=\|w\|_p.
$$
This proves the identity (\ref{nablaPsinorms}) and the incremental condition \eqref{IncrePsi} with $C_{\Psi_p}=1$.

To bound the Bregman distance $D_{\Psi_p}(\tilde{w},w)$, we apply the identity (\ref{Bregmansum}) and find that for any $w, \tilde{w} \in\wcal$,
\begin{equation}\label{intermDPsi}
 D_{\Psi_p}(\tilde{w}, w) \leq D_{\Psi_p}(\tilde{w}, w)+D_{\Psi_p}(w, \tilde{w}) \leq \|\tilde{w}-w\|_p \big\|\nabla\Psi_p(\tilde{w})-\nabla\Psi_p(w)\big\|_{p^*}.
\end{equation}

We use the expression (\ref{gradient-p-divergence}) and write $\nabla\Psi_p(\tilde{w})-\nabla\Psi_p(w)$ as
$$ \nabla\Psi_p(\tilde{w})-\nabla\Psi_p(w) = \|\tilde{w}\|_p^{2-p} \hat{\tilde{w}} - \|w\|_p^{2-p} \hat{w} = \|\tilde{w}\|_p^{2-p} \left(\hat{\tilde{w}} - \hat{w}\right) + \left(\|\tilde{w}\|_p^{2-p} - \|w\|_p^{2-p}\right) \hat{w}. $$
Applying (\ref{p-mirror-4}) to the $j$-th components of $\hat{\tilde{w}} - \hat{w}$ and $\beta =p-1 \in (0, 1)$, we have
$$ \left|\sgn(\tilde{w}(j))|\tilde{w}(j)|^{p-1} - \sgn(w(j))|w(j)|^{p-1}\right| \leq 2^{2-p} \left|\tilde{w}(j) - w(j)\right|^{p-1}, \qquad j=1, \ldots, d. $$
So for the first term, we have
\begin{align}
\left\|\hat{\tilde{w}} - \hat{w}\right\|_{p^*} &\leq \left\{\sum_{j=1}^d 2^{p^*(2-p)} \left|\tilde{w}(j) - w(j)\right|^{p^*(p-1)}\right\}^{\frac{1}{p^*}}\notag\\
 & = 2^{2-p}\left\|\tilde{w} - w\right\|_{p}^{\frac{p}{p^*}} = 2^{2-p} \left\|\tilde{w} - w\right\|_{p}^{p-1}.\label{firsttermnablaPsi}
\end{align}
For the second term, we apply (\ref{p-mirror-2a}) with $\beta =2-p$ and find
$$ \left\|\left(\|\tilde{w}\|_p^{2-p} - \|w\|_p^{2-p}\right) \hat{w}\right\|_{p^*} \leq \|\tilde{w}-w\|_p^{2-p} \left\|\hat{w}\right\|_{p^*} = \|\tilde{w}-w\|_p^{2-p} \left\|w\right\|_p^{p-1}. $$
Applying (\ref{p-mirror-2a}) with $\beta =p-1$ yields
$$ \left\|w\right\|_p^{p-1} \leq \left\|\tilde{w}\right\|_p^{p-1} + \left\|\tilde{w}-w\right\|_p^{p-1}. $$
Hence
$$ \left\|\left(\|\tilde{w}\|_p^{2-p} - \|w\|_p^{2-p}\right) \hat{w}\right\|_{p^*} \leq \left\|\tilde{w}\right\|_p^{p-1} \|\tilde{w}-w\|_p^{2-p} + \|\tilde{w}-w\|_p. $$
Combining this with (\ref{firsttermnablaPsi}) gives
$$ \big\|\nabla\Psi_p(\tilde{w})-\nabla\Psi_p(w)\big\|_{p^*} \leq \left(2 \|\tilde{w}\|_p\right)^{2-p} \left\|\tilde{w} - w\right\|_{p}^{p-1} + \left\|\tilde{w}\right\|_p^{p-1} \|\tilde{w}-w\|_p^{2-p} + \|\tilde{w}-w\|_p. $$
Putting this bound into (\ref{intermDPsi}), we obtain
$$ D_{\Psi_p}(\tilde{w}, w) \leq \left(2 \|\tilde{w}\|_p\right)^{2-p} \left\|\tilde{w} - w\right\|_{p}^{p} + \left\|\tilde{w}\right\|_p^{p-1} \|\tilde{w}-w\|_p^{3-p} + \|\tilde{w}-w\|_p^2. $$
Since $1< 3-p <2$, we have
$$ D_{\Psi_p}(\tilde{w}, w) \leq \left\{\begin{array}{ll} \left(\left(2 \|\tilde{w}\|_p\right)^{2-p} + \left\|\tilde{w}\right\|_p^{p-1}  + 1\right) \|\tilde{w}-w\|_p^2, & \hbox{when} \ \|\tilde{w}-w\|_p \geq 1, \\
\left(\left(2 \|\tilde{w}\|_p\right)^{2-p} + \left\|\tilde{w}\right\|_p^{p-1}  + 1\right) \|\tilde{w}-w\|_p^{\min\{p, 3-p\}}, & \hbox{when} \ \|\tilde{w}-w\|_p < 1. \end{array}\right.$$
Then our desired estimate (\ref{PsipBreg}) for $D_{\Psi_p}(\tilde{w}, w)$ follows.

Let $\tilde{w} \in \wcal$ and denote the constant $C_{\|\tilde{w}\|_p, p} = \left(\left(2 \|\tilde{w}\|_p\right)^{2-p} + \left\|\tilde{w}\right\|_p^{p-1}  + 1\right)^{-1}$. We know from (\ref{PsipBreg})
\begin{equation}\label{DPsilower}
 \|\tilde{w}-w\|_p^2+ \|\tilde{w}-w\|_p^{\min\{p, 3-p\}} \geq C_{\|\tilde{w}\|_p, p} D_{\Psi_p}(\tilde{w}, w).
\end{equation}

When $D_{\Psi_p}(\tilde{w}, w) \geq 1$, we have $\Omega_p \left(D_{\Psi_p}(\tilde{w}, w)\right)
= D_{\Psi_p}(\tilde{w}, w) +\frac{1}{\tau_p} -1 \leq D_{\Psi_p}(\tilde{w}, w)$ and
see from (\ref{DPsilower}) that either
$$ \|\tilde{w}-w\|_p^2 \geq 1  \Longrightarrow \|\tilde{w}-w\|_p^2 \geq \frac{1}{2} \left(\|\tilde{w}-w\|_p^2+ \|\tilde{w}-w\|_p^{\min\{p, 3-p\}}\right) \geq \frac{C_{\|\tilde{w}\|_p, p}}{2} \Omega_p \left(D_{\Psi_p}(\tilde{w}, w)\right) $$
or $\|\tilde{w}-w\|_p^2 < 1$ which implies
$$\|\tilde{w}-w\|_p^{\min\{p, 3-p\}} \geq \frac{C_{\|\tilde{w}\|_p, p}}{2} D_{\Psi_p}(\tilde{w}, w) \geq \frac{C_{\|\tilde{w}\|_p, p}}{2}$$
by our assumption $D_{\Psi_p}(\tilde{w}, w) \geq 1$, and thereby
\begin{align*}
 \|\tilde{w}-w\|_p^2 & = \|\tilde{w}-w\|_p^{\min\{p, 3-p\}} \|\tilde{w}-w\|_p^{2- \min\{p, 3-p\}}\\
 & \geq \left\{\frac{C_{\|\tilde{w}\|_p, p}}{2} D_{\Psi_p}(\tilde{w}, w)\right\} \left(\frac{C_{\|\tilde{w}\|_p, p}}{2}\right)^{\frac{2- \min\{p, 3-p\}}{\min\{p, 3-p\}}}.
\end{align*}
Hence
$$ \|\tilde{w}-w\|_p^2 \geq \min\left\{\frac{C_{\|\tilde{w}\|_p, p}}{2}, \left(\frac{C_{\|\tilde{w}\|_p, p}}{2}\right)^{\tau_p}\right\} \Omega_p \left(D_{\Psi_p}(\tilde{w}, w)\right). $$

When $D_{\Psi_p}(\tilde{w}, w) < 1$, we have $\Omega_p \left(D_{\Psi_p}(\tilde{w}, w)\right) = \frac{1}{\tau_p} \left(D_{\Psi_p}(\tilde{w}, w)\right)^{\tau_p}$. Again,
from (\ref{DPsilower}), we have either
\begin{align*}
\|\tilde{w}-w\|_p^2 < 1  &\Longrightarrow \|\tilde{w}-w\|_p^{\min\{p, 3-p\}} \geq \frac{C_{\|\tilde{w}\|_p, p}}{2} D_{\Psi_p}(\tilde{w}, w)\\
&\Longrightarrow \|\tilde{w}-w\|_p^{2} \geq \tau_p \left(\frac{C_{\|\tilde{w}\|_p, p}}{2}\right)^{\tau_p} \Omega_p \left(D_{\Psi_p}(\tilde{w}, w)\right)
\end{align*}
or $\|\tilde{w}-w\|_p^2 \geq 1$ which implies
$$\|\tilde{w}-w\|_p^{2} \geq \frac{C_{\|\tilde{w}\|_p, p}}{2} D_{\Psi_p}(\tilde{w}, w) \geq \frac{\tau_p C_{\|\tilde{w}\|_p, p}}{2} \Omega_p \left(D_{\Psi_p}(\tilde{w}, w)\right) $$
by our assumption $D_{\Psi_p}(\tilde{w}, w) < 1$. Therefore,
$$ \|\tilde{w}-w\|_p^2 \geq \min\left\{\tau_p \frac{C_{\|\tilde{w}\|_p, p}}{2}, \tau_p \left(\frac{C_{\|\tilde{w}\|_p, p}}{2}\right)^{\tau_p}\right\} \Omega_p \left(D_{\Psi_p}(\tilde{w}, w)\right). $$
Combining the above two cases and noting $\tau_p >1$, we see (\ref{Psipcondition}) holds.

The last statement follows immediately from the identity (\ref{Bregmansum}), the definition of $\sigma_F$-strong convexity, and (\ref{Psipcondition}). The proof is complete.
\end{proof}

\begin{proof}[Proof of Theorem \ref{thm:least-square}] Denote $\sup_{x\in\xcal}\|x\|_* =R>0$.
  The Hessian matrix of $f(\cdot, z) =\frac{1}{2}\left(\inn{\cdot, x} -y\right)^2$ for every $z$ is $\nabla_w^2[f(w,z)]=xx^\top$, from which we know that $f(\cdot,z)$ and $F$ are $R^2$-strongly smooth. Moreover, we have
  $$ \nabla F (w) = \ebb_Z [X X^\top w - X Y]  =\ccal_X w - \ebb_Z [XY].$$
  So we know from the positive definiteness of the covariance matrix $\ccal_X$ that the only minimizer $w^*$ is $w^* =w_\rho$.
  For any $w,\tilde{w}\in\wcal$, there holds
  \begin{align*}
     D_F(w,\tilde{w}) 
     & = \frac{1}{2}\ebb_Z\big[\big(\inn{w,X}-\inn{\tilde{w},X}+\inn{\tilde{w},X}-Y\big)^2\big]-\frac{1}{2}\ebb_Z\big[\big(\inn{\tilde{w},X}-Y\big)^2\big] - \inn{w-\tilde{w},\nabla F(\tilde{w})} \\
     & = \frac{1}{2}\ebb_Z\big[\big(\inn{w-\tilde{w},X}\big)^2\big]+\ebb_Z\big[\big\langle w-\tilde{w},\inn{\tilde{w},X}X-XY\big\rangle\big]-\inn{w-\tilde{w},\nabla F(\tilde{w})}\\
     & = \frac{1}{2}(w-\tilde{w})^\top\ccal_X(w-\tilde{w})\geq \frac{\lambda_{min}}{2} \|w-\tilde{w}\|_2^2,
  \end{align*}
where $\lambda_{min}>0$ is the smallest eigenvalue of the positive definite covariance matrix $\ccal_X$. But the norms $\|\cdot\|_2$ and $\|\cdot\|$ on $\rbb^d$ are equivalent. So there exist two positive numbers $b_1 \leq b_2$ such that $b_1 \|w\|^2 \leq \|w\|_2^2 \leq b_2 \|w\|^2$ for $w\in \rbb^d$. It follows that
$$D_F(w,\tilde{w}) \geq \frac{\lambda_{min} b_1}{2} \|w-\tilde{w}\|^2, \qquad \forall w,\tilde{w}\in \wcal.  $$
This verifies the $\lambda_{min} b_1$-strong convexity of $F$. So by Propositions \ref{prop:least-squares} and \ref{prop:p-divergence}, the conditions of Theorems \ref{thm:nece-suff}, \ref{thm:nece-suff-without-variance} and \ref{thm:ae-sufficient} are satisfied.
Moreover,
$$ \ebb_Z \left[\|\nabla_w [f(w, Z)]\|_*\right] = \ebb_Z \left[\|(Y -\inn{w,X}) X\|_*\right] = \ebb_Z \left[\left|Y -\inn{w,X}\right| \|X\|_*\right].  $$
So the assumption $\inf_{w\in\wcal} \ebb_Z \left[\|\nabla_w [f(w, Z)]\|_*\right]>0$ in Theorem \ref{thm:nece-suff}
is the same as the assumption $\inf_{w\in\wcal} \ebb_Z \left[\left|Y -\inn{w,X}\right| \|X\|_*\right]>0$ in Theorem \ref{thm:least-square},
and from Theorem \ref{thm:nece-suff} we know that if we replace $\|w_\rho - w_t\|^2$ by $D_{\Psi}(w_\rho, w_t)$, our statement (a) holds true and the constant $\sigma$ can be taken as $\sigma = \frac{2\lambda_{min} b_1}{L_\Psi}$ in the case of an $L_\Psi$-strongly smooth mirror map $\Psi$.
To get the statement for the norm square $\|w_\rho - w_t\|^2$, we notice first from the strong convexity of $\Psi$ that $\frac{\sigma_\Psi}{2} \|w_\rho - w_t\|^2 \leq D_{\Psi}(w_\rho, w_t)$.

When $\Psi$ is strongly smooth satisfying $D_{\Psi}(w_\rho, w_t) \leq \frac{L_\Psi}{2} \|w_\rho - w_t\|^2$, we know that our statement (a) holds true.
When $\Psi = \Psi_p$ for some $1< p \leq 2$, we use (\ref{Psipcondition}) with $\tilde{w}=w_\rho$ and Jensen's inequality to get from the convexity of $\Omega$
$$  \ebb_{z_1, \ldots, z_{t-1}} [\|w_\rho -w_t\|^2] \geq B'_{p} \Omega_p \left(\ebb_{z_1, \ldots, z_{t-1}}[D_{\Psi_p}(w_\rho, w_t)]\right), $$
where $B'_{p}$ is a constant depending on $p, \|w_\rho\|,$ and a constant $c_p$ such that $c_p \|w\|_p\leq \|w\|$ holds for every $w\in \wcal$. Combining this relation with the explicit formula (\ref{Omegap}) for $\Omega_p$, we know that $\lim_{t\to\infty} \ebb_{z_1, \ldots, z_{t-1}} [\|w_\rho -w_t\|^2]=0$ implies $\lim_{t\to\infty}\ebb_{z_1, \ldots, z_{t-1}}[D_{\Psi_p}(w_\rho, w_t)]=0$. Hence our statement (a) also holds true for $\Psi = \Psi_p$.

Note that the assumption $\ebb_Z \left[\|\nabla_w [f(w^*, Z)]\|_*\right]=0$ in our statement (b) of Theorem \ref{thm:nece-suff-without-variance}
is the same as the the assumption $\ebb_Z \left[\left|Y -\inn{w_\rho,X}\right| \|X\|_*\right]=0$ in Theorem \ref{thm:least-square}.
So our statement (b) can be proved from Theorem \ref{thm:nece-suff-without-variance}
by the same argument for dealing with the norm square $\|w_\rho - w_t\|^2$ from $D_{\Psi}(w_\rho, w_t)$ as we did for our statement (a).

Our statement (c) follows from Theorem \ref{thm:ae-sufficient} and the strong convexity of $\Psi$.
The proof of Theorem \ref{thm:least-square} is complete.
\end{proof}

\begin{proof}[Proof of Theorem \ref{thm:p-norm-convergence}]
Recall that for the regularizer $r$ given by $r(w) =\lambda\|w\|_2^2$, there holds $D_{r}(\tilde{w}, w)=\lambda \|\tilde{w}-w\|_2^2$ for $\tilde{w}, w\in\wcal$.
So we know that $F$ is $2\lambda$-strongly convex for every $z\in \zcal$.

For the Bregman distance induced by the loss function
$$D_{\phi(\inn{\cdot, x}, y)}(\tilde{w}, w)=\phi(\inn{\tilde{w},x}, y) - \phi(\inn{w,x}, y) -\inn{\tilde{w}-w, \phi'(\inn{w,x}, y) x}, $$
we apply the mean value theorem to find
$$ \phi(\inn{\tilde{w},x}, y) - \phi(\inn{w,x}, y) =\phi'(\xi, y) \left(\inn{\tilde{w},x} - \inn{w,x}\right) = \inn{\tilde{w}-w, \phi'(\xi, y) x}, $$
where $\xi$ is a number between $\inn{\tilde{w},x}$ and $\inn{w,x}$. We can write
$$\xi = (1-\theta) \inn{\tilde{w},x} + \theta \inn{w,x} = \inn{(1-\theta) \tilde{w}+ \theta w,x} $$
for some $\theta \in (0, 1)$. It follows that
$$ D_{\phi(\inn{\cdot, x}, y)}(\tilde{w}, w)=\inn{\tilde{w}-w, \left(\phi'(\inn{(1-\theta) \tilde{w}+ \theta w,x}, y) - \phi'(\inn{w,x}, y)\right) x} $$
and
$$ D_{\phi(\inn{\cdot, x}, y)}(\tilde{w}, w) \leq \|\tilde{w}-w\| \|x\|_* \left|\phi'(\inn{(1-\theta) \tilde{w}+ \theta w,x}, y) - \phi'(\inn{w,x}, y)\right|. $$
Then we apply the Lipschitz condition (\ref{Lipphi}) and obtain
$$ D_{\phi(\inn{\cdot, x}, y)}(\tilde{w}, w) \leq \|\tilde{w}-w\| \|x\|_* \ell_\phi \left|\inn{(1-\theta) \tilde{w}+ \theta w,x} -\inn{w,x}\right|
\leq \|\tilde{w}-w\|^2 \|x\|_*^2 \ell_\phi. $$
If we denote $\sup_{x\in\xcal}\|x\|_* =R>0$, then we have
$$ D_{\phi(\inn{\cdot, x}, y)}(\tilde{w}, w) \leq \ell_\phi R^2 \|\tilde{w}-w\|^2, \qquad \forall \tilde{w}, w \in \wcal. $$
Therefore, $f(\cdot, z)$ is $2(\ell_\phi R^2 + \lambda)$-strongly smooth for every $z\in \zcal$, and the statements on the strong smoothness of $F$ follows.
Our desired statement on the convergence follows from Theorems \ref{thm:nece-suff}, \ref{thm:nece-suff-without-variance} and \ref{thm:ae-sufficient}, as we have done in the proof of Theorem \ref{thm:least-square}. The proof of Theorem \ref{thm:p-norm-convergence} is complete.
\end{proof}

\section*{Appendix}

This appendix provides the proofs of the co-coercivity of gradients stated in Lemma \ref{lem:co-coercivity} and Proposition \ref{prop:p-divergence-nonsmooth} together with a remark on variances involving stochastic gradients.

To prove Lemma \ref{lem:co-coercivity}, we need the following lemma on the Fenchel-conjugate of some norm power functions which is of independent interest.

\begin{lemma}\label{lem:fenchel-conjugate-p-norm-calculation}
  Let $\kappa>1$. The Fenchel-conjugate of $f=\frac{1}{\kappa}\|\cdot\|^\kappa$ is given by $f^*(v)=\frac{\kappa-1}{\kappa}\|v\|_*^{\frac{\kappa}{\kappa-1}}$.
\end{lemma}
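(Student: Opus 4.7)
The plan is to reduce the Fenchel-conjugate, which is an optimization over the whole space $\wcal$, to a one-dimensional optimization over the radius $r=\|w\|$ by factoring $w=ru$ with $\|u\|=1$ and invoking the defining property of the dual norm. This is essentially the same trick that was already used implicitly via the Cauchy--Schwarz/H\"older bound $\inn{w,v}\leq \|w\|\,\|v\|_*$ with equality attained by the direction of $w$.

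First I would rewrite, for any $v\in\rbb^d$,
\begin{equation*}
f^*(v)=\sup_{w\in\wcal}\Bigl[\inn{w,v}-\tfrac{1}{\kappa}\|w\|^\kappa\Bigr]
=\sup_{r\geq 0}\sup_{\|u\|=1}\Bigl[r\inn{u,v}-\tfrac{r^\kappa}{\kappa}\Bigr]
=\sup_{r\geq 0}\Bigl[r\|v\|_*-\tfrac{r^\kappa}{\kappa}\Bigr],
\end{equation*}
where the last equality uses $\|v\|_*=\sup_{\|u\|=1}\inn{u,v}$ and the fact that the inner supremum in $u$ is attained. The trivial case $w=0$ is covered by allowing $r=0$.

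Next I would solve the remaining univariate problem by differentiating $\varphi(r):=r\|v\|_*-r^\kappa/\kappa$ in $r\geq 0$. Since $\kappa>1$, $\varphi$ is concave, so setting $\varphi'(r)=\|v\|_*-r^{\kappa-1}=0$ yields the unique maximizer $r^\star=\|v\|_*^{1/(\kappa-1)}$. Substituting this back gives
\begin{equation*}
f^*(v)=\varphi(r^\star)=\|v\|_*^{\kappa/(\kappa-1)}-\tfrac{1}{\kappa}\|v\|_*^{\kappa/(\kappa-1)}=\tfrac{\kappa-1}{\kappa}\|v\|_*^{\kappa/(\kappa-1)},
\end{equation*}
and the formula also holds trivially when $v=0$ (both sides vanish).

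There is no serious obstacle: the only conceptual step is recognizing that the directional supremum collapses exactly to the dual norm, after which everything reduces to elementary calculus on $[0,\infty)$. The mild care needed is to note that $\kappa>1$ guarantees concavity of $\varphi$ on $[0,\infty)$, so the critical point is the global maximum rather than a local one.
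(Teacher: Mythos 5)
Your proof is correct, and it takes a somewhat different route from the paper's. The paper argues in two steps: it bounds $f^*(v)$ from above by applying H\"older's inequality $\inn{w,v}\leq\|w\|\,\|v\|_*$ and then Young's inequality $ab\leq\tfrac{1}{\kappa}a^\kappa+\tfrac{\kappa-1}{\kappa}b^{\kappa/(\kappa-1)}$, and then matches this bound from below by picking a norming vector $w$ with $\|w\|=1$, $\inn{w,v}=\|v\|_*$, and evaluating the supremand at $\|v\|_*^{1/(\kappa-1)}w$. You instead perform the radial decomposition $w=ru$ once and for all, collapse the angular supremum to $r\|v\|_*$ by the very definition of the dual norm (attainment of that supremum is not actually needed for the equality, though it does hold on the compact unit sphere of $\rbb^d$), and then finish the remaining one-dimensional problem by elementary calculus using concavity of $r\mapsto r\|v\|_*-r^\kappa/\kappa$ for $\kappa>1$. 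Your approach is a single chain of equalities and is arguably more transparent; the paper's two-sided argument makes the role of Young's inequality explicit, which is thematically natural in a convex-duality paper but requires one to remember the right form of that inequality. Both are fully rigorous here.
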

\begin{proof}
  According to Young's inequality $ab \leq \frac{1}{\kappa} a^{\kappa} + \frac{\kappa -1}{\kappa} a^{\frac{\kappa}{\kappa -1}}$, we have for $v\in \wcal^*$,
  \begin{align*}
    f^*(v) &= \sup_{w\in\wcal}\big[\inn{w,v}-\frac{1}{\kappa}\|w\|^\kappa\big] \leq \sup_{w\in\wcal}\big[\|w\|\|v\|_*-\frac{1}{\kappa}\|w\|^\kappa\big]\\
    & \leq \sup_{w\in\wcal}\Big[\frac{1}{\kappa}\|w\|^\kappa+\frac{\kappa-1}{\kappa} \|v\|_*^{\frac{\kappa}{\kappa-1}}-\frac{1}{\kappa}\|w\|^\kappa\Big]\\
    &= \frac{\kappa-1}{\kappa}\|v\|_*^{\frac{\kappa}{\kappa-1}}.
  \end{align*}
Since $\wcal = \wcal^{**}$, for $v\in \wcal^*$, there exists some $w\in\wcal= \wcal^{**}$ such that $\inn{w,v}=\|v\|_*$ and $\|w\|=1$.
Taking the vector $\|v\|_*^{\frac{1}{\kappa-1}} w$ in the definition of $f^*$ gives
$$
    f^*(v)\geq\inn{\|v\|_*^{\frac{1}{\kappa-1}} w,v}-\frac{1}{\kappa}\|w\|^\kappa\|v\|_*^{\frac{\kappa}{\kappa-1}}=\|v\|_*^{\frac{1}{\kappa-1}}\|v\|_*-\frac{1}{\kappa}\|v\|_*^{\frac{\kappa}{\kappa-1}}=\frac{\kappa-1}{\kappa}\|v\|_*^{\frac{\kappa}{\kappa-1}}.
$$
  Combining the above two inequalities yields the stated result.
\end{proof}
\begin{proof}[Proof of Lemma \ref{lem:co-coercivity}]
  We use some ideas from \citep{ying2015unregularized}.
  Fix a $w\in\wcal$. Define $h:\wcal\to\rbb$ by $h(\bar{w})=g(\bar{w})-\inn{\bar{w},\nabla g(w)}$. It is clear that $h$ satisfies the condition
  $$
    D_h(\bar{w},\tilde{w})=D_g(\bar{w},\tilde{w})\leq \frac{L}{1+\alpha}\|\bar{w}-\tilde{w}\|^{1+\alpha},\quad\forall \bar{w},\tilde{w}\in\wcal.
  $$
  Since $h$ is convex and $\nabla h(w)=0$, we know that $h$ attains its minimum at $w$. So for $\tilde{w} \in \wcal$, we have
  \begin{align*}
    h(w) & = \min_{\bar{w}\in\wcal}h(\bar{w})\leq \min_{\bar{w}\in\wcal}\Big[h(\tilde{w})+\inn{\bar{w}-\tilde{w},\nabla h(\tilde{w})}+\frac{L}{1+\alpha}\|\tilde{w}-\bar{w}\|^{\alpha+1}\Big] \\
     & = h(\tilde{w})-L\max_{\bar{w}\in\wcal}\Big[\inn{\tilde{w}-\bar{w},L^{-1}\nabla h(\tilde{w})}-\frac{1}{1+\alpha}\|\tilde{w}-\bar{w}\|^{\alpha+1}\Big] \\
     & = h(\tilde{w})-L\max_{\bar{w}\in\wcal}\Big[\inn{\bar{w},L^{-1}\nabla h(\tilde{w})}-\frac{1}{1+\alpha}\|\bar{w}\|^{\alpha+1}\Big].
  \end{align*}
  According to the definition of Fenchel-conjugate and Lemma \ref{lem:fenchel-conjugate-p-norm-calculation} with $\kappa=\alpha+1$, we know
  \begin{align*}
  \max_{\bar{w}\in\wcal}\Big[\inn{\bar{w},L^{-1}\nabla h(\tilde{w})}-\frac{1}{1+\alpha}\|\bar{w}\|^{\alpha+1}\Big]
  &=\Big(\frac{1}{1+\alpha}\|\cdot\|^{\alpha+1}\Big)^*(L^{-1}\nabla h(\tilde{w}))\\
  &=\frac{\alpha}{1+\alpha}\big\|L^{-1}\nabla h(\tilde{w})\big\|_*^{\frac{1+\alpha}{\alpha}}.
  \end{align*}
  Combining the above discussions implies
  $$
    h(w)\leq h(\tilde{w})-\frac{L^{-\frac{1}{\alpha}}\alpha}{1+\alpha}\big\|\nabla h(\tilde{w})\big\|_*^{\frac{1+\alpha}{\alpha}},\qquad\forall \tilde{w}\in\wcal.
  $$
  The above inequality can be equivalently written as
  $$
   g(\tilde{w})\geq g(w)+\inn{\tilde{w}-w,\nabla g(w)}+\frac{L^{-\frac{1}{\alpha}}\alpha}{1+\alpha}\|\nabla g(\tilde{w})-\nabla g(w)\|_*^{\frac{1+\alpha}{\alpha}}.
  $$
Switching $w$ and $\tilde{w}$ also shows
  $$
   g(w)\geq g(\tilde{w})+\inn{w-\tilde{w},\nabla g(\tilde{w})}+\frac{L^{-\frac{1}{\alpha}}\alpha}{1+\alpha}\|\nabla g(w)-\nabla g(\tilde{w})\|_*^{\frac{1+\alpha}{\alpha}}.
  $$
  Summing up the above two inequalities gives the stated inequality \eqref{sufficient-condition-b} and completes the proof.
\end{proof}

Now we turn to the proof of Proposition \ref{prop:p-divergence-nonsmooth}.

\begin{proof}[Proof of Proposition \ref{prop:p-divergence-nonsmooth}]
Recall the dual number $p^* =\frac{p}{p-1} >2$ of $p$ given in the proof of Proposition \ref{prop:p-divergence} satisfying $\frac{1}{p} + \frac{1}{p^*} =1$. Take the norm $\|\cdot\|=\|\cdot\|_p$.

Suppose to the contrary that $\Psi_p$ is $L$-strong smooth for some $L>0$. Then we know from the inequality (\ref{gradcont}) derived in the proof of Proposition \ref{prop:least-squares} that
\begin{equation}\label{gradcontcontra}
  \|\nabla\Psi_p (w)-\nabla\Psi_p (\tilde{w})\|_*\leq L \|w-\tilde{w}\|,\qquad\forall w,\tilde{w}\in\wcal.
\end{equation}

Let $a\geq1$ and define two vectors $w,\tilde{w}\in\rbb^d$ as
  $$
  w=\left\{\begin{array}{ll} (a+1,a-1,\ldots,a+1,a-1), & \hbox{if $d$ is even}, \\
  (a+1,a-1,\ldots,a+1,a-1, a), & \hbox{if $d$ is odd}, \end{array}\right. $$
  and
  $$ \tilde{w}=\left\{\begin{array}{ll} (a-1,a+1,\ldots,a-1,a+1), & \hbox{if $d$ is even}, \\
  (a-1,a+1,\ldots,a-1,a+1, a), & \hbox{if $d$ is odd}. \end{array}\right.
  $$
  By the elementary inequality $(a+1)^p+(a-1)^p\geq 2a^p$, we find
  $$
    \|w\|_p=\|\tilde{w}\|_p=\left\{\begin{array}{ll} \left[\frac{d}{2}(a+1)^p+\frac{d}{2}(a-1)^p\right]^{1\over p}\geq d^{1\over p} a, & \hbox{if $d$ is even}, \\
  \left[\frac{d-1}{2}(a+1)^p+\frac{d-1}{2}(a-1)^p + a^p\right]^{1\over p} \geq d^{1\over p}a, & \hbox{if $d$ is odd}.  \end{array}\right.
  $$
Combining this with the expression of $\nabla\Psi_p$ given in \eqref{gradient-p-divergence} yields
\begin{align*}
\|\nabla\Psi_p(w)-\nabla\Psi_p(\tilde{w})\|_* &= \|w\|_p^{2-p}\big\|\big(|w(j)|^{p-1}-|\tilde{w}(j)|^{p-1}\big)_{j=1}^d\big\|_* \\
&\geq \|w\|_p^{2-p} [(a+1)^{p-1}-(a-1)^{p-1}] (d-1)^{\frac{1}{p^*}} \\
     & \geq (d-1)^{\frac{1}{p}}a^{2-p}[(a+1)^{p-1}-(a-1)^{p-1}].
\end{align*}
But
$$ \|w-\tilde{w}\| =\left\{\begin{array}{ll}  2 d^{1/p}, & \hbox{if $d$ is even}, \\
 2 (d-1)^{1/p} < 2 d^{1/p}, & \hbox{if $d$ is odd}. \end{array}\right. $$
 It follows that
$$ \|\nabla\Psi_p(w)-\nabla\Psi_p(\tilde{w})\|_* \geq \frac{1}{2} \left(\frac{d-1}{d}\right)^{\frac{1}{p}} a^{2-p}[(a+1)^{p-1}-(a-1)^{p-1}]\|w-\tilde{w}\|.
$$
Since $d\geq 2$, we have $\frac{d-1}{d} \geq \frac{1}{2}$. Therefore we apply the inequality (\ref{gradcontcontra}) to obtain
$$ L \|w-\tilde{w}\| \geq \frac{1}{4} a^{2-p}[(a+1)^{p-1}-(a-1)^{p-1}]\|w-\tilde{w}\|. $$
This is a contradiction to the limit $\lim_{a\to\infty}a^{2-p}[(a+1)^{p-1}-(a-1)^{p-1}]=\infty$. So $\Psi_p$ is not strong smooth.  The proof of Proposition \ref{prop:p-divergence-nonsmooth} is complete.
\end{proof}

At the end, we give the following remark on the conditions on the variances.

\begin{proposition}\label{prop:variance}
    If $F$ is Fr\'{e}chet differentiable, then the following two statements hold.
  \begin{enumerate}[(a)]
    \item If there exists a $w^*\in\wcal$ with $\ebb_Z[\|\nabla_w[f(w^*,Z)]\|_*]=0$, then we have $\ebb_Z[\|\nabla_w[f(w^*,Z)]-\nabla F(w^*)\|_*^2]=0$.
    \item If $\inf_{w\in\wcal} \ebb_Z[\|\nabla_w[f(w,Z)]\|_*]>0$, then we have $\ebb_Z[\|\nabla_w[f(w^*,Z)]-\nabla F(w^*)\|_*^2]>0$ for any minimizer $w^*$ of $F$.
  \end{enumerate}
\end{proposition}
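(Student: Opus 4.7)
Both parts reduce to essentially one-line observations once we recognize that the assumed Fr\'{e}chet differentiability of $F$ lets us exchange gradient and expectation, i.e.\ $\nabla F(w)=\ebb_Z[\nabla_w f(w,Z)]$ (this is already used implicitly throughout the paper, since $F(w)=\ebb_Z[f(w,Z)]$ and $\nabla F$ appears in (\ref{online-mirror-descent}) computations). I will carry this identity as the only background fact.

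For part (a), the plan is to note that $\|\nabla_w[f(w^*,Z)]\|_*$ is a nonnegative random variable. The hypothesis $\ebb_Z[\|\nabla_w[f(w^*,Z)]\|_*]=0$ then forces $\|\nabla_w[f(w^*,Z)]\|_*=0$ almost surely, i.e.\ $\nabla_w[f(w^*,Z)]=0$ almost surely. Taking expectations and using the exchange identity gives $\nabla F(w^*)=\ebb_Z[\nabla_w f(w^*,Z)]=0$, so $\nabla_w[f(w^*,Z)]-\nabla F(w^*)=0$ almost surely and the stated variance vanishes.

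For part (b), the plan is first to use that $w^*$ minimizes the differentiable function $F$, hence $\nabla F(w^*)=0$, which reduces the quantity of interest to $\ebb_Z[\|\nabla_w[f(w^*,Z)]\|_*^2]$. Then I apply the Cauchy--Schwarz (equivalently Jensen's) inequality to obtain
\begin{equation*}
\ebb_Z[\|\nabla_w[f(w^*,Z)]\|_*^2]\;\geq\;\bigl(\ebb_Z[\|\nabla_w[f(w^*,Z)]\|_*]\bigr)^2.
\end{equation*}
Finally, bounding the right-hand side below by $\bigl(\inf_{w\in\wcal}\ebb_Z[\|\nabla_w f(w,Z)\|_*]\bigr)^2$ and invoking the hypothesis that this infimum is strictly positive yields the strict positivity claimed in (b).

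Neither part involves a substantive obstacle; the only subtlety is the interchange $\nabla F(w)=\ebb_Z[\nabla_w f(w,Z)]$, which requires standard dominated-convergence-type integrability of $\nabla_w f(\cdot,Z)$ near $w^*$ and is consistent with the assumption of Fr\'{e}chet differentiability of $F$ stated in the proposition. I would therefore state this interchange at the outset of the proof and then execute the two short deductions above.
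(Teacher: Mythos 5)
Your proof is correct and follows essentially the same route as the paper: for (a), the almost-sure vanishing of $\nabla_w f(w^*,Z)$ gives $\nabla F(w^*)=0$ and hence zero variance; for (b), the optimality condition $\nabla F(w^*)=0$ combined with the Cauchy--Schwarz inequality and the positive infimum yields strict positivity. The only cosmetic difference is that you make the interchange $\nabla F(w)=\ebb_Z[\nabla_w f(w,Z)]$ explicit, which the paper leaves tacit.
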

\begin{proof}
For the statement (a), the condition $\ebb_Z[\|\nabla_w[f(w^*,Z)]\|_*]=0$ amounts to saying that $\nabla_w[f(w^*,Z)]=0$ holds almost surely, from which it follows that $\nabla F(w^*)=0$ and therefore $\ebb_Z[\|\nabla_w[f(w^*,Z)]-\nabla F(w^*)\|_*^2]=0$.

The statement (b) follows from the optimality condition $\nabla F(w^*)=0$ and the Schwarz inequality $\ebb_Z[\|\nabla_w[f(w^*,Z)]\|_*] \leq \left\{\ebb_Z[\|\nabla_w[f(w^*,Z)]\|_*^2]\right\}^{1/2}$.
\end{proof}

\setlength{\bibsep}{0.03cm}
\bibliographystyle{abbrvnat}
\small

\end{document}